\documentclass[11pt,letterpaper, logo]{mystyle}

\usepackage[utf8]{inputenc} % allow utf-8 input
\usepackage[T1]{fontenc}    % use 8-bit T1 fonts
\usepackage{hyperref}       % hyperlinks
\usepackage{url}            % simple URL typesetting
\usepackage{booktabs}       % professional-quality tables
\usepackage{amsfonts}       % blackboard math symbols
\usepackage{nicefrac}       % compact symbols for 1/2, etc.
\usepackage{microtype}      % microtypography
\usepackage{lipsum}
\usepackage{graphicx}
\usepackage{enumitem}
\usepackage[numbers]{natbib}
\usepackage{fontawesome5}
\usepackage{amsmath}
\usepackage{twemojis}
\usepackage{bxcoloremoji}
\usepackage[table]{xcolor}
\usepackage{multirow}
\usepackage{makecell}

\usepackage{subcaption}
\usepackage{wrapfig}
\usepackage{algorithm}
\usepackage{algorithmic}

\usepackage{booktabs,tabularx}

\newtheorem{theorem}{Theorem}
\newtheorem{assumption}[theorem]{Assumption}
\newtheorem{definition}[theorem]{Definition}
\newtheorem{lemma}[theorem]{Lemma}

\newtheorem{proposition}[theorem]{Proposition}

\newtcbox{\agentchip}{on line, arc=2pt, colback=blue!8, colframe=blue!25,
  boxsep=0.7pt, left=2pt, right=2pt, top=0.2ex, bottom=0.2ex}

\definecolor{AgentIndigo}{HTML}{3F51B5}
\definecolor{AgentIndigoLight}{HTML}{E8EAF6}

\tcbset{
  agentdef/.style={
    colback=AgentTealLight,
    colframe=AgentTeal,
    colbacktitle=AgentTeal!20!white,
    coltitle=black,
    boxrule=0.9pt, arc=2mm,
    left=3mm, right=3mm, top=2mm, bottom=2mm,
    fonttitle=\bfseries,
    title=Definition of Agentic Reasoning
  },
  agentscope/.style={
    colback=AgentIndigoLight,
    colframe=AgentIndigo,
    colbacktitle=AgentIndigo!20!white,
    coltitle=AgentIndigo,
    boxrule=0.9pt, arc=2mm,
    left=3mm, right=3mm, top=2mm, bottom=2mm,
    fonttitle=\bfseries,
    title=Survey Scope
  },
  agentcontrib/.style={
    colback=AgentAmberLight,
    colframe=AgentAmber,
    colbacktitle=AgentAmber!20!white,
    coltitle=black,
    boxrule=0.9pt, arc=2mm,
    left=3mm, right=3mm, top=2mm, bottom=2mm,
    fonttitle=\bfseries,
    title=Contributions
  },
   agentstruct/.style={
      colback=LARGBlue!12!white,
      colframe=LARGBlue,
      colbacktitle=LARGBlue!20!white,
      coltitle=LARGBlue,
    boxrule=0.9pt, arc=2mm,
    left=3mm, right=3mm, top=2mm, bottom=2mm,
    fonttitle=\bfseries,
    title=Survey Structure
  }
}

\newtcolorbox{takeawaybox}[1]{
  colback=white!98!black,
  colframe=white!86!black,
  title={\textcolor{black}{#1}},
  boxrule=0.8pt,
  arc=2pt,
  left=6pt,
  right=6pt,
  top=0pt,
  bottom=0pt,
  before skip=5pt,
}

\newtcolorbox{definitionbox}{
  colback=gray!10,
  colframe=black!70,
  boxrule=0.5pt,
  arc=4pt,
  left=6pt,
  right=6pt,
  top=6pt,
  bottom=6pt
}

\graphicspath{ {./images/} }

\setlist[itemize]{leftmargin=12pt}

\runningtitle{Heterogeneous Scientific Foundation Model Collaboration}

\title{
\textbf{Heterogeneous Scientific Foundation Model Collaboration}\\
\vspace{0.3em}
}

\date{\vspace{-3ex}}

\begin{document}

\author{
    \normalfont 
    \vspace{-3mm}
    \begin{minipage}{0.85\linewidth}
    \centering
    \textbf{
    Zihao Li, \quad Jiaru Zou, \quad Feihao Fang, \quad Xuying Ning, \quad
    Mengting Ai, \quad Tianxin Wei, \quad Sirui Chen, \quad Xiyuan Yang, \quad
    Jingrui He
    }
    \end{minipage}
   \\
   \vspace{6pt}

   \raisebox{1.0ex}{\includegraphics[height=1.1ex]{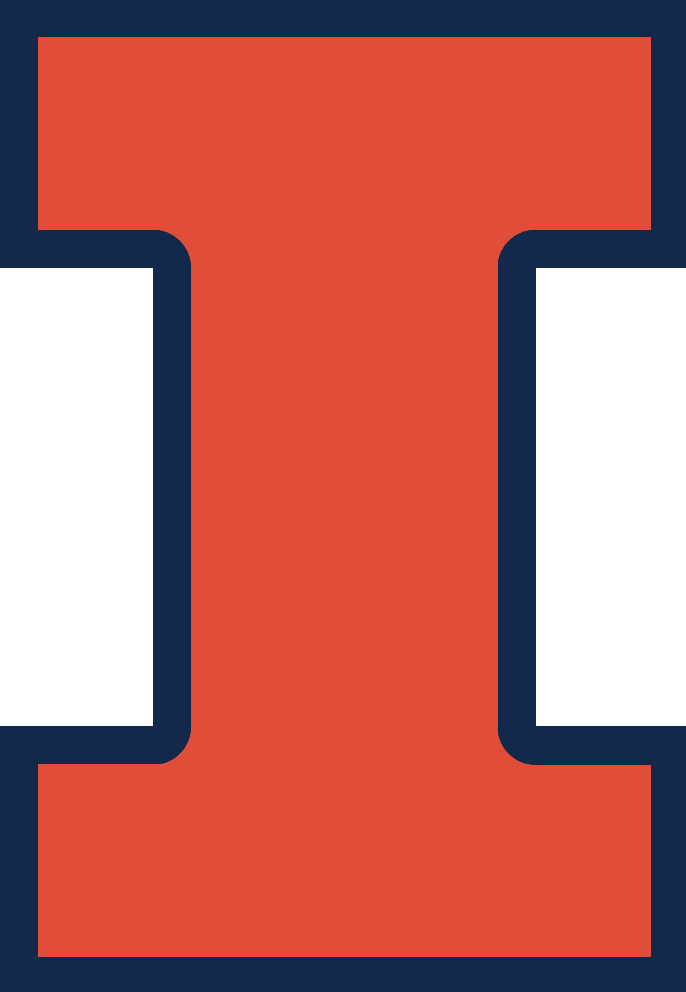}}\,University of Illinois Urbana-Champaign  \\

   \faGithub~\textbf{Code}: \href{https://github.com/Violet24K/Eywa}{https://github.com/Violet24K/Eywa} 
   \quad
   {\href{https://www.zihao.website/eywa.github.io/}{ [\faLink~Project Page]}}
   \vspace{-3mm}
}

\begin{abstract}

\textbf{\large Abstract:} 
Agentic large language model systems have demonstrated strong capabilities.
However, their reliance on language as the universal interface fundamentally limits their applicability to many real-world problems, especially in scientific domains where 
domain-specific foundation models have been developed to address specialized tasks beyond natural language. 
In this work, we introduce \textit{Eywa}, a heterogeneous agentic framework designed to extend language-centric systems to a broader class of scientific foundation models. The key idea of Eywa is to augment domain-specific foundation models with a language-model-based reasoning interface, enabling language models to guide inference over non-linguistic data modalities. This design allows predictive foundation models, which are typically optimized for specialized data and tasks, to participate in higher-level reasoning and decision-making processes within agentic systems.
Eywa can serve as a drop-in replacement for a single-agent pipeline (\textit{EywaAgent}) or be integrated into existing multi-agent systems by replacing traditional agents with specialized agents (\textit{EywaMAS}). We further investigate a planning-based orchestration framework in which a planner dynamically coordinates traditional agents and Eywa agents to solve complex tasks across heterogeneous data modalities (\textit{EywaOrchestra}).
We evaluate Eywa across a diverse set of scientific domains spanning physical, life, and social sciences. 
Experimental results demonstrate that Eywa improves performance on tasks involving structured and domain-specific data, while reducing reliance on language-based reasoning through effective collaboration with specialized foundation models.

\vspace{2mm}
${\coloremojicode{2709}}$ \textbf{Contact}: \href{mailto:zihaoli5@illinois.edu}{{\color{blue}zihaoli5@illinois.edu}}, \href{mailto:jingrui@illinois.edu}{{\color{blue}jingrui@illinois.edu}}

\end{abstract}
\maketitle

\begin{figure}[!h]
    \centering
    \includegraphics[width=\linewidth]{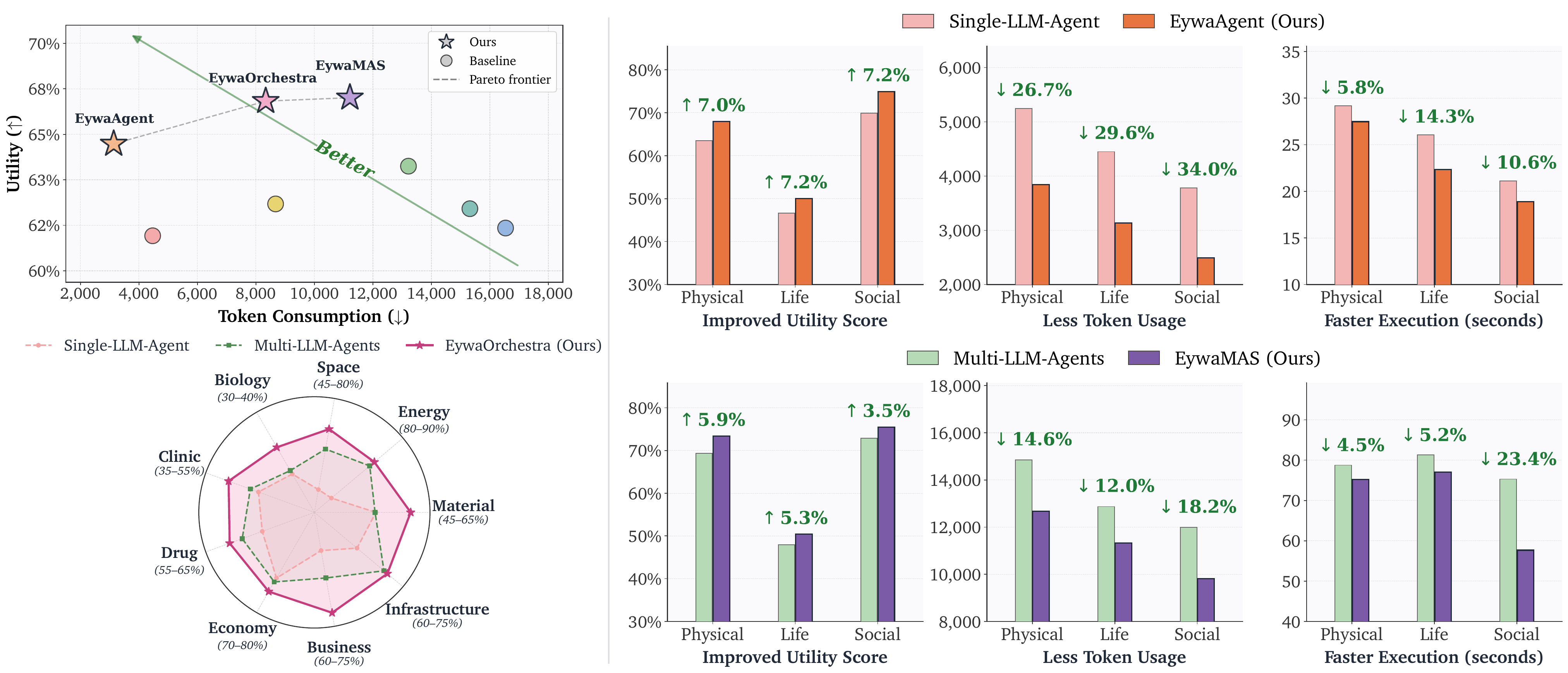}
    \vspace{-6mm}
    \caption{
    \textbf{\textit{Eywa} extends current agentic systems.}
    \textbf{(Left)} Overall comparisons show that \textit{EywaAgent}, \textit{EywaMAS}, and \textit{EywaOrchestra} achieve higher utility with lower token consumption than language-only baselines.
    \textbf{(Right)} Category-level results further show consistent gains in utility, token efficiency, and execution time across physical, life, and social science tasks.
    }
    \vspace{-3mm}
    \label{fig:hero}
\end{figure}

\addtocontents{toc}
{\protect\setcounter{tocdepth}{-1}}

\section{Introduction}
\label{sec:intro}

\begin{figure*}[t]
    \centering
    \includegraphics[width=\linewidth]{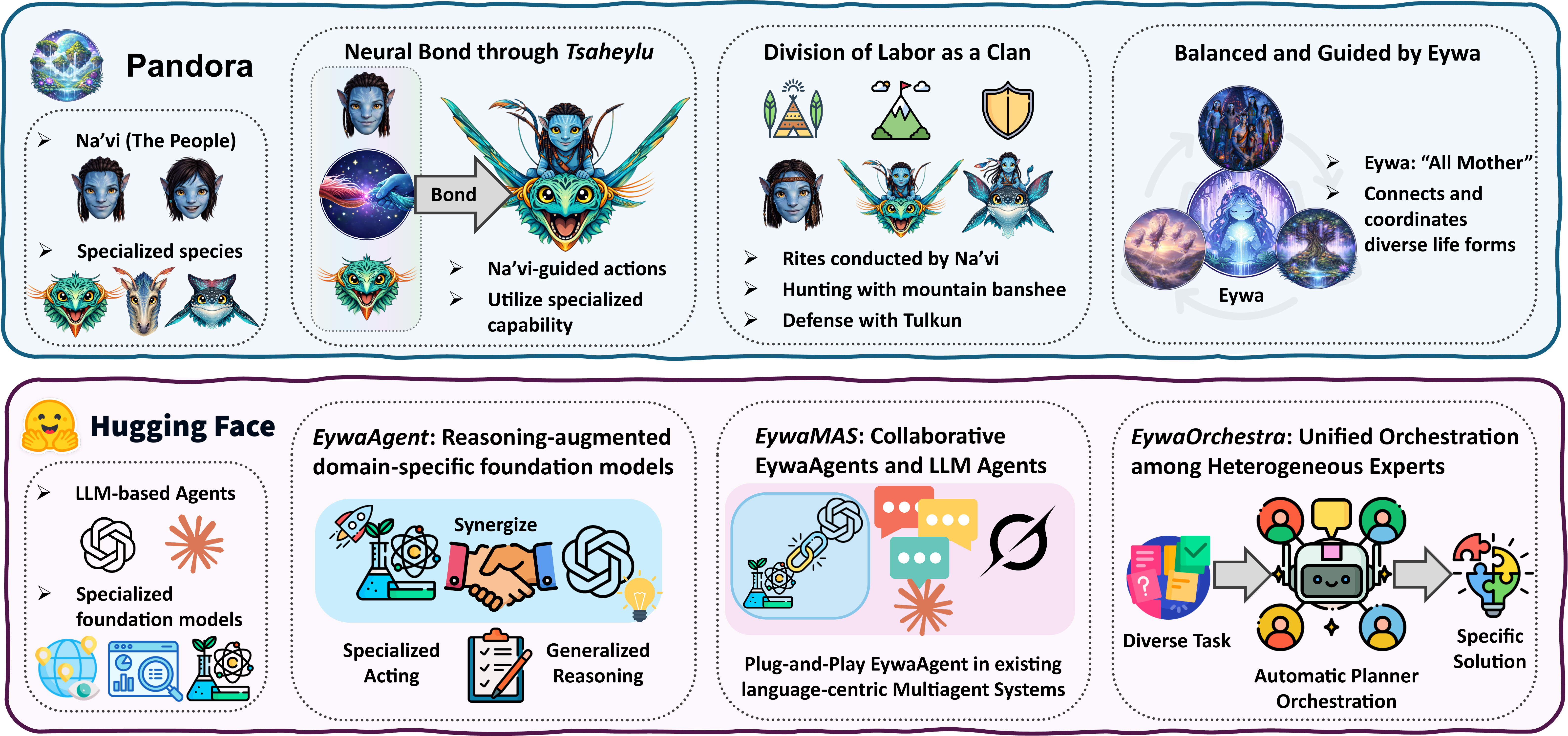}
    \caption{
Analogy between the Avatar Pandora ecosystem and the Agentic AI ecosystem.
In Pandora, specialized species are coordinated by Na'vi through \textit{Tsaheylu} (a neural bond for cross-species communication), collaborating under the global guidance of \textit{"All Mother"}.
Inspired by this analogy, we propose \textit{Eywa}, a three-stage agentic framework: 
(1) \textit{EywaAgent} builds an FM-LLM \textit{Tsaheylu} interface and augments domain-specific foundation models with language-based reasoning interfaces; 
(2) \textit{EywaMAS} enables collaboration between \textit{EywaAgents} and LLM agents; and 
(3) \textit{EywaOrchestra} dynamically orchestrates across heterogeneous experts.
    }
    \label{fig: main}
\end{figure*}

Recent advances in large language models (LLMs) have driven the emergence of agentic AI systems that spark paradigm shifts across numerous industries \cite{DBLP:journals/corr/abs-2303-08774, DBLP:journals/corr/abs-2503-19786, DBLP:journals/corr/abs-2407-21783, DBLP:journals/corr/abs-2601-12538, DBLP:journals/corr/abs-2512-16301, DBLP:journals/corr/abs-2511-20639}. 
Large-scale natural language pretraining endows these systems with general capabilities in perception, planning, reasoning, and decision-making over complex scenarios \cite{DBLP:journals/corr/abs-2307-03109, DBLP:journals/tmlr/FengJLZTCLY24, DBLP:journals/corr/abs-2211-09110}.
However, real-world problems are not limited to natural language. 
In scientific tasks involving specialized data types such as symbolic representations (e.g., formulas, equations) and structured data (e.g., time series) \cite{DBLP:journals/corr/abs-2508-21148, DBLP:journals/jcisd/Weininger88, hersbach2020era5, DBLP:conf/icml/ShojaeeNMFDR25, DBLP:journals/nar/Consortium23}, relying on natural language as the universal interface can become a significant bottleneck. This mismatch between language-centric pretraining and specialized scientific downstream tasks poses a key challenge to the development of agentic systems in scientific domains \cite{DBLP:journals/corr/abs-2508-21148, DBLP:journals/corr/abs-2508-14111, DBLP:journals/corr/abs-2505-19897, rios2026ai}. 

Meanwhile, this limitation has coincided with rapid progress in developing domain-specific foundation models that are optimized for specialized data and tasks \cite{DBLP:journals/nn/MenonMBPJKJ26, DBLP:journals/corr/abs-2312-03014, DBLP:journals/corr/abs-2307-13721, DBLP:journals/corr/abs-2504-04011, DBLP:journals/corr/abs-2108-07258, huang2024foundation}. These models, while not always equipped with language interfaces, are typically pretrained to capture domain-specific patterns and offer strong predictive capabilities within their respective domains \cite{DBLP:journals/corr/abs-2212-12794, DBLP:journals/corr/abs-2405-04285, hamann2024foundation, herzog2025olmoearth}. This creates an opportunity to extend language-centric systems by enabling these heterogeneous foundation models to participate directly in reasoning processes, motivating the following research question:

\vspace{-3mm}
\begin{center}{
\centering\textit{\textbf{
Can heterogeneous foundation models collaborate within agentic systems?
}}
}

\end{center}
\vspace{-3mm}

This challenge can be viewed as a limitation of communication. Unlike existing agentic collaboration systems, which typically assume that all agents communicate through natural language \cite{DBLP:conf/ijcai/GuoCWCPCW024, DBLP:conf/iclr/HongZCZCWZWYLZR24, DBLP:conf/uist/ParkOCMLB23, DBLP:journals/corr/abs-2502-14321, DBLP:journals/corr/abs-2501-06322}, 
many foundation models do not natively support language as an input or output modality. This discrepancy makes it challenging to directly incorporate such foundation models into agentic systems.

\begin{figure}[t]
\vspace{-3mm}
\centering
\begin{tcolorbox}[agentstruct,title=\textbf{Contributions}]
Our contributions are summarized as follows:
\begin{itemize}
    % \item We identify a fundamental limitation of language-centric agentic systems and formalize it as a communication bottleneck between language agents and domain-specific foundation models.
    \item We introduce \textit{Eywa}, a heterogeneous agentic framework that enables modality-native collaboration by augmenting foundation models with language-model-based reasoning interfaces.
    \item We introduce three instantiations of the framework: \textit{EywaAgent} (single-agent integration), \textit{EywaMAS} (multi-agent extension), and \textit{EywaOrchestra} (planning-based orchestration).
    \item We conduct extensive experiments across diverse scientific domains on our new benchmark \textit{EywaBench}, demonstrating improved performance by effectively integrating domain-specific foundation models into agentic systems.
\end{itemize}
\end{tcolorbox}
\vspace{-3mm}
\end{figure}

To better illustrate this limitation and our motivation, in Figure \ref{fig: main}, we draw an analogy from the movie \textit{Avatar}\footnote{\url{https://en.wikipedia.org/wiki/Avatar_(2009_film)}}. On Pandora, many species (e.g., Mountain Banshee\footnote{\url{https://james-camerons-avatar.fandom.com/wiki/Mountain_Banshee}} and the Direhorse\footnote{\url{https://james-camerons-avatar.fandom.com/wiki/Direhorse}}) possess highly specialized capabilities. Domain-specific foundation models are analogous to these specialized Pandora species. 
Yet such capabilities cannot be directly coordinated by Na'vi as they do not communicate through a shared symbolic language.
Instead, the Na'vi establish connections through \textit{Tsaheylu}\footnote{\url{https://james-camerons-avatar.fandom.com/wiki/Tsaheylu}}, forming a direct neural interface that enables interaction even across fundamentally different biological systems.

In this paper, we introduce \textit{Eywa}\footnote{\textit{Eywa}, a.k.a., "All Mother", represents a planetary-scale network that connects and coordinates diverse life forms. \url{https://james-camerons-avatar.fandom.com/wiki/Eywa}.}, a heterogeneous agentic framework designed to bridge language agents and domain-specific foundation models.
Inspired by the concept of \textit{Tsaheylu}, we propose to augment a domain-specific foundation model with a language model to create an \textit{EywaAgent} (like a Na'vi warrior bonded with a Banshee). This design allows language agents to guide inference, planning, and decision-making of the foundation models over their specialized tasks.
Building upon this primitive, we further extend Eywa to multi-agent settings. 
We introduce \textit{EywaMAS}, where \textit{EywaAgents} can replace existing language agents in multi-agent systems. 
Moreover, we propose \textit{EywaOrchestra}, a planning-based orchestration framework in which a central planner dynamically coordinates both language agents and EywaAgents to solve complex tasks. 
Through this design, Eywa enables modality-native collaboration, allowing heterogeneous models to participate in a unified reasoning process without requiring full translation into natural language.

We evaluate \textit{Eywa} across a diverse set of scientific domains spanning physical, life, and social sciences.
As shown in Figure~\ref{fig:hero}, \textit{Eywa} consistently improves the utility-cost trade-off over language-only baselines. 
Compared with the Single-LLM-Agent baseline, \textit{EywaAgent} improves utility by $\sim7\%$ across physical, life, and social science tasks, while reducing token usage by $\sim30\%$.
It also reduces execution time by $\sim10\%$.
Similarly, \textit{EywaMAS} improves utility while reducing token and time usage in multi-agent settings. Moreover, \textit{EywaOrchestra} dynamically orchestrates heterogeneous models and improves over single-agent and multi-agent baselines.
These results suggest that modality-native collaboration with specialized foundation models improves scientific task solving while reducing the token and runtime overhead of language-only reasoning. 

\section{Preliminary}

\textbf{LLM Agent ("LLM").} An LLM agent \cite{DBLP:journals/fcsc/WangMFZYZCTCLZWW24, DBLP:journals/corr/abs-2503-21460} is a policy
$A_{\mathrm{LLM}} : \mathcal{S} \to \Delta(\mathcal{M})$
where $\mathcal{S}$ is an internal state space, $\mathcal{M}$ is a response space of messages, actions, or tool invocations, and $\Delta(\mathcal{M})$ denotes a distribution over outputs.
We model the LLM agent as possessing strong general-purpose reasoning capabilities, while accessing non-linguistic inputs only indirectly through textualized representations.

\textbf{Domain-Specific Foundation Model ("FM").} While large language models are often considered a class of foundation models, in this work we use the term \emph{domain-specific foundation models} to refer to models that are primarily designed for specialized domains and do not necessarily provide a native language interface. 
An FM for domain $k$ is formulated as
$F_k : \mathcal{X}_k \times \mathcal{U}_k \to \mathcal{O}_k$
where $\mathcal{X}_k$ is the input, $\mathcal{U}_k$ is a space of structured user configuration arguments, and $\mathcal{O}_k$ is an output space.
$F_k$ is not assumed to natively operate over open-ended language. Rather, it provides faithful capability for a specialized domain.

\textbf{Multi-Agent Systems (MAS).}
A multi-agent system is defined as a tuple $\mathcal{M} = (\mathcal{A}, \mathcal{G})$ where $\mathcal{A} = \{A_1, A_2, \dots, A_n\}$ is a set of agents, and $\mathcal{G}$ denotes the communication topology.
Each agent $A_i$ is a policy $A_i : \mathcal{S}_i \to \Delta(\mathcal{M}_i)$
where $\mathcal{S}_i$ is the local state space and $\mathcal{M}_i$ is the message space.
Let $s_i^{(t)} \in \mathcal{S}_i$ denote the local state of agent $A_i$ at step $t$, and let $m_i^{(t)} \in \mathcal{M}_i$ denote the message produced by $A_i$ at step $t$; correspondingly, $m_{j,i}^{(t)}$ denotes the message sent from $A_j$ to $A_i$, and $m_{-i}^{(t)}$ collects all messages received by $A_i$ under topology $\mathcal{G}$. At each step, agent $A_i$ updates its state and produces a message:
\begin{equation}
    s_i^{(t)} = \mathrm{Update}_i\big(s_i^{(t-1)}, m_{-i}^{(t)}\big), 
    \quad
    m_i^{(t)} \sim A_i\big(s_i^{(t)}\big).
\end{equation}
The interaction proceeds iteratively until the system produces a final output $\hat{y}$ after a finite number of steps.

\textbf{Problem Formulation.}
Let $\mathcal{T}$ denote a family of tasks. Each task instance is represented by $\tau = (q, x, y^\star, \ell)$
where $q \in \mathcal{Q}$ is a natural-language instruction or high-level objective, $x \in \mathcal{X}$ is the task input, $y^\star \in \mathcal{Y}$ is the desired output, and $\ell$ is a task-specific loss function.
We make a reasonable assumption for scientific tasks that the input space factorizes as 
\begin{equation}
    \mathcal{X} = \mathcal{X}_{\mathrm{lng}} \times \mathcal{X}_{1} \times \cdots \times \mathcal{X}_{m},
\end{equation}

where $\mathcal{X}_{\mathrm{lng}}$ denotes language-observable context and each $\mathcal{X}_k$ denotes a domain-specific input.
An agentic system $G$ produces an output $\hat{y}_G(\tau) = G(q, x)$ for task $\tau$. The objective is to minimize the expected task loss over the task distribution:
\begin{equation}
    \min_G \; \mathbb{E}_{\tau \sim \mathcal{T}} \left[ \ell\big(\hat{y}_G(\tau), y^\star \big) \right].
\end{equation}

\begin{assumption}\label{assumption: domain advantage}
(Domain Advantage of Foundation Models).
Let $\pi_k : \mathcal{X} \to \mathcal{X}_k$ denote the projection onto the domain-specific component, i.e., $x_k = \pi_k(x)$.
For any task instance $\tau = (q, x, y^\star, \ell) \sim \mathcal{T}$ with $x_k = \pi_k(x)$ being informative, the foundation model $F_k$ achieves strictly better performance than any language-only model on the domain-specific component $x_k$, i.e.,
\begin{equation}
\mathbb{E}_{\tau = (q, x, y^\star, \ell) \sim \mathcal{T}} \left[ \ell_k(F_k(x_k), y^\star) \right]
<
\inf_{A_{\mathrm{LLM}}} 
\mathbb{E}_{\tau = (q, x, y^\star, \ell) \sim \mathcal{T}} \left[ \ell_k(A_{\mathrm{LLM}}(\text{serialize}(x_k)), y^\star) \right],
\end{equation}
where $\text{serialize}(\cdot)$ maps domain inputs into language tokens, $\ell_k$ is the sub-task loss for domain $k$.
\end{assumption}

\section{EywaAgent: Reasoning Foundation Model Agents}
\label{sec:eywaagent}

Our first step towards the Eywa agentic framework is to introduce \textit{EywaAgent}, a unified abstraction that augments a foundation model with a language-based reasoning interface. The key idea is similar to "Tsaheylu" in Avatar to create a strong bond between a language model that performs high-level planning and control and a domain-specific foundation model that provides specialized capabilities. 

\subsection{FM-LLM ``Tsaheylu'' Bond}

The objective of the FM--LLM ``Tsaheylu'' is to establish a robust and stable communication channel between a domain-specific foundation model $F_k : \mathcal{X}_k \times \mathcal{U}_k \to \mathcal{O}_k$ for domain $k$ and a language model $A_{\mathrm{LLM}} : \mathcal{S} \to \Delta(\mathcal{M})$. The Tsaheylu interface is designed to ensure that: (1) the LLM can correctly configure the control input $\mathcal{U}_k$ conditioned on the task state, (2) specialized computation is delegated to the foundation model, and (3) the resulting output $\mathcal{O}_k$ can be faithfully reintegrated into the language reasoning process.

To this end, we formalize the FM--LLM Tsaheylu as a bidirectional communication interface between the language model and the specialist. For each domain $k$, we define an interface pair $(\phi_k, \psi_k)$ where

\begin{itemize}
    \item $\phi_k : \mathcal{S} \to \mathcal{U}_k$ is a \emph{query compiler} that translates the task state into a structured FM invocation.
    \item $\psi_k : \mathcal{O}_k \to \mathcal{Z}_k$ is a \emph{response adapter} that converts the specialist output into a planner-consumable representation, where $\mathcal{Z}_k$ denotes a structured context space compatible with language reasoning.
\end{itemize} 

The resulting communication pipeline can be expressed as
\[
\text{input task} \,\, \tau 
\xrightarrow{\text{task interpretation by} \,\, A_{LLM}} 
s
\xrightarrow{\phi_k} 
u_k 
\xrightarrow{F_k(\cdot)} 
o_k 
\xrightarrow{\psi_k} 
z_k 
\xrightarrow{\text{reasoning \& synthesis by} \,\, A_{LLM}} 
\text{output response} \,\, \hat{y}
\]
which enables seamless integration of heterogeneous foundation models into the language-centric reasoning loop and forms the building block of our Eywa agentic framework.

\begin{wrapfigure}{r}{0.50\textwidth}
  \begin{center}
  \vspace{-10pt}
    \includegraphics[width=0.50\textwidth]{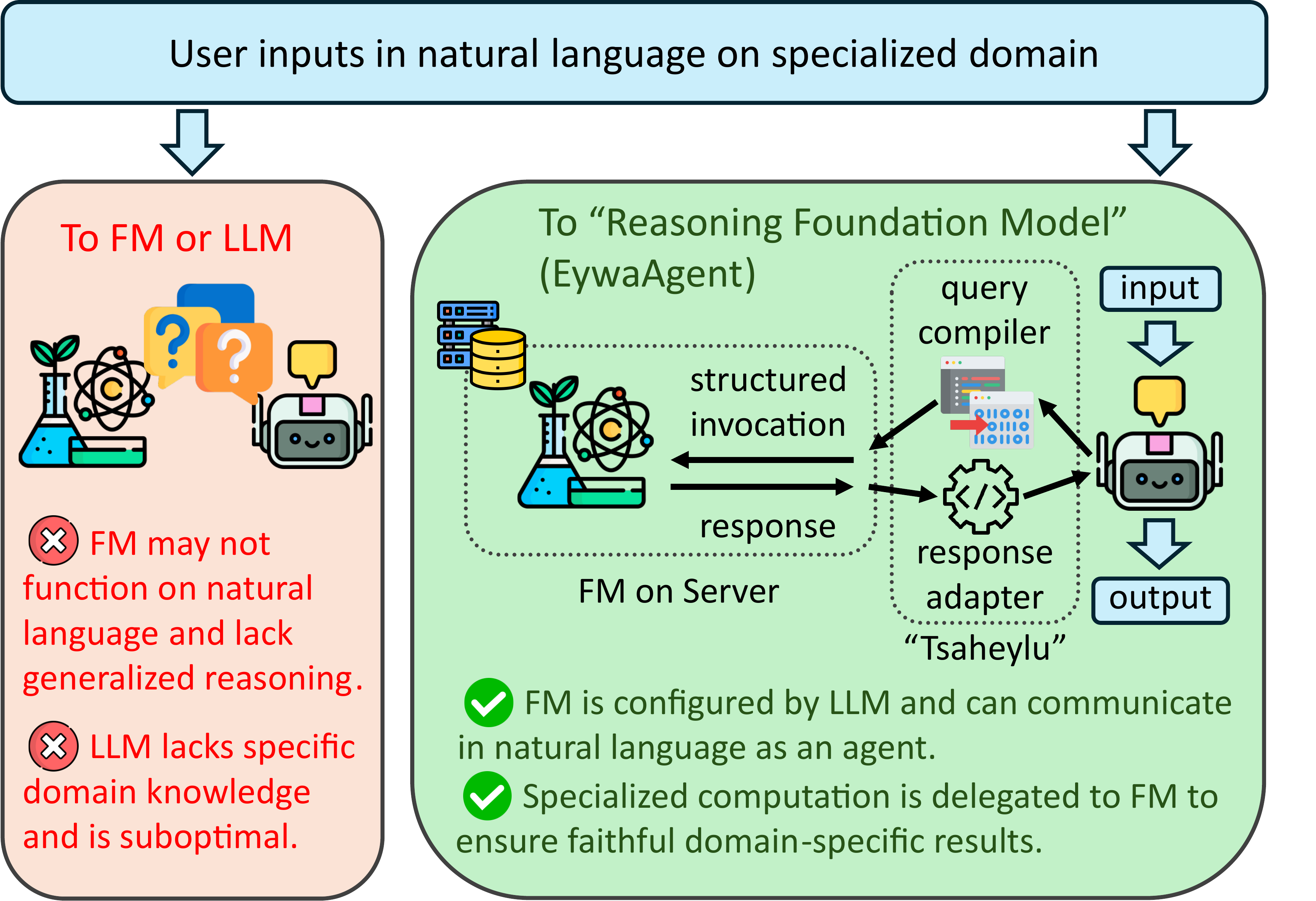}
  \end{center}
  \vspace{-11pt}
  \caption{Reasoning Foundation Model Agent (EywaAgent) leverages both generalized reasoning and specialized acting through FM-LLM "Tsaheylu" Bond.}
  \label{fig:eywaagent}
  \vspace{-11pt}
\end{wrapfigure}

\paragraph{Instantiation via Model Context Protocol.}
We implement the Tsaheylu interface through the \emph{Model Context Protocol (MCP)}~\cite{mcp2024}, which provides a standardized mechanism for structured interaction between language agents and external computational resources.
Within this framework, each foundation model $F_k$ is exposed as a \emph{remote service} with a well-defined schema over its input space $\mathcal{U}_k$ and output space $\mathcal{O}_k$. The query compiler $\phi_k$ is implemented as a structured tool call that specifies the target resource (e.g., dataset identifier or model endpoint), the invocation parameters (e.g., prediction horizon, conditioning variables), and any necessary execution constraints.

Upon invocation, the MCP server executes the requested operation by (i) retrieving domain-specific data $x_k$ from storage, (ii) applying the foundation model $F_k$ with the provided configuration $u_k$, and (iii) returning structured outputs $o_k$ to the agent.
The response adapter $\psi_k$ then transforms $o_k$ into a language-compatible representation $z_k$, which is appended to the subsequent reasoning.

\subsection{EywaAgent for Generalized Reasoning and Specialized Acting}

Building upon the FM--LLM Tsaheylu interface, we define \textit{EywaAgent} as a unified agent that combines language-based reasoning with domain-specific computation. Unlike conventional language agents that operate purely in the linguistic space, EywaAgent dynamically decides whether to invoke a foundation model based on task requirements. 
In our setting, the term ``agent'' does not refer to a single standalone model. Instead, an \textit{EywaAgent} is a coupled FM-LLM agentic unit. 
The two components are connected through the Tsaheylu interface and jointly define the agent's behavior.

\begin{tcolorbox}[
  agentscope,
  float,
  floatplacement=h,
  title=\textbf{EywaAgent}
]
\begin{definition}[EywaAgent]
An \textit{EywaAgent} is defined as a tuple
\begin{equation}
A_{\mathrm{eywa}} = (A_{\mathrm{LLM}}, F, \phi, \psi, \mathcal{C})
\end{equation}
where $A_{\mathrm{LLM}} : \mathcal{S} \to \Delta(\mathcal{M})$ is a language model operating over the agent state space $\mathcal{S}$, $F : \mathcal{X} \times \mathcal{U} \to \mathcal{O}$ is a domain-specific foundation model, and $(\phi, \psi)$ defines a bidirectional communication interface between the language space and the domain space, with $\phi : \mathcal{S} \to \mathcal{U}$ and $\psi : \mathcal{O} \to \mathcal{Z}$.
The control policy $\mathcal{C} : \mathcal{S} \to \{\texttt{invoke}, \texttt{skip}\}$ determines whether and how the foundation model is invoked at each step. If not specified, $\mathcal{C}$ is induced by the language model through its reasoning over the current state.
\end{definition}
\end{tcolorbox}

An illustration of EywaAgent is shown in Figure \ref{fig:eywaagent}. At each reasoning step $t$, given state $s^{(t)}$, the control policy produces a decision $ a^{(t)} \sim \mathcal{C}(s^{(t)}), \quad a^{(t)} \in \{ \texttt{invoke}, \texttt{skip} \}$.

If $a^{(t)} = \texttt{skip}$, the agent reduces to a standard language-only reasoning step:
\begin{equation}
z^{(t)} = A_{\mathrm{LLM}}(s^{(t)}).
\end{equation}
If $a^{(t)} = \texttt{invoke}$, the agent executes the Tsaheylu pipeline:
\begin{equation}
u = \phi(s^{(t)}), \quad
o = F(x, u), \quad
z^{(t)} = \psi(o).
\end{equation}
The updated state is given by $s^{(t+1)} = s^{(t)} \cup \{z^{(t)}\}$. This adaptive mechanism enables EywaAgent to seamlessly switch between generalized reasoning and specialized acting.
From this perspective, EywaAgent subsumes language-only agents as a special case (by choosing $\mathcal{C}$ to always skip), while strictly expanding the space of computable functions through access to domain-specific foundation models. Consequently, EywaAgent achieves both enhanced expressivity and improved task performance.
In particular, we show that EywaAgent attains a strictly lower optimal expected task loss compared to language-only agents under the domain advantage assumption, thereby expanding the class of tasks that can be effectively solved.
% Moreover, EywaAgent improves efficiency when handling domain-specific inputs. 
Moreover, by delegating computation to the foundation model, \textit{Eywa} avoids explicit token-level reasoning over structured data and reduces token usage. Formally, we have the following theorem.

\begin{theorem}\label{thm:eywaagent improvement}
(Improvement of EywaAgent over Language-only Agent)
Let $\mathcal{F}_{\mathrm{LLM}}$ and $\mathcal{F}_{\mathrm{Eywa}}$ denote the function classes induced by language-only agents and EywaAgent, respectively. Under Assumption \ref{assumption: domain advantage}, 
we have the following strict risk improvement of EywaAgent:
\begin{equation}
\inf_{f \in \mathcal{F}_{\mathrm{Eywa}}} 
\mathbb{E}_{\tau \sim \mathcal{T}}[\ell(f(x), y^\star)]
\;<\;
\inf_{f \in \mathcal{F}_{\mathrm{LLM}}} 
\mathbb{E}_{\tau \sim \mathcal{T}}[\ell(f(x), y^\star)].
\end{equation}
\end{theorem}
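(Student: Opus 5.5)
The argument has two parts. First I show the weak inequality $\inf_{\mathcal{F}_{\mathrm{Eywa}}} \le \inf_{\mathcal{F}_{\mathrm{LLM}}}$ from the inclusion $\mathcal{F}_{\mathrm{LLM}} \subseteq \mathcal{F}_{\mathrm{Eywa}}$. Then I make it strict by exhibiting a single EywaAgent that does strictly better than the best language-only agent, using Assumption~\ref{assumption: domain advantage}.

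For the inclusion, fix any language-only agent $A_{\mathrm{LLM}}$. Take the EywaAgent $(A_{\mathrm{LLM}}, F_k, \phi_k, \psi_k, \mathcal{C})$ whose control policy satisfies $\mathcal{C}\equiv\texttt{skip}$. By the update rule, every step reduces to $z^{(t)} = A_{\mathrm{LLM}}(s^{(t)})$, so the induced map $x\mapsto\hat y$ coincides with that of $A_{\mathrm{LLM}}$. Hence every $f\in\mathcal{F}_{\mathrm{LLM}}$ also lies in $\mathcal{F}_{\mathrm{Eywa}}$, and the infimum over the larger class is no larger.

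For strictness, let $L^\star := \inf_{\mathcal{F}_{\mathrm{LLM}}}\mathbb{E}[\ell(f(x),y^\star)]$. I would argue by contradiction. Suppose the infima were equal. Then a minimizing sequence $(f_n)\subset\mathcal{F}_{\mathrm{LLM}}$ would approach $L^\star$, and no EywaAgent could beat it.

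To produce an agent that does beat it, I decompose the task with the factorization $\mathcal{X}=\mathcal{X}_{\mathrm{lng}}\times\mathcal{X}_1\times\cdots\times\mathcal{X}_m$. A language-only agent must process $x_k$ through $\mathrm{serialize}(x_k)$. The EywaAgent instead invokes $F_k$ on $x_k=\pi_k(x)$ and hands $z_k=\psi_k(F_k(x_k,u_k))$ to the same language reasoner, which synthesizes the answer exactly as the near-optimal $f_n$ would, but with its processing of the $k$-th component replaced by $F_k$'s output.

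Assumption~\ref{assumption: domain advantage} gives a gap
\[
\delta := \inf_{A_{\mathrm{LLM}}}\mathbb{E}\big[\ell_k(A_{\mathrm{LLM}}(\mathrm{serialize}(x_k)),y^\star)\big]-\mathbb{E}\big[\ell_k(F_k(x_k),y^\star)\big] > 0 .
\]
Note that $\delta$ is taken against the \emph{infimum} over language models, so it bounds every $f_n$ uniformly. The goal is then to show that the EywaAgent's overall risk is at most $L^\star - c\,\delta$ for some $c>0$, which contradicts the equality of the infima.

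The main obstacle is linking the sub-task loss $\ell_k$ to the overall loss $\ell$. The assumption says nothing about how errors on component $k$ propagate to $\ell$. I would add an explicit structural condition on this link, which the paper's statement implicitly requires:
\begin{itemize}
\item either $\ell$ is additively separable, $\ell=\ell_{\mathrm{lng}}+\sum_k \ell_k$, with the language-only agent's sub-task losses lower-bounded componentwise;
\item or $\ell$ is monotone in $\ell_k$ with a positive modulus, i.e.\ $\ell(\hat y',y^\star)\le \ell(\hat y,y^\star)-c\,[\ell_k(\hat y,y^\star)-\ell_k(\hat y',y^\star)]$.
\end{itemize}
I would try the additive form first. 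Under it, the EywaAgent's risk equals the language-only agent's risk on the remaining terms plus the $F_k$ term, so the inequality follows by adding the assumption's strict gap to the weak inequality on the remaining components. I would also require that $x_k$ is informative with positive probability under $\mathcal{T}$, so that the expected gap $\delta$ is genuinely nonzero.
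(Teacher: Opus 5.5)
Your proposal is correct and takes essentially the same route as the paper: containment via the always-\texttt{skip} control policy, then strictness via an EywaAgent that routes the $k$-th component through $F_k$ while leaving the other components as in a language-only baseline. The paper also needs an extra hypothesis to link $\ell_k$ to $\ell$ here, namely its appendix Assumption~\ref{assumption:task_factorization} (a coordinate-wise nondecreasing aggregation $\Gamma$, where a strict improvement in $\ell_k$ forces a strict improvement in $\ell$), which is exactly the missing hypothesis you flag in the main-text statement; your quantitative version (an additive or positive-modulus link, applied along a minimizing sequence with the uniform gap $\delta$) is somewhat more careful than the paper's qualitative condition, which on its own only shows the substituted agent beats each fixed baseline rather than a possibly unattained infimum.
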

(Proof in Appendix \ref{app:expressivity}) 

Theorem~\ref{thm:eywaagent improvement} shows that the Tsaheylu interface preserves language-only reasoning through the skip branch, while enabling native foundation-model computation through the invoke branch, yielding lower optimal risk under the domain advantage assumption.

\section{Eywa Agentic Systems: Multi-Agent Composition and Orchestration}

With EywaAgent defined as a plug-and-play building block for agentic AI systems, we naturally extend this paradigm to multi-agent settings to enable more complex and heterogeneous collaborations. 
To this end, we introduce two complementary system-level abstractions. First, \textit{EywaMAS} generalizes EywaAgent to a distributed multi-agent setting, allowing multiple specialized agents to interact and collaborate. Second, \textit{EywaOrchestra} introduces a global orchestration mechanism that dynamically coordinates agents through structured planning and execution to solve complex tasks.

\subsection{EywaMAS: Plug-and-Play Composition with EywaAgent}

\begin{tcolorbox}[
  agentscope,
  float,
  floatplacement=h,
  title=\textbf{EywaMAS}
]
\begin{definition}[EywaMAS]
An \textit{EywaMAS} is defined as a multi-agent system
\begin{equation}
\mathcal{M}_{\mathrm{Eywa}} = (\mathcal{A}, \mathcal{G})
\end{equation}
where $\mathcal{A} = \{\mathcal{A}_1, \dots, \mathcal{A}_n\}$ is a set of heterogeneous agents, each of which is either an LLM agent or an EywaAgent, and $\mathcal{G}$ specifies the communication topology.
\end{definition}
\end{tcolorbox}

\begin{wrapfigure}{r}{0.30\textwidth}
  \begin{center}
  \vspace{-19pt}
    \includegraphics[width=0.30\textwidth]{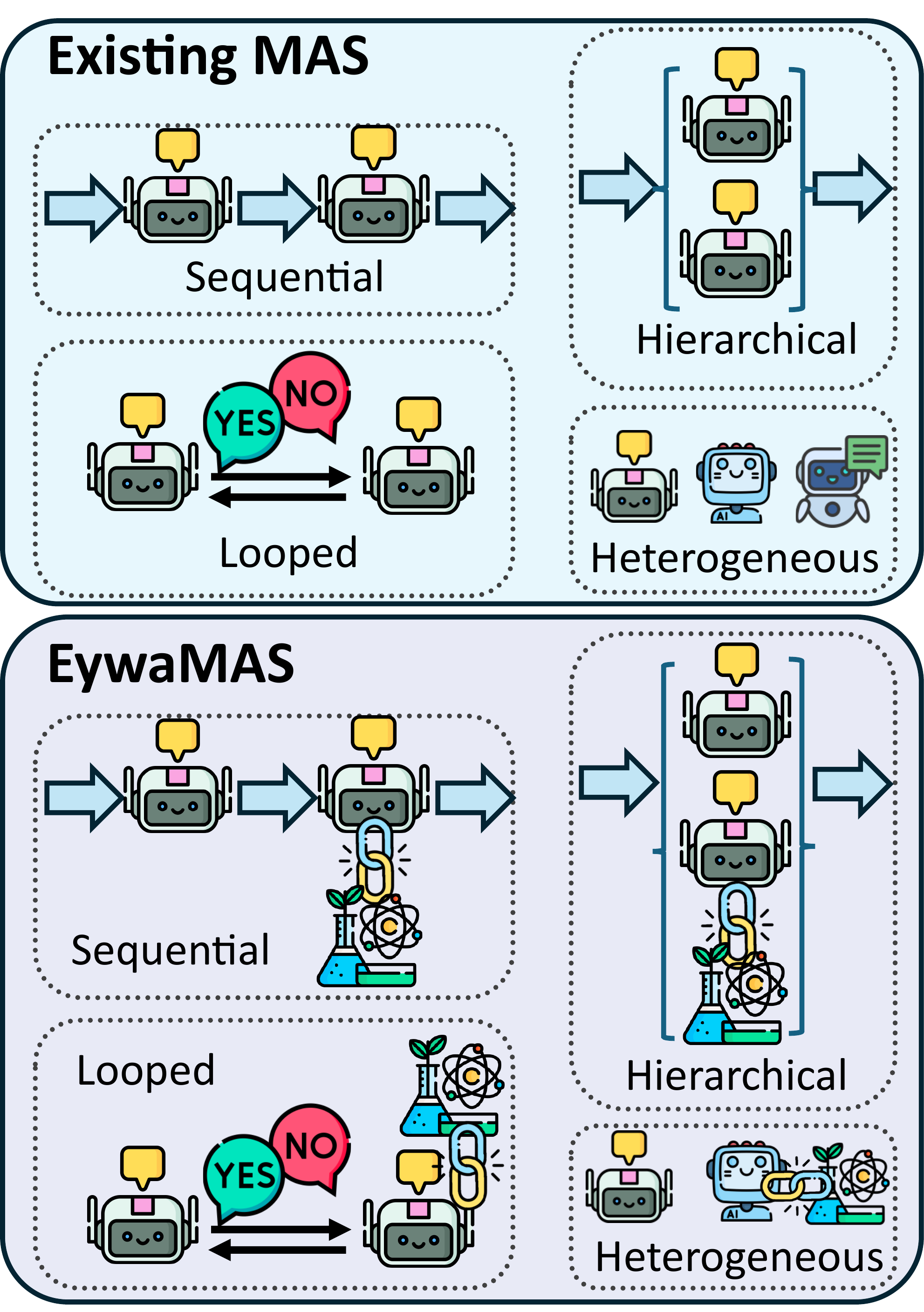}
  \end{center}
  \vspace{-15pt}
  \caption{EywaMAS generalizes existing multi-agent systems with EywaAgents.}
  \vspace{-10pt}
\end{wrapfigure}

\textit{EywaMAS} is a multi-agent system that composes heterogeneous agents in a plug-and-play manner. Unlike traditional multi-agent systems that rely solely on language-based agents, EywaMAS enables seamless integration of both language-only agents and EywaAgents within a unified framework. In practice, constructing an EywaMAS requires minimal modification to existing multi-agent architectures. Specifically, one can replace a subset of language-only agents with EywaAgents while keeping the overall system structure unchanged. For example, in a hierarchical multi-agent system consisting of a planner, multiple worker agents, and a summarizer, substituting some of the worker agents with EywaAgents directly yields an EywaMAS. This plug-and-play property allows domain-specific foundation models to be incorporated into existing agentic systems without redesigning communication protocols or system architectures.

Each EywaAgent $\mathcal{A}_k$ internally follows the mechanism defined in Section~\ref{sec:eywaagent}, operating in the language space while optionally invoking its associated foundation model through the Tsaheylu interface.
At the system level, EywaMAS follows the same communication and state update dynamics as standard multi-agent systems, with interactions governed by the communication topology $\mathcal{G}$. 
As a result, EywaMAS forms a heterogeneous agentic system that integrates language-based reasoning with specialized foundation model capabilities in a unified and modular framework. It supports flexible composition across (1) different language models with varying scales or capabilities, (2) different foundation models across domains, and (3) mixed agent types (LLM Agents and EywaAgents).
When $\mathcal{A} =\{\mathcal{A}_1, \dots, \mathcal{A}_n\}$ are all LLM agents, EywaMAS degenerates to existing multi-agent systems. Similar to Theorem~\ref{thm:eywaagent improvement}, we establish a theoretical result showing that EywaMAS strictly improves over language-only multi-agent systems in Appendix \ref{app:mas}.

Compared to single-agent settings, EywaMAS enables parallel specialization and cross-domain collaboration. However, as communication topology is fixed and coordination is decentralized, the system may suffer from suboptimal topology or human configuration inefficiency, motivating the need for automatic orchestration.

\subsection{EywaOrchestra: Dynamic Orchestration of Heterogeneous Experts}

Real-world agentic tasks are highly diverse, and the optimal multi-agent organization for one task may be suboptimal for another. In particular, different tasks may require different mixtures of language-only reasoning, domain-grounded prediction, and inter-agent collaboration patterns. To address this, we extend \textit{Eywa} from static heterogeneous agents and fixed multi-agent systems to a dynamic orchestration framework, termed \textit{EywaOrchestra}.

\begin{tcolorbox}[
  agentscope,
  float,
  floatplacement=h,
  title=\textbf{EywaOrchestra}
]
\begin{definition}[EywaOrchestra]
\label{def:eywaorchestra}
Given candidate language models $\mathcal{M}_{\mathrm{LLM}}$, candidate domain-specific foundation models $\mathcal{M}_{\mathrm{FM}}$, and a topology pool $\Pi$, \textit{EywaOrchestra} is a dynamically instantiated heterogeneous multi-agent system:
\begin{equation}
\mathcal{O} = (\mathcal{C}, P)
\end{equation}
where $\mathcal{C}$ is the configuration space induced by $(\mathcal{M}_{\mathrm{LLM}}, \mathcal{M}_{\mathrm{FM}}, \Pi)$, $P$ is the conductor.
\end{definition}
\end{tcolorbox}

Planning in \textit{EywaOrchestra} is achieved by a \emph{conductor} that, conditioned on the input task, dynamically instantiates a heterogeneous multi-agent system by deciding: 
(i) the role and type of each agent, e.g., whether an agent should be a language-only agent or an EywaAgent;
(ii) the backbone language model used by each agent;
(iii) the domain-specific foundation model attached to each EywaAgent; and
(iv) the communication topology of the overall multi-agent system.
For tractability in this initial study, we assume a finite \emph{topology pool} of candidate multi-agent structures. In the current implementation, the \emph{conductor} is instantiated as a large language model that maps the input task to a system configuration from this pool. After the conductor selects a configuration, the instantiated system is executed to solve the task.

\begin{algorithm}[t]
\caption{EywaOrchestra: Dynamic Orchestration of Heterogeneous Experts}
\label{alg:eywaorchestra}
\begin{algorithmic}[1]
\REQUIRE Task input $\tau = (q, x, y^\star, \ell)$, configuration space $\mathcal{C}$, conductor $P$
\ENSURE Output $\hat{y}$

\STATE Select a system configuration $c \leftarrow P(q,x)$

\STATE Instantiate the heterogeneous agent system specified by $c$

\STATE Execute the induced multi-agent system on $(q,x)$ and return output $\hat{y}$
\end{algorithmic}
\end{algorithm}

The benefit of \textit{EywaOrchestra} can be understood through the gap between adaptive orchestration and any fixed configuration. Let $F_c$ denote the agent system instantiated by configuration $c$. Define the best fixed configuration risk as
\begin{equation}
\mathcal{R}_{\mathrm{fixed}}^\star
=
\min_{c\in\mathcal{C}}
\mathbb{E}_{\tau\sim\mathcal{T}}
\big[\ell(F_c(q,x),y^\star)\big],
\end{equation}
and the oracle adaptive risk as
\begin{equation}
\mathcal{R}_{\mathrm{oracle}}
=
\mathbb{E}_{\tau\sim\mathcal{T}}
\Big[
\min_{c\in\mathcal{C}}
\mathbb{E}\big[\ell(F_c(q,x),y^\star)\big]
\Big].
\end{equation}
By construction, $\mathcal{R}_{\mathrm{oracle}} \le \mathcal{R}_{\mathrm{fixed}}^\star$, with strict inequality whenever different regions of the task distribution favor different system configurations. This observation highlights the limitation of any fixed multi-agent design across various tasks. The motivation of \textit{EywaOrchestra} is precisely to move beyond such static designs by enabling task-adaptive system construction, jointly leveraging \emph{model adaptivity}, through selecting language and domain-specific foundation models, and \emph{structural adaptivity}, through selecting the communication topology itself.

\section{Experiments}
\label{sec:experiment}

\subsection{Eywabench: A Scalable Multi-task Multi-domain Scientific Benchmark}

Current scientific benchmarks are often limited to a narrow task family \cite{DBLP:journals/corr/WuRFGGPLP17, DBLP:journals/corr/abs-2502-14739}, a single domain \cite{DBLP:conf/emnlp/DaiYSZGHLZTGH25, DBLP:conf/emnlp/DaiYSZGHLZTGH25, DBLP:journals/corr/abs-2504-16074}, or one data format \cite{DBLP:journals/corr/abs-2505-19501, DBLP:journals/corr/abs-2507-03578, DBLP:conf/nips/JohnsonFGMBSF23, DBLP:conf/cvpr/0004WGWWCML0Z0025}, and therefore may not fully reflect the capability requirements of scientific agentic systems.
More specifically, two important scientific modalities, time series and tabular data, are often ignored or poorly evaluated in existing benchmarks.
To provide a holistic evaluation of agentic systems in scientific settings, we introduce \textit{Eywabench}, a scalable benchmark for multi-task and multi-domain scientific reasoning across heterogeneous modalities.
\textit{Eywabench} is constructed from a collection of datasets, including but not limited to DeepPrinciple \cite{song2025evaluating}, MMLU-Pro \cite{wang2024mmlu}, fev-bench \cite{DBLP:journals/corr/abs-2509-26468}, and TabArena \cite{DBLP:journals/corr/abs-2506-16791}.

\textbf{Multi-task and multi-domain coverage.} \textit{Eywabench} includes tasks spanning natural language, time series, and tabular data. The samples are organized into three domains: physical science, life science, and social science.
Each domain further contains three sub-domains: physical science includes material, energy, and space; life science includes biology, clinic, and drug; social science includes economy, business, and infrastructure.
Details of domains and sub-domains are provided in Appendix \ref{sec:dataset_details}.

\textbf{Scalability.} \textit{Eywabench} scales along both task volume and domain coverage.
Task volume can be increased by sampling new temporal windows, variables, and contextual combinations from source datasets.
Domain coverage can be expanded by applying the same construction pipeline to new time-series and tabular resources beyond fev-bench and TabArena. Moreover, the same design principle naturally extends to other scientific modalities (e.g., vision and geospatial earth observation) by incorporating corresponding datasets and specialist foundation models.

\begin{table}[!t]
    \centering
    \renewcommand{\arraystretch}{1.25}
    \caption{Overall performance comparison across scientific domains on \textit{EywaBench}. We compare all methods on three dimensions, including utility $(\uparrow)$, inference time $(\downarrow)$, and token consumption $(\downarrow)$. Best results are highlighted in bold and second-best results are underlined. Our proposed methods, EywaAgent, EywaMAS, and EywaOrchestra, achieve strong overall performance while maintaining competitive efficiency.}
    % \vspace{-8pt}
    \label{tab:main_comparison_eywabench}
    \small
    \resizebox{\textwidth}{!}{
    \begin{tabular}{l|c|ccc|ccc|ccc|c}
        \toprule
        \multirow{2}{*}{\textbf{Method}} &
        \multirow{2}{*}{\textbf{Metrics}} &
        \multicolumn{3}{c}{\textbf{Physical Science}} &
        \multicolumn{3}{c}{\textbf{Life Science}} &
        \multicolumn{3}{c}{\textbf{Social Science}}
        \\
        \cmidrule(lr){3-5}\cmidrule(lr){6-8}\cmidrule(lr){9-11}\cmidrule(lr){12-12}
        & & Material & Energy & Space & Biology & Clinic & Drug & Economy & Business & Infrastructure & Overall\\
        \midrule[-0.4ex]\midrule\addlinespace[-0.000ex]
        \rowcolor{gray!16}
        \multicolumn{12}{c}{
            \rule{0pt}{1.1em}
            \textbf{\textit{Single-Agent Setting}}
            \rule[-0.3em]{0pt}{1.1em}
        } \\
        [-0.4ex]\midrule\addlinespace[-0.000ex]

        \cellcolor{red!7}
        & Utility ($\uparrow$) & 0.5616 & 0.8202 & 0.5235 & 0.3402 & 0.4582 & 0.6004 & 0.7689 & 0.6528 & 0.6758 & 0.6154\\
        \cellcolor{red!7}\textbf{Single-LLM-Agent}
        & Time ($\downarrow$)  & 34.48 & 27.01 & 26.00 & 34.68 & 22.37 & 21.13 & 22.67 & 22.28 & 18.42 & 25.22\\
        \cellcolor{red!7}
        & Tokens ($\downarrow$) & 6367 & 4854 & 4512 & 6164 & 3618 & 3571 & 4097 & 3915 & 3327 & 4469\\
        [-0.4ex]\midrule\addlinespace[-0.000ex]
        
        \cellcolor{orange!7}
        & Utility ($\uparrow$)   & 0.5871 & 0.8390 & 0.6123 & 0.3718 & 0.5085 & 0.6199 & \textbf{0.8048} & \underline{0.7371} & 0.7060 & 0.6558\\
        \cellcolor{orange!7}\textbf{EywaAgent(Ours)}
        & Time ($\downarrow$)  & 34.88 & 24.42 & 23.12 & 30.84 & 20.32 & 15.84 & 19.71 & 20.98 & 15.99 & 22.78\\
                 
        \cellcolor{orange!7}
        & Tokens ($\downarrow$) & 5040 & 3167 & 3329 & 4858 & 2333 & 2210 & 2791 & 2444 & 2248 & 3137\\

        \addlinespace[-0.4ex] \midrule \addlinespace[-0.000ex]
        \rowcolor{gray!16}
        \multicolumn{12}{c}{
            \rule{0pt}{1.1em}
            \textbf{\textit{Multi-Agent Setting}}
            \rule[-0.3em]{0pt}{1.1em}
        } \\
        [-0.4ex]\midrule\addlinespace[-0.000ex]

        \cellcolor{yellow!7}
        & Utility ($\uparrow$)   & 0.5687 & 0.8667 & 0.6244 & 0.3623 & 0.4504 & 0.6215 & 0.7523 & 0.6880 & 0.6362 & 0.6294\\
        \cellcolor{yellow!7}\textbf{Refine MAS \citeyearpar{DBLP:conf/nips/MadaanTGHGW0DPY23}}
        & Time ($\downarrow$)  & 72.76 & 64.22 & 79.65 & 75.21 & 51.89 & 50.63 & 62.33 & 48.54 & 47.49 & 60.59\\
        \cellcolor{yellow!7}
        & Tokens ($\downarrow$) & 11013 & 9009 & 10043 & 10497 & 7029 & 7498 & 8924 & 6997 & 7438 & 8673\\
        [-0.4ex]\midrule\addlinespace[-0.000ex]

        \cellcolor{green!7}
        & Utility ($\uparrow$)   & 0.5602 & 0.8656 & 0.6543 & 0.3438 & 0.4738 & 0.6198 & 0.7729 & 0.6907 & 0.7237 & 0.6460\\
        \cellcolor{green!7}\textbf{Debate MAS \citeyearpar{DBLP:conf/icml/Du00TM24}}
        & Time ($\downarrow$)  & 82.06 & 79.46 & 74.75 & 101.64 & 78.19 & 63.98 & 92.72 & 72.46 & 60.73 & 78.22\\
        \cellcolor{green!7}
        & Tokens ($\downarrow$) & 16652 & 14278 & 13614 & 17007 & 11159 & 10447 & 14694 & 10953 & 10311 & 13216\\
        [-0.4ex]\midrule\addlinespace[-0.000ex]

        \cellcolor{teal!7}
        & Utility ($\uparrow$)   & 0.5909 & 0.8069 & 0.5863 & 0.3580 & 0.4722 & 0.5686 & 0.7499 & 0.7004 & 0.6938 & 0.6273\\
        \cellcolor{teal!7}\textbf{MoA \citeyearpar{DBLP:conf/iclr/WangWAZZ25}}
        & Time ($\downarrow$)  & 90.15 & 56.95 & 69.32 & 59.10 & 46.53 & 44.31 & 57.35 & 48.29 & 47.34 & 57.75\\
        \cellcolor{teal!7}
        & Tokens ($\downarrow$) & 25327 & 16453 & 17332 & 15980 & 11014 & 10344 & 16114 & 11690 & 12365 & 15317\\
        [-0.4ex]\midrule\addlinespace[-0.000ex]

        \cellcolor{blue!7}
        & Utility ($\uparrow$)   & 0.5831 & 0.8057 & 0.5723 & \underline{0.3737} & 0.4490 & 0.6211 & 0.6923 & 0.6390 & 0.7180 & 0.6188\\
        \cellcolor{blue!7}\textbf{X-MAS \citeyearpar{DBLP:journals/corr/abs-2505-16997}}
        & Time ($\downarrow$)  & 104.48 & 86.63 & 79.06 & 88.20 & 67.94 & 59.76 & 75.50 & 72.82 & 62.95 & 77.42\\
        \cellcolor{blue!7}
        & Tokens ($\downarrow$) & 24149 & 19808 & 16584 & 18451 & 12549 & 11907 & 16499 & 14007 & 14056 & 16537\\
        [-0.4ex]\bottomrule

        \cellcolor{violet!7}
        & Utility ($\uparrow$)   & \textbf{0.6381} & \textbf{0.8742} & \underline{0.6899} & \textbf{0.3798} & \underline{0.5086} & \underline{0.6248} & \underline{0.7959} & 0.7284 & \textbf{0.7406} & \textbf{0.6761}\\
        \cellcolor{violet!7}\textbf{EywaMAS (Ours)}
        & Time ($\downarrow$)  & 77.25 & 75.96 & 72.51 & 111.92 & 59.97 & 59.23 & 68.40 & 58.11 & 46.49 & 72.11\\
        \cellcolor{violet!7}
        & Tokens ($\downarrow$) & 14529 & 11709 & 11787 & 16502 & 9407 & 8078 & 11044 & 9470 & 8912 & 11214\\
        
        \addlinespace[-0.4ex] \midrule \addlinespace[-0.000ex]
        \rowcolor{gray!16}
        \multicolumn{12}{c}{
            \rule{0pt}{1.1em}
            \textbf{\textit{Dynamic Orchestration}}
            \rule[-0.3em]{0pt}{1.1em}
        } \\
        [-0.4ex]\midrule\addlinespace[-0.000ex]

        \cellcolor{pink!7}
            & Utility ($\uparrow$)   & \underline{0.6249} & \underline{0.8711} & \textbf{0.7187} & 0.3682 & \textbf{0.5159} & \textbf{0.6319} & 0.7830 & \textbf{0.7388} & \underline{0.7298} & \underline{0.6746}\\
        \cellcolor{pink!7}\textbf{EywaOrchestra (Ours)}
            & Time ($\downarrow$)  & 61.78 & 39.92 & 75.47 & 67.88 & 45.38 & 45.94 & 49.13 & 34.18 & 28.80 & 48.16\\
        \cellcolor{pink!7}
            & Tokens ($\downarrow$) & 11535 & 7723 & 10810 & 11315 & 7050 & 6495 & 7117 & 7264 & 6892 & 8335\\
        [-0.4ex]\bottomrule\addlinespace[-0.000ex]

    \end{tabular}
    }
    % \vspace{-4pt}
\end{table}

\textbf{Evaluation metric.} \textit{Eywabench} uses a unified utility score $u \in [0,1]$ across all tasks, so results from different modalities are directly comparable.
For natural-language-centered tasks, utility is computed with a soft-match score between predictions and references; for time-series and tabular tasks, utility is derived from normalized prediction errors.
The benchmark reports per-domain mean utility and also mean utility over all tasks. Detailed metric definitions are provided in the Appendix \ref{sec:dataset_details}.

\subsection{Experimental Setup}
\textbf{Language models and foundation models.} We use gpt-5-nano as the default language model and we also evaluate other models from and beyond the GPT family in our later experiments.
We have two foundation models to build EywaAgents. Chronos \cite{DBLP:journals/tmlr/AnsariSTZMSSRPK24, DBLP:journals/corr/abs-2510-15821} is a general-purpose foundation model for time series. 
TabPFN \cite{DBLP:conf/iclr/Hollmann0EH23} is a transformer-based foundation model that uses in-context-learning to solve tabular prediction problems in a forward pass.
Neither foundation models provide a native language interface.

\textbf{Baseline methods.} We compare \textit{Eywa} against three baseline groups: (a) \emph{single-agent LLM baselines} built on the same backbone, including GPT \cite{DBLP:journals/corr/abs-2601-03267}, Gemini \cite{team2023gemini, comanici2025gemini}, and Claude \cite{claude} families of models; (b) \emph{homogeneous LLM-based multi-agent baselines}, including Refine \cite{DBLP:conf/nips/MadaanTGHGW0DPY23} and Debate \cite{DBLP:conf/icml/Du00TM24}, instantiated with the same backbone model; (c) \emph{heterogeneous LLM-based multi-agent baselines}, including Mixture-of-Agents (MoA) \cite{DBLP:conf/iclr/WangWAZZ25} and X-MAS \cite{DBLP:journals/corr/abs-2505-16997}, which combine multiple heterogeneous language models.

\textbf{Implementation details.} We implement \textit{Tsaheylu} (FM-LLM interface) using LangChain agents and FastMCP servers.
Each foundation model is deployed as an independent MCP backend, served over streamable HTTP on a local port.
Each \textit{EywaAgent} connects a language model to its designated MCP endpoint, loads task data into server-side storage, and invokes the foundation model on demand.
All methods are allowed up to two retries when their outputs cannot be parsed. In practice, Eywa rarely triggers this fallback because specialized foundation models already produce structured outputs in one shot, whereas LLM-only baselines benefit more from retries.
For all runs, we record wall-clock latency and token usage. The results are averaged over multiple runs on a 13th Gen Intel(R) Core(TM) i9-13900H CPU with 64GB RAM. 

\subsection{Main Results}

\begin{wrapfigure}{r}{0.45\textwidth}
  \begin{center}
  \vspace{-10pt}
    \includegraphics[width=0.45\textwidth]{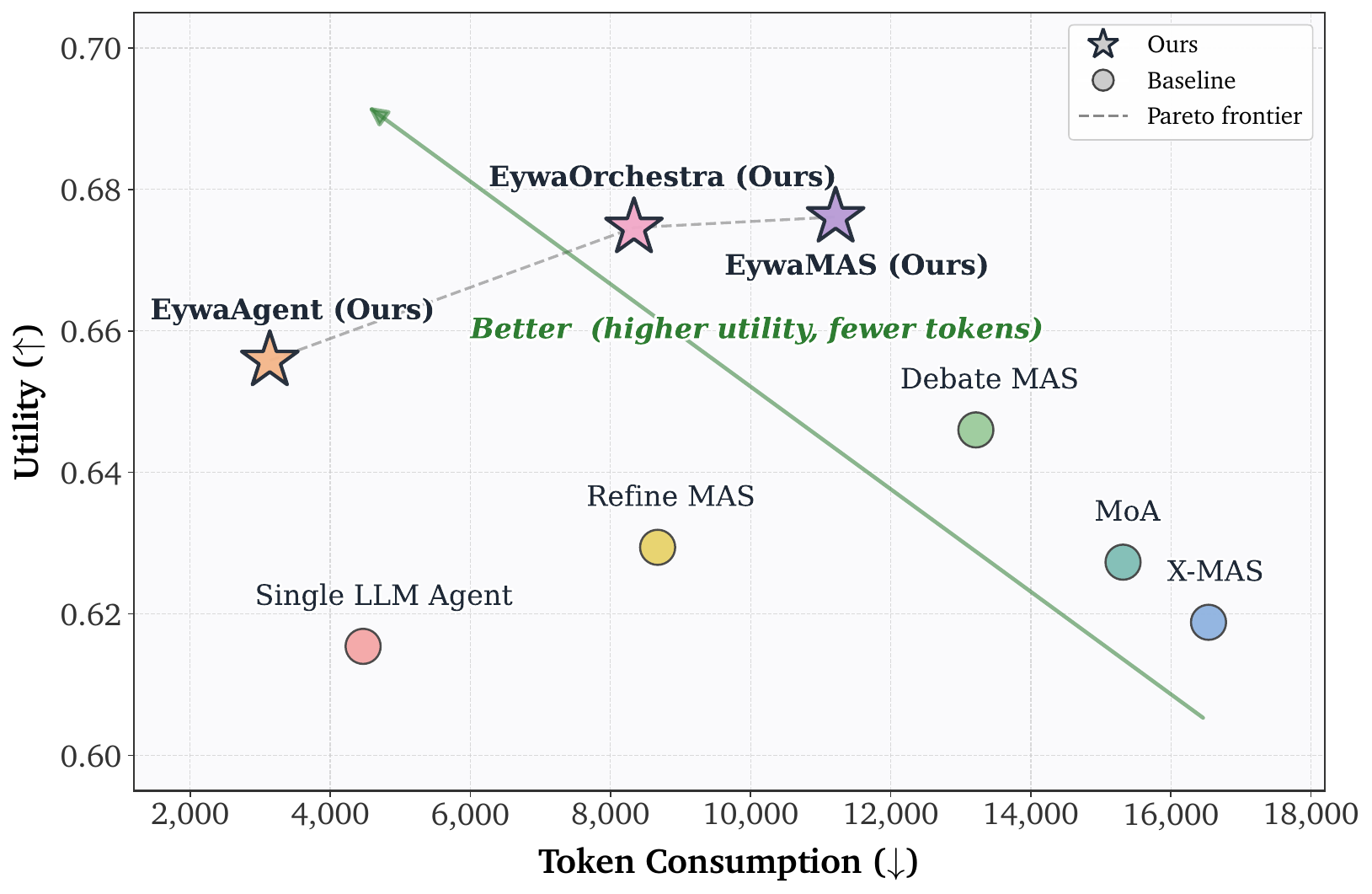}
  \end{center}
  \vspace{-16pt}
  \caption{Overall utility and token consumption of different methods. Full results in Figure \ref{fig:tradeoff_per_domain}. 
  }
  \vspace{-14pt}
\end{wrapfigure}

We evaluate all methods on \textit{Eywabench} under a unified protocol.
For \textit{EywaAgent} and \textit{EywaMAS}, which do not perform dynamic orchestration, we assign the foundation model for each sample using expert-defined configurations. \textit{EywaMAS} uses debate topology by default.
For \textit{EywaOrchestra}, the conductor automatically selects the foundation model and topology conditioned on the sample.

Table \ref{tab:main_comparison_eywabench} reports the performance of all methods on \textit{Eywabench} in scientific settings. We highlight the following observations: (a) \textit{EywaAgent} improves both quality and efficiency under the same backbone.
Compared with the corresponding single-agent baseline, \textit{EywaAgent} increases average utility by $6.6\%$, while reducing latency and cutting token usage by nearly $30\%$ through delegation to domain-specific foundation models.
(b) \textit{EywaMAS} outperforms homogeneous MAS baselines in scientific settings.
\textit{EywaMAS} achieves the best overall utility and outperforms homogeneous multi-agent baselines. Compared with \textit{Refine}, \textit{EywaMAS} delivers significantly stronger utility. Compared with \textit{Debate}, \textit{EywaMAS} not only achieves better utility but also requires fewer tokens under the same debate topology.
(c) {LLM-only heterogeneity is insufficient for scientific tasks.} Heterogeneous LLM-only MAS methods do not consistently outperform strong homogeneous MAS baselines on \textit{Eywabench}. This suggests that, for scientific workloads, cross-modality heterogeneity is more critical than only combining heterogeneous language models.
(d) {Not every domain benefits equally from heavier multi-agent computation.} In domains such as economy and business, single-agent \textit{EywaAgent} is already highly competitive, indicating that always using complex multi-agent topologies is not necessarily optimal. This observation motivates adaptive orchestration conditioned on task and domain characteristics.
(e) \textit{EywaOrchestra} approaches EywaMAS with lower cost and automation. \textit{EywaOrchestra} uses no expert configuration and instead lets the conductor automatically construct the system for each sample. Despite this, \textit{EywaOrchestra} reaches utility close to expert-designed \textit{EywaMAS}, and even surpasses it on several sub-domains. At the same time, dynamic orchestration substantially reduces inference cost in both latency and token usage compared to fixed multi-agent systems.

\subsection{Further Analysis}

\begin{figure*}[t]
  \begin{subfigure}[t]{0.32\linewidth}
    \centering
    \includegraphics[width=\linewidth]{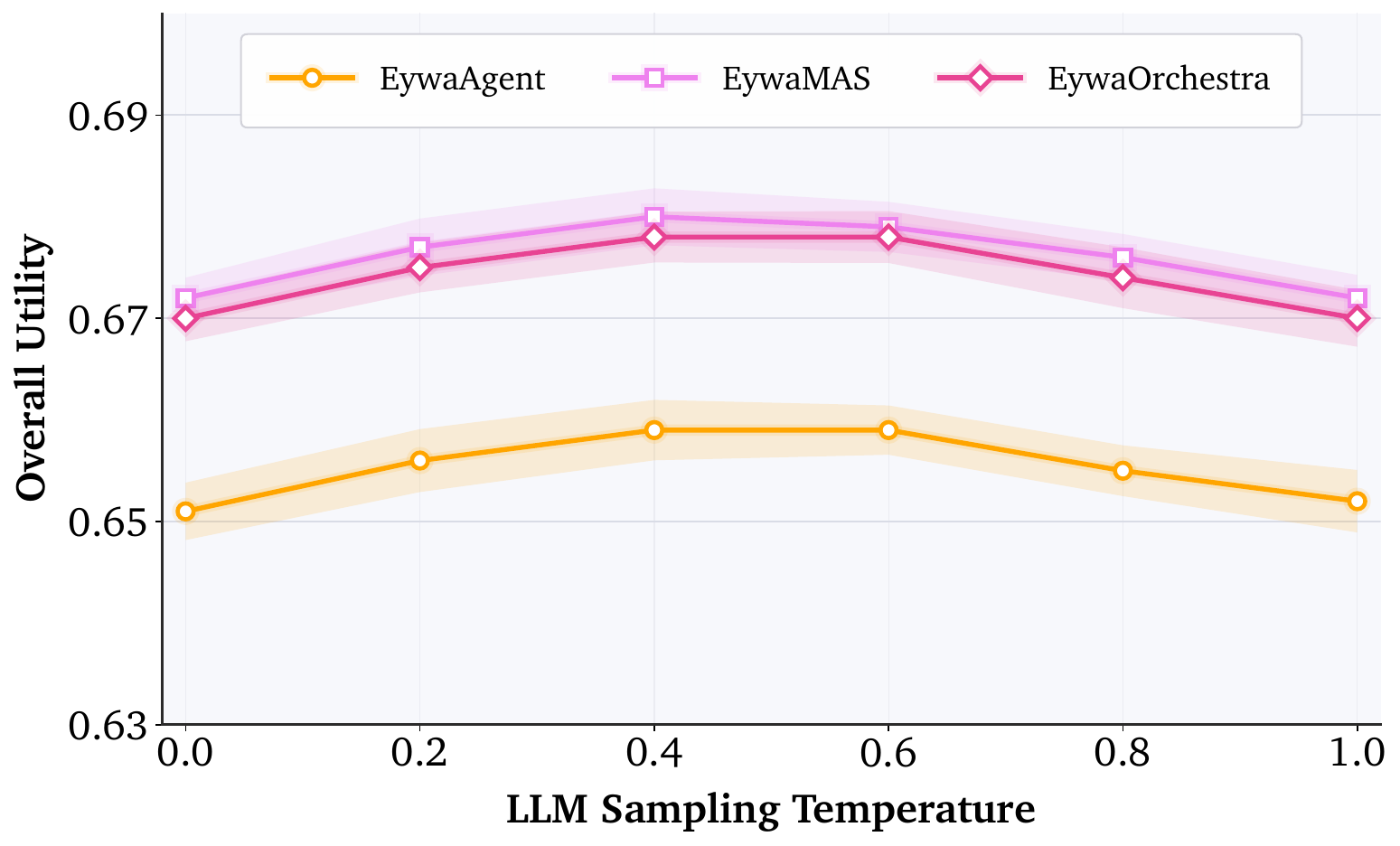}
    \caption{LLM Temperature Ablation.}
  \end{subfigure}\hfill
  \begin{subfigure}[t]{0.32\linewidth}
    \centering
    \includegraphics[width=\linewidth]{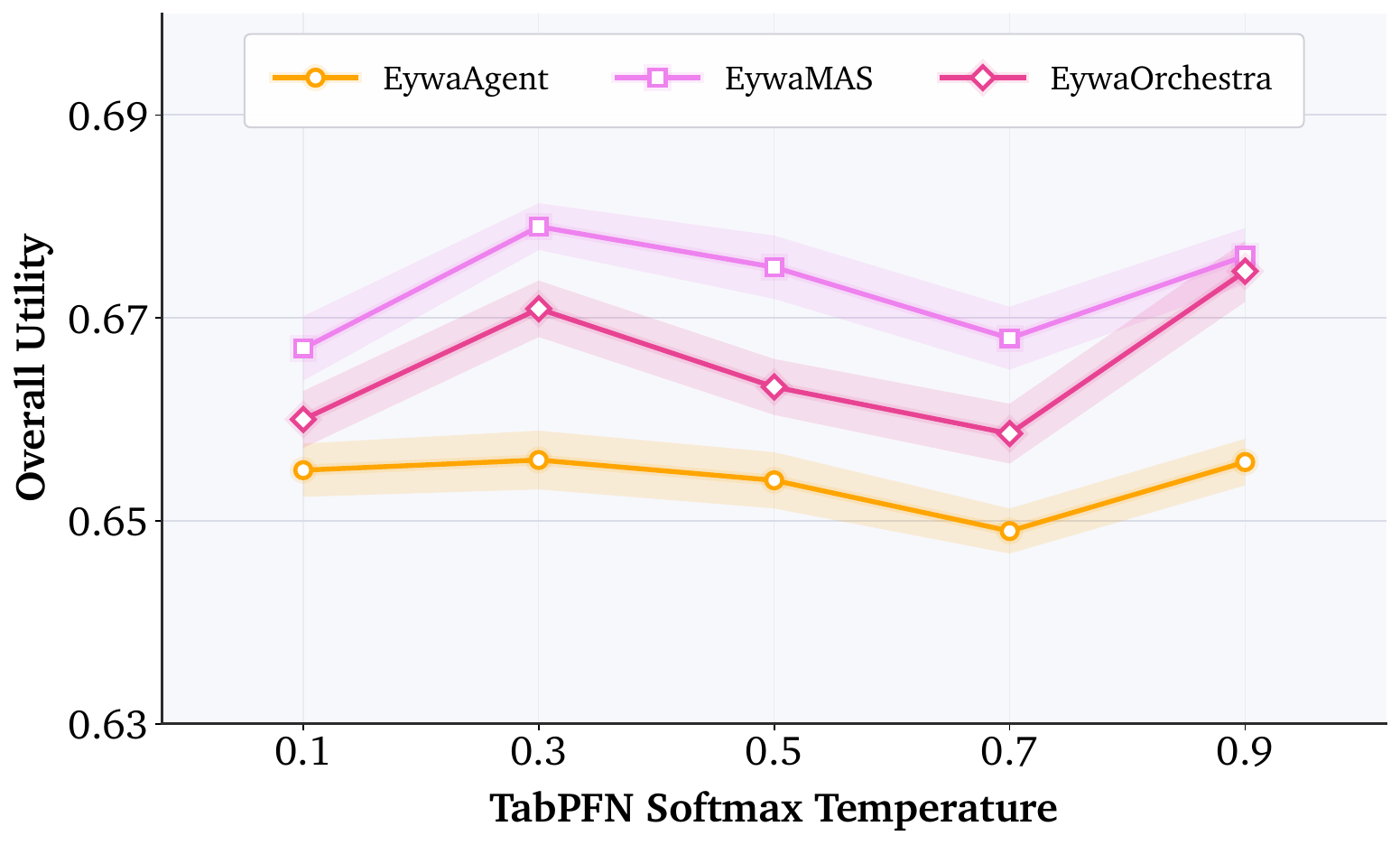}
    \caption{FM Temperature Ablation.}
  \end{subfigure}\hfill
  \begin{subfigure}[t]{0.32\linewidth}
    \centering
    \includegraphics[width=\linewidth]{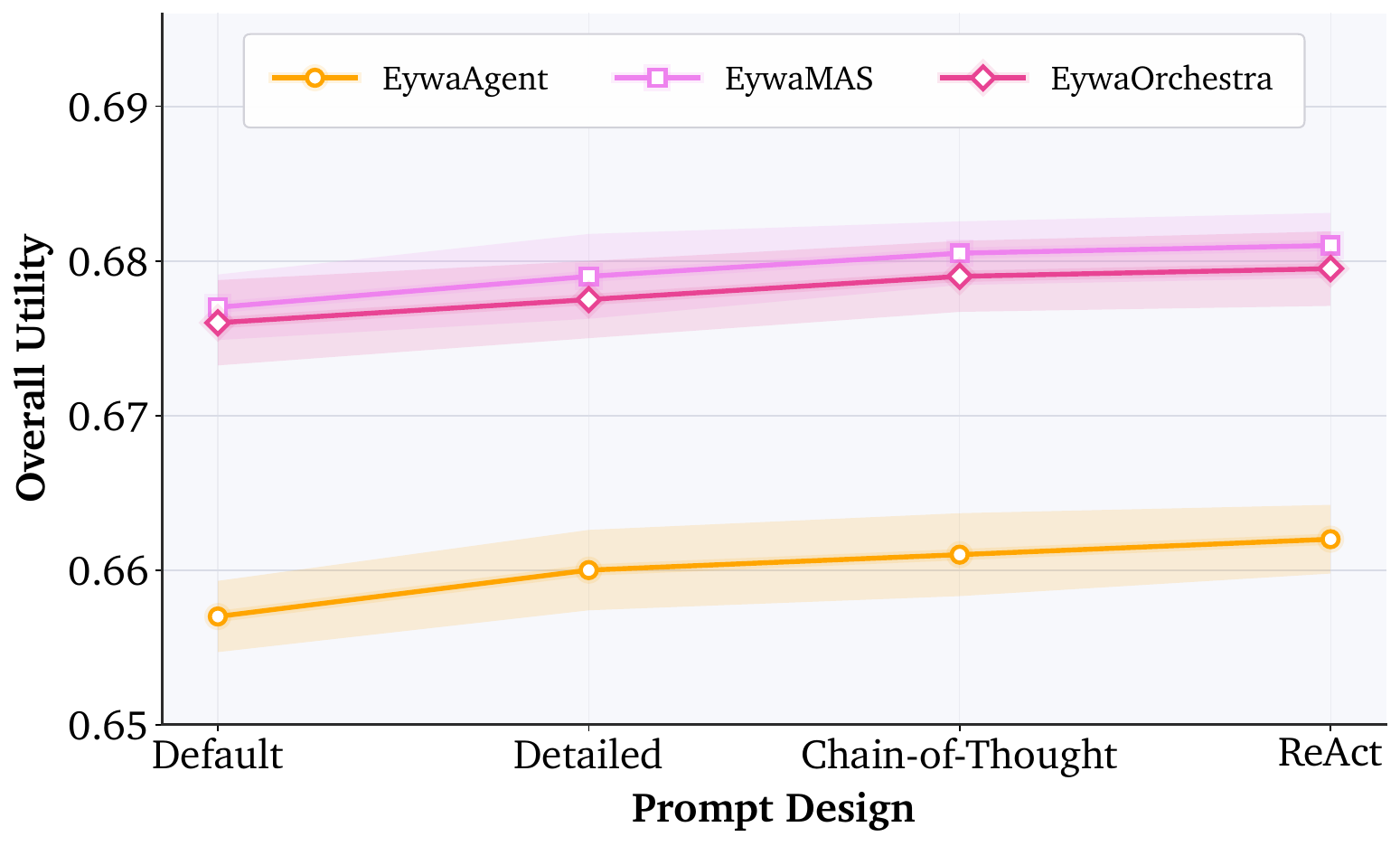}
    \caption{Prompt Design Ablation.}
  \end{subfigure}\\[4pt]

\caption{Hyperparameter sensitivity analysis of \textit{Eywa}. We evaluate \textit{EywaAgent}, \textit{EywaMAS}, and \textit{EywaOrchestra} under different ablation configurations. \textbf{(a)} LLM temperature ablation: \textit{Eywa} maintains stable performance across varying LLM sampling temperatures. \textbf{(b)} FM temperature ablation: \textit{Eywa} remains robust under different TabPFN softmax temperatures. \textbf{(c)} Prompt design ablation: \textit{Eywa} remains effective across different prompting strategies, and generally benefits from more structured prompt designs. Overall, the results demonstrate that \textit{Eywa} is compatible and robust to a broad range of design choices.}
\label{fig:hyperparameter_sensitivity}
\vspace{-3mm}
\end{figure*}

\textbf{Hyperparameter sensitivity.} We perform hyperparameter studies of both language models and foundation models to evaluate the robustness of \textit{Eywa}. Specifically, we vary the LLM sampling temperature, the foundation model temperature in TabPFN, and the prompt design used by the language-model interface. As shown in Figure~\ref{fig:hyperparameter_sensitivity}, \textit{Eywa} remains stable across a broad range of configurations. In particular, the performance of \textit{EywaAgent}, \textit{EywaMAS}, and \textit{EywaOrchestra} stays robust as the LLM sampling temperature varies, while consistently peaking around a moderate temperature. Ablation on varying the TabPFN softmax temperature also suggests that the benefit of integrating domain-specific foundation models is robust to foundation-model calibration. We further compare several prompt designs. The \textit{Detailed} prompt provides more comprehensive task descriptions and general guidance, \textit{Chain-of-Thought} \cite{DBLP:conf/nips/Wei0SBIXCLZ22} encourages the model to reason step by step, and \textit{ReAct} \cite{DBLP:conf/iclr/YaoZYDSN023} further interleaves reasoning with actions, enabling the model to decide more explicitly when to invoke tools or domain-specific foundation models during problem solving. While \textit{Eywa} performs well under all prompting strategies, more structured prompts generally lead to slightly better utility. Overall, these results demonstrate that the gains of \textit{Eywa} are not tied to a particular hyperparameter choice or prompt template. Additional ablation studies such as turn ablation are provided in Appendix~\ref{ap:more_ablation}.

\renewcommand{\arraystretch}{1.00}
\begin{wraptable}{r}{0.45\linewidth}
    \centering
    \caption{Ablation results of \textit{EywaAgent} with different LLM backends. \textit{Eywa} benefits from more powerful LLMs to achieve better performance.}
    % \vspace{-8pt}
    \label{tab:model_ablation_compact}
    \resizebox{0.45\textwidth}{!}{
    \begin{tabular}{l|c|cccc}
        \toprule
        \textbf{Method} & \textbf{Metrics} & \textbf{Physical} & \textbf{Life} & \textbf{Social} & \textbf{Overall}\\
        \midrule[-0.4ex]\midrule\addlinespace[-0.000ex]

        \multirow{3}{*}{\makecell{gpt-4.1-nano}}
        & Utility ($\uparrow$) & 0.6547 & 0.4010 & 0.6269 & 0.5680\\
        & Time ($\downarrow$)  & 14.25 & 28.66 & 16.89 & 19.61\\
        & Tokens ($\downarrow$) & 1314 & 927 & 1160 & 1139\\
        [-0.4ex]\midrule\addlinespace[-0.000ex]

        \multirow{3}{*}{\makecell{gpt-5-nano}}
        & Utility ($\uparrow$) & 0.6914 & 0.5001 & 0.7488 & 0.6558\\
        & Time ($\downarrow$)  & 28.04 & 22.33 & 18.71 & 22.78\\
        & Tokens ($\downarrow$) & 3907 & 3134 & 2491 & 3137\\
        [-0.4ex]\midrule\addlinespace[-0.000ex]

        \multirow{3}{*}{\makecell{gpt-5-mini}}
        & Utility ($\uparrow$) & 0.7191 & 0.5035 & 0.7444 & 0.6640\\
        & Time ($\downarrow$)  & 23.18 & 30.70 & 18.51 & 23.63\\
        & Tokens ($\downarrow$) & 2875 & 2617 & 1949 & 2444\\
        [-0.4ex]\bottomrule
    \end{tabular}
    }
    \vspace{-6pt}
\end{wraptable}

\textbf{Ablation on \textit{Eywa} with different LLM backends.} 
To evaluate the impact of LLM backbone choice, we instantiate \textit{EywaAgent} with three backends: gpt-4.1-nano, gpt-5-nano, and gpt-5-mini.
As shown in Table \ref{tab:model_ablation_compact}, \textit{Eywa} remains consistently effective across physical, life, and social science domains under all three backbones.
We also observe a clear upward trend in overall utility as backend capability increases, indicating that \textit{Eywa} is robust to backbone selection while still benefiting from stronger LLM priors.
Complete backbone ablations for \textit{EywaMAS} and \textit{EywaOrchestra} are reported in Appendix Table \ref{tab:ablation_backend_llm}.
A systematic investigation of scaling behavior and cost-performance trade-offs across LLM backbones remains important future work.

\textbf{Case study.} We provide qualitative case studies to further illustrate how \textit{Eywa} coordinates language models and domain-specific foundation models. Case Study A compares a language-only LLM agent with \textit{EywaAgent} on the same task over structured financial signals. Case Study B further illustrates the role of \textit{EywaOrchestra}. Due to space limit, detailed examples and analyses are deferred to Appendix~\ref{ap:case_study}.

\section{Related Work}

In this section, we review the key related works on the topics that are closely related to this work. We put more related works and discussions in the Appendix \ref{ap:related_work} to keep the main text concise.

\textbf{Scientific Large Language Models and Foundation Models.} 
Scientific large language models have emerged as powerful tools for scientific knowledge understanding, quantitative reasoning, and domain-specific question answering~\citep{hu2025survey, zhang2024comprehensive}. 
Existing scientific LLMs are developed through the following paradigms: general scientific pretraining on large-scale scientific corpora~\citep{taylor2022galactica,lewkowycz2022solving}; domain-specific adaptation to specialized disciplines (e.g., biomedicine~\citep{luo2022biogpt}, medicine~\citep{singhal2023large}, chemistry~\citep{zhang2024chemllm,yu2024llasmol}); and scientific agentic workflows that enhance multi-step reasoning and problem solving~\citep{zhang2024sciglm, boiko2023autonomous, ghafarollahi2025sciagents}. 
However, these scientific LLMs remain language-centric, where scientific data are interpreted as descriptions or symbolic sequences, which is suboptimal for structured information \cite{jin2023time, sui2024table, sadeghi2024can}.
Therefore, \textit{Eywa} takes a complementary direction by using LLMs as reasoning interfaces to coordinate domain-specific models that operate on native scientific representations.

Alongside language models, domain-specific foundation models have rapidly advanced across scientific disciplines~\cite{DBLP:journals/nn/MenonMBPJKJ26,DBLP:journals/corr/abs-2108-07258}.
Time series foundation models~\cite{DBLP:journals/tmlr/AnsariSTZMSSRPK24,DBLP:conf/icml/DasKSZ24, DBLP:conf/icml/WooLKXSS24,DBLP:journals/corr/abs-2310-08278} pretrained on large-scale temporal corpora can achieve competitive zero-shot forecasting without task-specific training~\cite{DBLP:journals/corr/abs-2504-04011}.
For tabular data, TabPFN~\cite{DBLP:conf/iclr/Hollmann0EH23} and its successors~\cite{DBLP:journals/nature/HollmannMPKKHSH25,DBLP:journals/corr/abs-2511-08667} leverage in-context learning to solve prediction tasks and outperform tuned tree-based ensembles.
The broader ecosystem of domain-specific foundation models further extends to even more specialized scientific representations. In materials science, universal machine-learned interatomic potentials such as GNoME~\cite{merchant2023scaling}, MACE-MP-0~\cite{batatia2025foundation}, and CHGNet~\cite{DBLP:journals/natmi/DengZJRHBC23} are trained on broad crystallographic databases and generalize across the periodic table.
In weather and climate, GraphCast ~\cite{DBLP:journals/corr/abs-2212-12794}, Pangu-Weather~\cite{journals/nature/BiXZCG023}, Aurora~\cite{DBLP:journals/nature/BodnarBLSABGRWD25}, and GenCast~\cite{DBLP:journals/nature/PriceSAAEMESMBLW25} have matched or surpassed operational numerical weather prediction systems for medium-range forecasting~\cite{DBLP:journals/corr/abs-2312-03014}.
In life sciences, AlphaFold~\cite{jumper2021highly} and the ESM family~\cite{lin2023evolutionary,hayes2025simulating} have transformed protein structure prediction and design through large-scale pretraining on evolutionary sequences.
Despite their strong domain-specific capabilities, these models typically operate over domain-specific representations and do not expose native language interfaces, making their direct integration into language-centric agentic systems nontrivial. Bridging this interface gap is a central motivation of \textit{Eywa}.

\textbf{Agentic Systems in Scientific Settings.}
In scientific workflows, agentic systems have been applied to tasks such as hypothesis generation, literature synthesis, and experimental design, often under fixed agent topologies~\cite{wei2025ai,ghafarollahi2025sciagents,liu2025genomas}.
More recently, end-to-end agentic systems have been explored for automating the AI research process itself~\cite{lu2026towards}.
Another line of work augments agents with domain-specific tools by exposing complex simulators, solvers, or expert systems through external APIs when available~\cite{kim-etal-2025-mt,inoue2025drugagent}.
However, such systems often rely on human experts to predefine the task-solving procedure and implement well-specified tools before agent execution \cite{DBLP:conf/naacl/YuanSCTSRLY25}.
As a result, this solution is plausible for specific tasks with predefined workflows \cite{DBLP:journals/corr/abs-2510-04017}, but is difficult to generalize across scientific domains or support dynamic model collaboration.
\textit{Eywa} addresses this gap by enabling on-the-fly scientific computation through active modality-native collaboration between LLMs and domain-specific foundation models.

\section{Conclusion}

Language-centric agentic systems are strong general problem solvers, but their reliance on textual interfaces limits their applicability to scientific tasks with structured, non-linguistic data.
In this work, we propose \textit{Eywa}, a heterogeneous framework that connects language reasoning and domain-specific foundation models through the FM--LLM ``Tsaheylu'' interface, and instantiate it as \textit{EywaAgent}, \textit{EywaMAS}, and \textit{EywaOrchestra}.
We further introduce \textit{EywaBench} for multi-task, multi-domain evaluation in scientific settings.
Experiments across physical, life, and social science tasks show that \textit{Eywa} improves utility while reducing token usage and inference cost, demonstrating the value of modality-native collaboration in agentic systems.

% Language-centric agentic systems are powerful general problem solvers, yet their dependence on textual interfaces limits performance on scientific tasks with non-linguistic and structured data.
% In this work, we present \textit{Eywa}, a heterogeneous agentic framework that bridges language reasoning and domain-specialized foundation models through the FM--LLM ``Tsaheylu'' interface.
% Built on this principle, \textit{EywaAgent} enables generalized reasoning with specialized acting, \textit{EywaMAS} extends this capability to plug-and-play heterogeneous multi-agent collaboration, and \textit{EywaOrchestra} further introduces task-adaptive orchestration over agent types, models, and communication topologies.
% To systematically evaluate these settings, we introduce \textit{EywaBench}, a scalable multi-task multi-domain benchmark spanning physical, life, and social sciences.
% Extensive experiments show that \textit{Eywa} consistently improves utility while reducing token usage and inference cost, highlighting the importance of modality-native collaboration beyond LLM-only heterogeneity.
% Overall, our findings suggest that integrating domain-specific foundation models as first-class participants is a practical path toward more capable and efficient scientific agentic systems.

\clearpage

\bibliographystyle{unsrtnat}
\bibliography{reference}

\appendix
\clearpage

{\color{LARGBlue}{\textbf{\LARGE Appendix}}}

\textbf{Roadmap.} 
In this appendix, we provide a detailed overview of our methodology and experimental setup. 
Appendix~\ref{app:theory} presents the full theoretical analysis of \textit{Eywa}, covering the language interface bottleneck, the expressivity and solvability guarantees of \textit{EywaAgent}, the generalization to \textit{EywaMAS}, the theory of \textit{EywaOrchestra} as adaptive orchestration, and a token-complexity analysis.
Appendix~\ref{app:further_discussion} provides further discussions on additional related work, scope and limitations, and potential extensions of \textit{Eywa}.
Appendix~\ref{sec:dataset_details} describes \textit{EywaBench} in detail, including domain coverage, modality composition, source datasets, construction pipeline, evaluation metrics, and a statistical analysis of the resulting benchmark.
Finally, Appendix~\ref{app:experiment_details} reports additional experimental results, including more detailed LLM-backbone ablations, qualitative case studies, and per-domain utility-token trade-off analyses across all nine scientific sub-domains. Appendix~\ref{app:prompts} contains LLM prompt examples.
The table of contents is provided below for quick navigation.

\renewcommand\contentsname{\Large Table of Contents}

\addtocontents{toc}{\protect\setcounter{tocdepth}{2}}

\tableofcontents
\clearpage

\section{Theoretical Analysis}
\label{app:theory}
In this section, we conduct a detailed theoretical analysis of our \textit{Eywa} framework.
We first unify the notation used across the main paper and the appendix, and collect a complete list of assumptions (Section~\ref{app:notation}). We then develop the theory along four complementary axes. Section~\ref{app:info} establishes an information-theoretic foundation that characterizes the \emph{language interface bottleneck}, providing a first-principles justification for the Domain Advantage assumption of the main paper. Building on this foundation, Section~\ref{app:expressivity} proves the expressivity and solvability guarantees of \textit{EywaAgent}, including the proof of Theorem~\ref{thm:eywaagent improvement}. Section~\ref{app:mas} generalizes the analysis to \textit{EywaMAS}, formalizing how the single-agent advantage propagates through communication graphs. Section~\ref{app:orchestra} develops the theory of \textit{EywaOrchestra} as adaptive orchestration over heterogeneous experts.
Finally, Section ~\ref{app:efficiency} provides a token-complexity analysis that corroborates our empirical efficiency gains. Table~\ref{tab:theory_summary} at the end of this section provides a consolidated summary of all theoretical results and their dependencies.

\subsection{Notation, Problem Setup, and Assumptions}
\label{app:notation}
\paragraph{Notations.}
We collect, in Table~\ref{tab:notation}, the notation used throughout the main text and this appendix.

\begin{table}[h]
\centering
\caption{Summary of notation used in the main paper and this appendix.}
\label{tab:notation}
\small
\renewcommand{\arraystretch}{1.15}
\begin{tabular}{@{}ll@{}}
\toprule
\textbf{Symbol} & \textbf{Meaning} \\
\midrule
\addlinespace[2pt]
\multicolumn{2}{@{}l@{}}{\textbf{\textit{Tasks and Data:}}} \\
\addlinespace[2pt]
$\mathcal{T}$ & Family of tasks with an underlying distribution. \\
$\tau=(q,x,y^\star,\ell)$ & Task instance: instruction, input, target, loss. \\
$\mathcal{Q},\mathcal{X},\mathcal{Y}$ & Instruction, input, and output spaces. \\
$\ell:\mathcal{Y}\times\mathcal{Y}\to\mathbb{R}_{\ge 0}$ & Task-specific loss. \\
$\mathcal{X}=\mathcal{X}_{\mathrm{lng}}\times \mathcal{X}_1\times\cdots\times\mathcal{X}_m$ & Input factorization into linguistic and modality-specific parts. \\
$\pi_k:\mathcal{X}\to\mathcal{X}_k$ & Projection onto the $k$-th domain component, $x_k=\pi_k(x)$. \\
$T_k:\mathcal{X}_k\to\mathcal{X}_{\mathrm{lng}}$ & Serialization map rendering $x_k$ into language tokens. \\
\midrule
\addlinespace[2pt]
\multicolumn{2}{@{}l@{}}{\textbf{\textit{Agents and foundation models:}}} \\
\addlinespace[2pt]
$A_{\mathrm{LLM}}:\mathcal{S}\to\Delta(\mathcal{M})$ & LLM agent: policy over response space $\mathcal{M}$. \\
$F_k:\mathcal{X}_k\times\mathcal{U}_k\to\mathcal{O}_k$ & Domain-specific foundation model for domain $k$. \\
$\phi_k:\mathcal{S}\to\mathcal{U}_k$ & Query compiler (language state $\to$ FM configuration). \\
$\psi_k:\mathcal{O}_k\to\mathcal{Z}_k$ & Response adapter (FM output $\to$ language-compatible context). \\
$A_{\mathrm{eywa}}=(A_{\mathrm{LLM}},F,\phi,\psi,\mathcal{C})$ & EywaAgent. \\
$\mathcal{C}:\mathcal{S}\to\{\texttt{invoke},\texttt{skip}\}$ & Control policy deciding whether to invoke the FM. \\
\midrule
\addlinespace[2pt]
\multicolumn{2}{@{}l@{}}{\textbf{\textit{Multi-agent systems and orchestration:}}} \\
\addlinespace[2pt]
$\mathcal{M}_{\mathrm{MAS}}=(\mathcal{A},\mathcal{G})$ & Multi-agent system with agent set $\mathcal{A}$ and topology $\mathcal{G}$. \\
$\mathcal{N}(i)$ & Neighbors of agent $i$ in the communication graph $\mathcal{G}$. \\
$\mathcal{M}_{\mathrm{Eywa}}$ & EywaMAS: MAS whose agents may be LLM agents or EywaAgents. \\
$\mathcal{O}_{\mathrm{orch}}=(\mathcal{C}_{\mathrm{cfg}},P)$ & EywaOrchestra: configuration space and conductor. \\
$P:\mathcal{Q}\times\mathcal{X}\to\Delta(\mathcal{C}_{\mathrm{cfg}})$ & Conductor mapping tasks to system configurations. \\
$\Pi$ & Finite topology pool. \\
$\mathcal{M}_{\mathrm{LLM}},\mathcal{M}_{\mathrm{FM}}$ & Candidate LLM and FM pools. \\
\midrule
\addlinespace[2pt]
\multicolumn{2}{@{}l@{}}{\textbf{\textit{Function classes and risk:}}} \\
\addlinespace[2pt]
$\mathcal{F}_{\mathrm{LLM}},\mathcal{F}_{\mathrm{Eywa}}$ & Function classes induced by LLM-only and EywaAgents. \\
$\mathcal{F}_{\mathrm{LLM\text{-}MAS}},\mathcal{F}_{\mathrm{Eywa\text{-}MAS}}$ & Function classes induced by LLM-only MAS and EywaMAS. \\
$\mathcal{F}_{\mathrm{Orch}}$ & Function class induced by EywaOrchestra. \\
$\mathcal{R}(f)=\mathbb{E}_{\tau}[\ell(f(x),y^\star)]$ & Population risk. \\
$\mathcal{R}_{\mathcal{F}}^\star=\inf_{f\in\mathcal{F}}\mathcal{R}(f)$ & Minimum population risk over class $\mathcal{F}$. \\
$\hat{\mathcal{R}}_N(f)$ & Empirical risk over $N$ samples. \\
\midrule
\addlinespace[2pt]
\multicolumn{2}{@{}l@{}}{\textbf{\textit{Information-theoretic quantities:}}} \\
\addlinespace[2pt]
$H(Y),\;H(Y\mid Z)$ & Entropy and conditional entropy. \\
$I(Y;Z)$ & Mutual information between $Y$ and $Z$. \\
$\mathbb{E}[Y\mid Z]$ & Conditional expectation. \\
\bottomrule
\end{tabular}
\end{table}

We first formalize the function classes that appear throughout this section. A function class is simply a set of input-output mappings. Once we specify what components a system may use and how they may interact, we obtain the collection of all functions $f:\mathcal{X}\to\mathcal{Y}$ that can be implemented by such a system. This plays a similar role as expressive families in neural networks: \textbf{a larger function class means the system can represent a broader range of behaviors}.

\begin{definition}[Induced Function Classes]
\label{def:function_classes}
Given candidate LLM pool $\mathcal{M}_{\mathrm{LLM}}$, candidate FM pool $\mathcal{M}_{\mathrm{FM}}$, topology pool $\Pi$, and a family of admissible interface pairs $\{(\phi_k,\psi_k)\}$, we define:
\begin{itemize}
    \item $\mathcal{F}_{\mathrm{LLM}}=\{f:\mathcal{X}\to\mathcal{Y}\mid f(x)=A_{\mathrm{LLM}}(\mathrm{serialize}(x)),\; A_{\mathrm{LLM}}\in\mathcal{M}_{\mathrm{LLM}}\}$;
    \item $\mathcal{F}_{\mathrm{Eywa}}=\{f:\mathcal{X}\to\mathcal{Y}\mid f \text{ is implemented by some } A_{\mathrm{eywa}}=(A_{\mathrm{LLM}},F,\phi,\psi,\mathcal{C})\}$;
    \item $\mathcal{F}_{\mathrm{LLM\text{-}MAS}},\mathcal{F}_{\mathrm{Eywa\text{-}MAS}}$ are analogously defined at the system level over topologies $\mathcal{G}\in\Pi$;
    \item $\mathcal{F}_{\mathrm{Orch}}=\{f:\mathcal{X}\to\mathcal{Y}\mid f(x)=F_{c}(q,x),\;c\sim P(\cdot\mid q,x)\text{ for some conductor }P\}$.
\end{itemize}
\end{definition}
By construction, $\mathcal{F}_{\mathrm{LLM}}\subseteq\mathcal{F}_{\mathrm{Eywa}}\subseteq\mathcal{F}_{\mathrm{Eywa\text{-}MAS}}\subseteq\mathcal{F}_{\mathrm{Orch}}$. This is because we can (1) choose an EywaAgent whose control policy never invokes the FM. Then the EywaAgent reduces exactly to an LLM; (2) choose a topology consisting of a single agent node and no nontrivial communication. Then the multi-agent system reduces to the original EywaAgent; (3) choose a conductor P that places all its probability mass on one fixed configuration corresponding to the given EywaMAS system. Then EywaOrchestra reproduces exactly that fixed system.

\clearpage

\paragraph{Assumptions.}
All subsequent theorems are derived under a common set of regularity and structural assumptions, which we collect here for clarity.

\begin{assumption}[Task Factorization]
\label{assumption:task_factorization}
For scientific tasks, the input space factorizes as
\begin{equation}
    \mathcal{X} = \mathcal{X}_{\mathrm{lng}} \times \mathcal{X}_{1} \times \cdots \times \mathcal{X}_{m},
\end{equation}
where $\mathcal{X}_{\mathrm{lng}}$ denotes language-observable context and each $\mathcal{X}_k$ denotes a domain-specific input.
we further assume that the task loss is compatible with this factorization: there exist component-wise losses $\ell_{\mathrm{lng}}$ and $\ell_k$ for $k=1,\dots,m$, together with a coordinate-wise nondecreasing aggregation function
\begin{equation}
\Gamma:\mathbb{R}_{\ge 0}^{m+1}\to\mathbb{R}_{\ge 0},
\end{equation}
such that
\begin{equation}
    \ell(\hat y,y^\star)
    =
    \Gamma\!\big(\ell_{\mathrm{lng}}(\hat y,y^\star), \ell_1(\hat y,y^\star), \dots, \ell_m(\hat y,y^\star)\big).
\end{equation}
Moreover, if two predictors are identical on all components except $k$, then a strict improvement in $\ell_k$ induces a strict improvement in the overall loss $\ell$.
\end{assumption}

\begin{assumption}[Domain Advantage of Foundation Models; extended statement of Assumption~\ref{assumption: domain advantage}]
\label{assumption:domain_advantage_app}
Let $\pi_k:\mathcal{X}\to\mathcal{X}_k$ denote the projection onto the $k$-th domain component with $x_k=\pi_k(x)$. For any task instance $\tau=(q,x,y^\star,\ell)\sim\mathcal{T}$ with an informative component $x_k$,
\begin{equation}
\mathbb{E}_{\tau\sim\mathcal{T}}[\ell_k(F_k(x_k),y^\star)]
\;<\;
\inf_{A_{\mathrm{LLM}}}\mathbb{E}_{\tau\sim\mathcal{T}}\big[\ell_k\big(A_{\mathrm{LLM}}(\mathrm{serialize}(x_k)),y^\star\big)\big].
\end{equation}
Moreover, there exists a non-empty task family $\mathcal{T}_1\subseteq\mathcal{T}$ such that (1) the $k$-th component is sufficient for the task on $\mathcal{T}_1$, and (2) the foundation model solves this component perfectly, while every language-only agent incurs strictly positive loss on the serialized input. Formally,
\begin{equation}
\mathbb{E}_{\tau\sim\mathcal{T}_1}[\ell_k(F_k(x_k),y^\star)] = 0,
\qquad
\inf_{A_{\mathrm{LLM}}}
\mathbb{E}_{\tau\sim\mathcal{T}_1}
\big[\ell_k\big(A_{\mathrm{LLM}}(\mathrm{serialize}(x_k)),y^\star\big)\big] > 0.
\end{equation}
\end{assumption}

\begin{assumption}[Non-Degenerate Serialization]
\label{assumption:nondeg_serialization}
There exists at least one domain index $k\in\{1,\dots,m\}$ and a set $E\subseteq\mathcal{X}_k$ of positive measure such that the serialization map $T_k:\mathcal{X}_k\to\mathcal{X}_{\mathrm{lng}}$ is not sufficient for the target on $E$, i.e.,
\begin{equation}
\mathbb{E}[Y\mid X_k]\neq \mathbb{E}[Y\mid T_k(X_k)] \quad \text{on a subset of positive probability within }E,
\end{equation}
where $Y$ denotes the task-relevant target variable.
\end{assumption}

Assumption~\ref{assumption:nondeg_serialization} ensures that the serialization genuinely discards task-relevant information, which is the precise condition under which heterogeneity delivers strict gains. Assumption~\ref{assumption:domain_advantage_app} is implied by Assumption~\ref{assumption:nondeg_serialization} under a specific loss (Section~\ref{app:info}), but we state them separately to align with the main paper.

\begin{assumption}[Performance-Preserving Interface]
\label{assumption:faithful_interface}
Fix a domain index $k$. For any admissible interface pair $(\phi_k,\psi_k)$ used in an EywaAgent, whenever the agent invokes the foundation model $F_k$, the interface does not degrade the performance of the foundation model and is better than the performance of a language-only Agent. Concretely,
\[
\mathbb{E}_{\tau\sim\mathcal{T}}\!\left[\ell\big(f_{\mathrm{Eywa}}(x),y^\star\big)\right]
\;\le\;
\mathbb{E}_{\tau\sim\mathcal{T}}\!\left[\ell\big(F_k(x_k),y^\star\big)\right].
\]
\end{assumption}

Assumption~\ref{assumption:faithful_interface} formalizes the intuition that the FM--LLM ``Tsaheylu'' interface is faithful: a correctly configured call to $F_k$ recovers the Bayes-optimal conditional expectation given the domain-specific input.

\subsection{Information-Theoretic Analysis of the Language Interface Bottleneck}
\label{app:info}

We begin with an information-theoretic characterization of why serializing a non-linguistic input $X_k$ into language tokens fundamentally limits the expressivity of language-only agents. All results in this subsection are derived for a single modality $k$; we drop the subscript for brevity and write $X:=X_k$, $T:=T_k$.

\begin{lemma}[Data Processing Inequality for Serialization]
\label{lem:dpi}
Let $(X,Y)$ be jointly distributed random variables and let $T:\mathcal{X}\to\mathcal{X}_{\mathrm{lng}}$ be any measurable serializer. Then
\begin{equation}
I(Y;T(X))\;\le\;I(Y;X),
\end{equation}
with equality if and only if $I(Y;X\mid T(X))=0$.
\end{lemma}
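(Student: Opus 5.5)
We will use the chain rule for mutual information, applied twice to the triple $(Y, X, T(X))$. Since $T$ is a deterministic measurable function of $X$, the pair $(X, T(X))$ carries exactly the same information as $X$ alone. Hence $I(Y; X, T(X)) = I(Y; X)$. This identity is the only place where the structure of $T$ enters, so it should be justified first. The $\sigma$-algebra generated by $(X,T(X))$ coincides with that generated by $X$, because $T$ is measurable. Equivalently, the conditional law of $T(X)$ given $X$ is a point mass, so $I(Y; T(X)\mid X)=0$.

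Next we expand $I(Y; X, T(X))$ in the two possible orders:
\begin{equation*}
I(Y; X, T(X)) \;=\; I(Y; X) + I(Y; T(X)\mid X) \;=\; I(Y; T(X)) + I(Y; X \mid T(X)).
\end{equation*}
The first expansion reduces to $I(Y;X)$ because, as noted, $I(Y;T(X)\mid X)=0$. Equating the two expansions gives
\begin{equation*}
I(Y;X) = I(Y;T(X)) + I(Y;X\mid T(X)).
\end{equation*}
Conditional mutual information is nonnegative, being an expected Kullback--Leibler divergence. The inequality $I(Y;T(X))\le I(Y;X)$ therefore follows immediately.

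The equality characterization comes out of the same identity. Equality holds exactly when $I(Y;X\mid T(X))=0$, which is the condition in the statement. It can be read as the Markov chain $Y - T(X) - X$, i.e.\ $T(X)$ is a sufficient statistic of $X$ for $Y$. If $I(Y;X)=\infty$ the equality clause is degenerate, so we assume finite mutual information, as is standard.

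The main obstacle is rigor in general (non-discrete) spaces rather than the algebra itself. In that setting the chain rule must be stated via the Kolmogorov/Dobrushin definition of mutual information, as a supremum over finite partitions or as a KL divergence between the joint law and the product of marginals. We will cite the general chain rule for mutual information, valid for arbitrary standard Borel spaces (e.g.\ Pinsker's or Gray's treatment), rather than reprove it. The remaining fact $I(Y;T(X)\mid X)=0$ then follows directly from the regular conditional distribution of $T(X)$ given $X$ being degenerate.
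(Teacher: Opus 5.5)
Your argument is correct. It is the standard proof of the data processing inequality written out in full, whereas the paper simply invokes the classical DPI for the Markov chain $Y\to X\to T(X)$ (because $T(X)$ is computed from $X$) and then restates the equality condition in words without deriving it. Your two-way chain-rule expansion yields the exact identity $I(Y;X)=I(Y;T(X))+I(Y;X\mid T(X))$. That identity actually proves the equality clause, rather than asserting it, and identifies it as the reverse Markov chain $Y\to T(X)\to X$, i.e.\ sufficiency of $T(X)$. Your finiteness caveat is also genuinely needed, and the paper omits it. Take $X=Y=(U,B)$ with $U$ uniform on $[0,1]$, $B$ an independent fair bit, and $T(X)=U$. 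Then $I(Y;X)$ and $I(Y;T(X))$ are both infinite, so equality holds, yet $I(Y;X\mid T(X))=H(B)=\log 2>0$. Hence the ``only if'' direction requires $I(Y;X)<\infty$, exactly as you assumed.
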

\begin{proof}
Since $Y\to X\to T(X)$ forms a Markov chain as $T(X)$ is computed from $X$, the classical data processing inequality yields $I(Y;T(X))\le I(Y;X)$, a.k.a., $Y \perp T(X) \mid X$. Equality holds if and only if conditioning on $T(X)$ yields the same information about $Y$ as conditioning on $X$.
\end{proof}

\begin{lemma}[Irreducible Bayes Risk Gap under Serialization]
\label{lem:bayes_gap}
Under squared loss $\ell(\hat y,y)=\|\hat y-y\|^2$ with $Y\in L^2$, define
\begin{equation}
\mathcal{R}_X^\star=\inf_{g:\mathcal{X}\to\mathcal{Y}}\mathbb{E}\big[\|Y-g(X)\|^2\big],
\qquad
\mathcal{R}_T^\star=\inf_{h:\mathcal{X}_{\mathrm{lng}}\to\mathcal{Y}}\mathbb{E}\big[\|Y-h(T(X))\|^2\big].
\end{equation}
Then
\begin{equation}
\label{eq:irreducible_bayes_risk}
\mathcal{R}_T^\star - \mathcal{R}_X^\star
\;=\;
\mathbb{E}\!\left[\big\|\mathbb{E}[Y\mid X]-\mathbb{E}[Y\mid T(X)]\big\|^2\right]
\;\ge\; 0,
\end{equation}
and $\mathcal{R}_T^\star > \mathcal{R}_X^\star$ whenever Assumption~\ref{assumption:nondeg_serialization} holds.
\end{lemma}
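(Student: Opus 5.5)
The plan is to identify both infima with conditional expectations and then apply a Pythagorean decomposition in $L^2$. First I would recall the standard fact that, under squared loss with $Y\in L^2$, the infimum over measurable $g$ of $\mathbb{E}\|Y-g(X)\|^2$ is attained at $g^\star(X)=\mathbb{E}[Y\mid X]$. This follows from the orthogonality $\mathbb{E}\langle Y-\mathbb{E}[Y\mid X],\,h(X)\rangle=0$ for every square-integrable $h(X)$. Consequently $\mathcal{R}_X^\star=\mathbb{E}\|Y-\mathbb{E}[Y\mid X]\|^2$. Applying the same argument to the $\sigma$-algebra generated by $T(X)$ gives $\mathcal{R}_T^\star=\mathbb{E}\|Y-\mathbb{E}[Y\mid T(X)]\|^2$. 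Measurability of $T$ (as in Lemma~\ref{lem:dpi}) ensures that every $h(T(X))$ is $\sigma(X)$-measurable, so $\sigma(T(X))\subseteq\sigma(X)$.

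Next I would write
\[
Y-\mathbb{E}[Y\mid T(X)]=\big(Y-\mathbb{E}[Y\mid X]\big)+\big(\mathbb{E}[Y\mid X]-\mathbb{E}[Y\mid T(X)]\big).
\]
The second summand is $\sigma(X)$-measurable and square-integrable, since conditional expectation is an $L^2$ contraction. By the orthogonality above, the cross term therefore vanishes. Expanding the squared norm and taking expectations yields exactly \eqref{eq:irreducible_bayes_risk}, and nonnegativity is immediate. Equivalently, one can cite the tower property: $\mathbb{E}[Y\mid T(X)]=\mathbb{E}\big[\mathbb{E}[Y\mid X]\mid T(X)\big]$, so the identity is the Pythagorean theorem for nested orthogonal projections in $L^2$.

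For strictness, Assumption~\ref{assumption:nondeg_serialization} gives $\mathbb{E}[Y\mid X]\neq\mathbb{E}[Y\mid T(X)]$ on a set of positive probability. The integrand $\|\mathbb{E}[Y\mid X]-\mathbb{E}[Y\mid T(X)]\|^2$ is therefore nonnegative and strictly positive on that set, so its expectation is strictly positive.

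The only delicate point I expect is the measure-theoretic care in the strictness step. Conditional expectations are defined only almost surely, so the assumption must be read as a statement about versions that differ on a set of positive probability, not merely as a pointwise inequality. I would state this reading explicitly. Once it is fixed, positivity of the integral follows from the elementary fact that a nonnegative function with zero integral vanishes almost surely. I would also note that the infima are finite because $Y\in L^2$, which makes the subtraction in \eqref{eq:irreducible_bayes_risk} well defined.
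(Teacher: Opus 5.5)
Your proposal is correct and follows essentially the same route as the paper: identify both Bayes risks with the conditional expectations $\mathbb{E}[Y\mid X]$ and $\mathbb{E}[Y\mid T(X)]$, split $Y-\mathbb{E}[Y\mid T(X)]$ into the two summands, and kill the cross term by orthogonality because the second summand is $\sigma(X)$-measurable. Strictness then comes from Assumption~\ref{assumption:nondeg_serialization}, and your extra remarks ($\sigma(T(X))\subseteq\sigma(X)$, square-integrability of the second summand, finiteness of the infima, and the almost-sure reading of the assumption) just make explicit details the paper leaves implicit.
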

\begin{proof}
Under squared loss, the Bayes predictors are $g^\star(X)=\mathbb{E}[Y\mid X]$ and $h^\star(T(X))=\mathbb{E}[Y\mid T(X)]$. Hence
\begin{equation}
\mathcal{R}_X^\star
=
\mathbb{E}\big[\|Y-\mathbb{E}[Y\mid X]\|^2\big],
\qquad
\mathcal{R}_T^\star
=
\mathbb{E}\big[\|Y-\mathbb{E}[Y\mid T(X)]\|^2\big].
\end{equation}
Split $Y-\mathbb{E}[Y\mid T(X)]$ as two parts. Then take squared norms and expectation,
\begin{equation}
Y-\mathbb{E}[Y\mid T(X)]
=
\big(Y-\mathbb{E}[Y\mid X]\big)
+
\big(\mathbb{E}[Y\mid X]-\mathbb{E}[Y\mid T(X)]\big).
\end{equation}
\begin{equation}
\mathcal{R}_T^\star=
\mathcal{R}_X^\star
+
\mathbb{E}\!\left[\big\|\mathbb{E}[Y\mid X]-\mathbb{E}[Y\mid T(X)]\big\|^2\right] 
+2\,\mathbb{E}\!\left[
\left\langle
Y-\mathbb{E}[Y\mid X],\,
\mathbb{E}[Y\mid X]-\mathbb{E}[Y\mid T(X)]
\right\rangle
\right].
\end{equation}
Since $T(X)$ is a function of $X$, the term $\mathbb{E}[Y\mid X]-\mathbb{E}[Y\mid T(X)]$ is $\sigma(X)$-measurable. By the orthogonality property of conditional expectation, $Y-\mathbb{E}[Y\mid X]$ is orthogonal to every $\sigma(X)$-measurable square-integrable random variable, so the cross term vanishes. Here, $\sigma(Z)$ denotes the information generated by a random variable $Z$. Therefore,
\begin{equation}
\mathcal{R}_T^\star
=
\mathcal{R}_X^\star
+
\mathbb{E}\!\left[\big\|\mathbb{E}[Y\mid X]-\mathbb{E}[Y\mid T(X)]\big\|^2\right].
\end{equation}
which is the claimed identity in Equation \ref{eq:irreducible_bayes_risk}. Strict inequality follows directly from Assumption~\ref{assumption:nondeg_serialization}, which guarantees that $\mathbb{E}[Y\mid X]\neq\mathbb{E}[Y\mid T(X)]$ on a set of positive probability.
\end{proof}

Lemmas~\ref{lem:dpi} and~\ref{lem:bayes_gap} establish that, under mild non-degeneracy, any language-only pipeline that filters $X$ through a textualization $T$ pays a strictly positive statistical price. This quantitative gap is the information-theoretic root cause of the Domain Advantage assumption (Assumption~\ref{assumption:domain_advantage_app}). The following proposition translates this observation from Bayes-optimal predictors to realized function classes.

\begin{proposition}[Lower Bound on LLM-only Risk]
\label{prop:llm_lower_bound}
Under Assumption~\ref{assumption:nondeg_serialization}, for any language-only function class $\mathcal{F}_{\mathrm{LLM}}$ whose members factor through the serializer $T$,
\begin{equation}
\inf_{f\in\mathcal{F}_{\mathrm{LLM}}}\mathcal{R}(f)\;\ge\;\mathcal{R}_T^\star\;>\;\mathcal{R}_X^\star.
\end{equation}
\end{proposition}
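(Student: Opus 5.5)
The plan is to reduce the statement to Lemma~\ref{lem:bayes_gap}. The key observation is that every $f\in\mathcal{F}_{\mathrm{LLM}}$ factors through the serializer, i.e.\ $f = h_f\circ T$ for some map $h_f:\mathcal{X}_{\mathrm{lng}}\to\mathcal{Y}$. I work throughout under the squared loss of Lemma~\ref{lem:bayes_gap}, so that $\mathcal{R}_T^\star$ and $\mathcal{R}_X^\star$ are well defined.

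First, make the factorization precise. By Definition~\ref{def:function_classes}, $f(x)=A_{\mathrm{LLM}}(\mathrm{serialize}(x))$, so the model only ever sees $T(X)$. If the agent's output is a deterministic function of its input, set $h_f := A_{\mathrm{LLM}}$, which gives $f = h_f\circ T$ directly.

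If the agent is stochastic, i.e.\ it outputs a distribution in $\Delta(\mathcal{M})$, I handle it in one of two ways. One option is to take $h_f(t)=\mathbb{E}[A_{\mathrm{LLM}}(t)]$. Jensen's inequality for the convex squared loss then gives $\mathcal{R}(f)\ge \mathbb{E}\|Y-h_f(T(X))\|^2$. This needs the agent's internal randomness to be independent of $(X,Y)$ given $T(X)$, which holds because the agent observes nothing but the serialized input. The other option is to condition on the internal randomness $\omega$ and apply the deterministic argument for each $\omega$.

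Second, once $f=h_f\circ T$, the definition of $\mathcal{R}_T^\star$ as an infimum over all measurable $h$ gives $\mathcal{R}(f)\ge \mathcal{R}_T^\star$. Taking the infimum over $f$ yields $\inf_{f\in\mathcal{F}_{\mathrm{LLM}}}\mathcal{R}(f)\ge \mathcal{R}_T^\star$.

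Third, the strict inequality $\mathcal{R}_T^\star>\mathcal{R}_X^\star$ follows directly from the identity in Lemma~\ref{lem:bayes_gap}:
\[
\mathcal{R}_T^\star-\mathcal{R}_X^\star=\mathbb{E}\big\|\mathbb{E}[Y\mid X]-\mathbb{E}[Y\mid T(X)]\big\|^2 .
\]
Assumption~\ref{assumption:nondeg_serialization} makes the integrand positive on a set of positive probability, so the right-hand side is strictly positive.

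The main obstacle is measurability and stochasticity in the first step: I must ensure $h_f$ is a measurable map so that it lies in the class over which $\mathcal{R}_T^\star$ takes its infimum. I would handle this by assuming, as is implicit in the paper, that agent policies are measurable kernels, and by using the randomization argument above. A secondary subtlety is that $X$ here denotes the single modality component $X_k$. If the LLM also sees $\mathcal{X}_{\mathrm{lng}}$, I would absorb that context into the serializer by redefining $T$ on the full input, and reapply Lemma~\ref{lem:bayes_gap} with this enlarged $T$.
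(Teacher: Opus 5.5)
Your proposal is correct and follows the paper's proof. You factor each $f\in\mathcal{F}_{\mathrm{LLM}}$ as $h\circ T$ to get $\inf_f\mathcal{R}(f)\ge\mathcal{R}_T^\star$, then obtain the strict gap from the identity in Lemma~\ref{lem:bayes_gap} together with Assumption~\ref{assumption:nondeg_serialization}; your treatment of stochastic policies and measurability is extra care that the paper leaves implicit.

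Your closing remark about absorbing $\mathcal{X}_{\mathrm{lng}}$ into an enlarged serializer is not needed, since the statement is posed for the single modality $X=X_k$. Note also that the strict gap for such an enlarged $T$ would require a non-degeneracy condition on the enlarged map. Assumption~\ref{assumption:nondeg_serialization} does not supply this, because it constrains $T_k$ alone.
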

\begin{proof}
Every $f\in\mathcal{F}_{\mathrm{LLM}}$ admits a factorization $f(x)=h(T(x))$ for some measurable $h$, by Definition~\ref{def:function_classes}. Consequently,
\[
\inf_{f\in\mathcal{F}_{\mathrm{LLM}}}\mathcal{R}(f)
\;\ge\;
\inf_{h}\mathbb{E}\big[\ell(h(T(X)),Y)\big]
\;=\;
\mathcal{R}_T^\star.
\]
The strict inequality $\mathcal{R}_T^\star>\mathcal{R}_X^\star$ is Lemma~\ref{lem:bayes_gap} under assumption~\ref{assumption:nondeg_serialization}.
\end{proof}
Proposition~\ref{prop:llm_lower_bound} is the central building block that allows us to prove strict improvements in later subsections. It rules out the possibility that a sufficiently clever language-only agent could ever recover the advantage of a direct-access foundation model.

\subsection{Expressivity and Solvability of EywaAgent}
\label{app:expressivity}

We now prove Theorem~\ref{thm:eywaagent improvement} from the main paper, and establish two additional results: the EywaAgent containment $\mathcal{F}_{\mathrm{LLM}}\subseteq\mathcal{F}_{\mathrm{Eywa}}$ and the unboundedness of the expressivity gap.

\begin{proposition}[EywaAgent Containment]
\label{prop:containment}
Under Definition~\ref{def:function_classes},
\[
\mathcal{F}_{\mathrm{LLM}}\;\subseteq\;\mathcal{F}_{\mathrm{Eywa}}.
\]
\end{proposition}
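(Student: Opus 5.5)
The plan is to prove the containment element by element. Fix an arbitrary $f\in\mathcal{F}_{\mathrm{LLM}}$. We will exhibit an EywaAgent that implements exactly $f$, which by Definition~\ref{def:function_classes} places $f$ in $\mathcal{F}_{\mathrm{Eywa}}$.

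By the definition of $\mathcal{F}_{\mathrm{LLM}}$, there is some $A_{\mathrm{LLM}}\in\mathcal{M}_{\mathrm{LLM}}$ with $f(x)=A_{\mathrm{LLM}}(\mathrm{serialize}(x))$ for all $x\in\mathcal{X}$.

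We build $A_{\mathrm{eywa}}=(A_{\mathrm{LLM}},F,\phi,\psi,\mathcal{C})$ as follows:
\begin{itemize}
\item the language component is this same $A_{\mathrm{LLM}}$;
\item $F$ is an arbitrary foundation model from $\mathcal{M}_{\mathrm{FM}}$, with $(\phi,\psi)$ any admissible interface pair for it;
\item the control policy is the constant map $\mathcal{C}(s)=\texttt{skip}$ for every $s\in\mathcal{S}$.
\end{itemize}
If $\mathcal{M}_{\mathrm{FM}}$ is empty, the definition of $\mathcal{F}_{\mathrm{Eywa}}$ should be read as allowing a dummy $F$, since $F$ is never invoked. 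We will note this edge case but not dwell on it.

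Next we trace the EywaAgent dynamics from Section~\ref{sec:eywaagent}. The initial state is $s^{(0)}$, obtained by the task interpretation of the input (its serialized form). Every step has $a^{(t)}=\texttt{skip}$, so each update is $z^{(t)}=A_{\mathrm{LLM}}(s^{(t)})$ and $s^{(t+1)}=s^{(t)}\cup\{z^{(t)}\}$. The foundation model, $\phi$ and $\psi$ are never evaluated. The resulting trajectory, and hence the final output, is therefore exactly the one produced by running $A_{\mathrm{LLM}}$ alone on $\mathrm{serialize}(x)$.

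A short induction on $t$ makes this precise. It shows that the state sequences of the two systems coincide, both in distribution and path by path if we couple the sampling randomness. Consequently the implemented map equals $f$ for every $x$.

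The main obstacle is interpretive rather than technical. We must ensure that the initial state of the EywaAgent matches the input the LLM-only agent receives, namely $\mathrm{serialize}(x)$, and that the output read-out conventions agree. I would handle this by fixing the convention that the task-interpretation step initializes $s^{(0)}$ with $\mathrm{serialize}(x)$, the same convention used for $\mathcal{F}_{\mathrm{LLM}}$.

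With that convention, the skip branch reduces literally to a standard language-only reasoning step, as stated after the EywaAgent definition. Since $f$ was arbitrary, $\mathcal{F}_{\mathrm{LLM}}\subseteq\mathcal{F}_{\mathrm{Eywa}}$.
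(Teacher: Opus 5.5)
Your proposal is correct and follows essentially the same argument as the paper: realize $f$ by the EywaAgent $(A_{\mathrm{LLM}},F,\phi,\psi,\mathcal{C}_{\mathrm{skip}})$ with the constant \texttt{skip} policy, so that every step reduces to $z^{(t)}=A_{\mathrm{LLM}}(s^{(t)})$ and the original LLM agent is recovered. The points you add beyond the paper (the empty FM pool, the convention $s^{(0)}=\mathrm{serialize}(x)$ with matching read-out, and coupling the sampling randomness) only make explicit what the paper's one-line proof leaves implicit.
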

\begin{proof}
Fix any $f_{\mathrm{LLM}}\in\mathcal{F}_{\mathrm{LLM}}$ realized by an LLM agent $A_{\mathrm{LLM}}$. Consider the EywaAgent 
\begin{equation}
A_{\mathrm{eywa}}=(A_{\mathrm{LLM}},F,\phi,\psi,\mathcal{C}_{\mathrm{skip}})
\end{equation}
where $\mathcal{C}_{\mathrm{skip}}(s)\equiv\texttt{skip}$ for all $s\in\mathcal{S}$. By the semantics in Section~\ref{sec:eywaagent} of the main paper, under $\mathcal{C}_{\mathrm{skip}}$ the Eywa pipeline reduces to $z^{(t)}=A_{\mathrm{LLM}}(s^{(t)})$, which recovers the original LLM agent. Hence $f_{\mathrm{LLM}}$ can be realized by an EywaAgent, so $f_{\mathrm{LLM}}\in\mathcal{F}_{\mathrm{Eywa}}$.
\end{proof}

\begin{theorem}[Restatement of Theorem~\ref{thm:eywaagent improvement}: Improvement of EywaAgent over Language-only Agent]
\label{thm:eywaagent_improvement_app}
Let $\mathcal{F}_{\mathrm{LLM}}$ and $\mathcal{F}_{\mathrm{Eywa}}$ be the function classes induced by language-only agents and EywaAgents, respectively, as in Definition~\ref{def:function_classes}. Under Assumption~\ref{assumption:domain_advantage_app},
\begin{enumerate}
    \item \textbf{Strict Optimal Risk Improvement:}
    \begin{equation}
    \inf_{f\in\mathcal{F}_{\mathrm{Eywa}}}\mathbb{E}_{\tau\sim\mathcal{T}}[\ell(f(x),y^\star)]
    \;<\;
    \inf_{f\in\mathcal{F}_{\mathrm{LLM}}}\mathbb{E}_{\tau\sim\mathcal{T}}[\ell(f(x),y^\star)].
    \end{equation}
    \item \textbf{Expanded Solvable Task Space:} there exists a non-empty task family $\mathcal{T}_1\subset\mathcal{T}$ such that
    \begin{equation}
    \inf_{f\in\mathcal{F}_{\mathrm{Eywa}}}\mathbb{E}_{\tau\sim\mathcal{T}_1}[\ell(f(x),y^\star)] = 0,
    \qquad
    \inf_{f\in\mathcal{F}_{\mathrm{LLM}}}\mathbb{E}_{\tau\sim\mathcal{T}_1}[\ell(f(x),y^\star)] > 0.
    \end{equation}
\end{enumerate}
\end{theorem}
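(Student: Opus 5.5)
We prove the two parts in order: containment gives the weak inequality, and an explicit invoke-branch EywaAgent, combined with the domain-advantage and performance-preserving assumptions, makes it strict.

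\textbf{Step 1 (weak inequality).} By Proposition~\ref{prop:containment}, $\mathcal{F}_{\mathrm{LLM}}\subseteq\mathcal{F}_{\mathrm{Eywa}}$, so
\[
\inf_{f\in\mathcal{F}_{\mathrm{Eywa}}}\mathcal{R}(f)\le\inf_{f\in\mathcal{F}_{\mathrm{LLM}}}\mathcal{R}(f).
\]

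\textbf{Step 2 (invoke-branch agent).} Fix a domain $k$ with informative component $x_k$. Build $A_{\mathrm{eywa}}=(A_{\mathrm{LLM}},F_k,\phi_k,\psi_k,\mathcal{C}_{\mathrm{inv}})$, where $\mathcal{C}_{\mathrm{inv}}$ invokes $F_k$ and the LLM's only job is to pass $\psi_k(F_k(x_k,\phi_k(s)))$ through as the answer. Let $f_{\mathrm{Eywa}}$ be the induced function. Assumption~\ref{assumption:faithful_interface} then gives
\[
\mathcal{R}(f_{\mathrm{Eywa}})\le\mathbb{E}\big[\ell(F_k(x_k),y^\star)\big].
\]

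\textbf{Step 3 (transfer to the full loss).} Assumption~\ref{assumption:domain_advantage_app} is stated for $\ell_k$, so we must pass to $\ell$. Use Assumption~\ref{assumption:task_factorization}: compare $f_{\mathrm{Eywa}}$ with any $f\in\mathcal{F}_{\mathrm{LLM}}$ after making the two predictors agree on every component except $k$. For the language component and the other domains, let the Eywa agent reuse the LLM's own output via the skip branch; the paper's semantics allow mixing the two branches across steps, which justifies this. Then $\Gamma$ is nondecreasing, and the strict-improvement clause turns the strict gap in $\ell_k$ into a strict gap in $\ell$. Taking infima over $f$ gives strict inequality.

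\textbf{Main obstacle.} The difficulty is that a strict gap for each fixed $f$ does not by itself keep the infimum strict. An infimizing sequence of LLM agents could close the gap in the limit. Two fixes are available:
\begin{itemize}
\item First choice: avoid per-$f$ comparisons and use the fact that the domain-advantage bound is already an infimum over LLMs. This gives a uniform margin
\[
\delta=\inf_{A_{\mathrm{LLM}}}\mathbb{E}\,\ell_k\big(A_{\mathrm{LLM}}(\mathrm{serialize}(x_k)),y^\star\big)-\mathbb{E}\,\ell_k\big(F_k(x_k),y^\star\big)>0.
\]
Then assume, or argue from the strict-monotonicity clause, that $\Gamma$ passes a uniform margin through.
\item Fallback: a cleaner route is Proposition~\ref{prop:llm_lower_bound}, which lower-bounds every LLM-only function by $\mathcal{R}_T^\star$. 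Pair it with a Bayes-optimal $F_k$ via Assumption~\ref{assumption:faithful_interface}, whose interpretation says $F_k$ attains $\mathcal{R}_X^\star$. Lemma~\ref{lem:bayes_gap} then gives $\mathcal{R}_X^\star<\mathcal{R}_T^\star$. This route needs squared loss and Assumption~\ref{assumption:nondeg_serialization}; I would note these as added hypotheses if the first route fails.
\end{itemize}

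\textbf{Part 2 (the family $\mathcal{T}_1$).} Take $\mathcal{T}_1$ from the second half of Assumption~\ref{assumption:domain_advantage_app}.
\begin{itemize}
\item \emph{Eywa side.} On $\mathcal{T}_1$ the $k$-th component is sufficient and $\mathbb{E}_{\mathcal{T}_1}\ell_k(F_k(x_k),y^\star)=0$. The invoke agent of Step 2, with Assumption~\ref{assumption:faithful_interface} applied to $\mathcal{T}_1$, therefore has risk at most $\mathbb{E}_{\mathcal{T}_1}\ell(F_k(x_k),y^\star)$. Sufficiency means the overall loss reduces to $\ell_k$ there, so this bound is $0$. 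Since risk is nonnegative, the Eywa infimum is $0$.
\item \emph{LLM side.} Every $f\in\mathcal{F}_{\mathrm{LLM}}$ factors through $\mathrm{serialize}$ by Definition~\ref{def:function_classes}. On $\mathcal{T}_1$ its loss is $\ell_k$ of a serialized-input LLM, so the LLM infimum equals the assumed positive quantity.
\end{itemize}
Both sides rest on one reading of sufficiency: that on $\mathcal{T}_1$ the overall loss $\ell$ coincides with $\ell_k$, which requires $\Gamma$ to vanish in the other coordinates. This is the only interpretive step, and I would state it explicitly.
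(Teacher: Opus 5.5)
Your Steps 1--3 and your Part 2 follow the paper's proof. The pieces are the same: containment for the weak inequality, and an invoke-branch agent $f^\dagger$ that copies a language-only baseline off component $k$. You also use Assumption~\ref{assumption:task_factorization} to lift the component-$k$ gap to $\ell$, take $\mathcal{T}_1$ from the second half of Assumption~\ref{assumption:domain_advantage_app}, and read sufficiency as $\ell$ reducing to $\ell_k$ on $\mathcal{T}_1$, which the paper states explicitly in its EywaMAS proof.

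The obstacle you flag is real, and the paper does not resolve it. It goes from ``strictly better on component $k$ than any language-only agent'' plus ``other components unchanged'' straight to $\mathcal{R}(f^\dagger)<\inf_{f\in\mathcal{F}_{\mathrm{LLM}}}\mathcal{R}(f)$. But $f^\dagger$ agrees off $k$ only with the one baseline it copies.

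Your first fix cannot be extracted from the stated clause. The clause is qualitative, so it carries no margin. Read pointwise in $(\hat y,y^\star)$, it does not even convert an expected $\ell_k$ gap into an expected $\ell$ gap. Take $\Gamma(a,b)=a+b^2$ and $\ell_{\mathrm{lng}}\equiv 0$: a predictor with $\ell_k\in\{0,2\}$ equiprobable beats $\ell_k\equiv 1.1$ in mean $\ell_k$ but loses in mean $\ell$. The cheapest repairs are explicit extra hypotheses. One option: the LLM infimum is attained (automatic if $\mathcal{M}_{\mathrm{LLM}}$ in Definition~\ref{def:function_classes} is a finite pool), and the clause is read for expected losses. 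The other: $\Gamma(a)=G(a_{-k})+c\,a_k$ with $c>0$, in which case your $\delta$ passes through as $c\delta$.

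Your fallback is a genuinely different route. It is the one that makes strictness of the infimum automatic, because Proposition~\ref{prop:llm_lower_bound} bounds the whole class by $\mathcal{R}_T^\star$ instead of comparing agent by agent. Its costs are:
\begin{itemize}
\item squared loss;
\item Assumption~\ref{assumption:nondeg_serialization}, which the paper's proof also invokes to pick $k$, although the theorem lists only Assumption~\ref{assumption:domain_advantage_app};
\item a Bayes-optimality reading of Assumption~\ref{assumption:faithful_interface};
\item the single-modality setting of Section~\ref{app:info}.
\end{itemize}

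That last restriction exposes a mismatch your Part 2 shares with the paper. The positive infimum in Assumption~\ref{assumption:domain_advantage_app} is over LLMs fed $\mathrm{serialize}(x_k)$ alone. Members of $\mathcal{F}_{\mathrm{LLM}}$ instead see $\mathrm{serialize}(x)$, and a richer input can only lower that infimum. So your ``equals'', and the component-$k$ comparison in Step 3, need a further premise: the other serialized components do not let an LLM do better on component $k$ than $\mathrm{serialize}(x_k)$ does. State it next to your reading of sufficiency.
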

\begin{proof}
\textbf{Part 1 (Strict Optimal Risk Improvement).} By Proposition~\ref{prop:containment}, $\mathcal{F}_{\mathrm{LLM}}\subseteq\mathcal{F}_{\mathrm{Eywa}}$, so
\begin{equation}
\label{eq:inf_eywa_inf_llm}
\inf_{f\in\mathcal{F}_{\mathrm{Eywa}}}\mathcal{R}(f)\;\le\;\inf_{f\in\mathcal{F}_{\mathrm{LLM}}}\mathcal{R}(f). 
\end{equation}
To prove strictness, consider the informative domain index $k$ from Assumption~\ref{assumption:nondeg_serialization}. By construction of EywaAgent, there exists an agent $f^\dagger\in\mathcal{F}_{\mathrm{Eywa}}$ that invokes the foundation model $F_k$ on the $k$-th component while leaving the remaining components unchanged relative to a language-only baseline. Therefore, its $k$-th component loss is no worse than that of $F_k$, whereas the non-$k$ components are unchanged.
By Assumption~\ref{assumption:domain_advantage_app},
\begin{equation}
\mathbb{E}_{\tau\sim\mathcal{T}}[\ell_k(F_k(x_k),y^\star)]
\;<\;
\inf_{A_{\mathrm{LLM}}}\mathbb{E}_{\tau\sim\mathcal{T}}
\big[\ell_k(A_{\mathrm{LLM}}(\mathrm{serialize}(x_k)),y^\star)\big].
\end{equation}
Hence $f^\dagger$ achieves strictly smaller loss on the $k$-th component than any language-only agent. Since all other components are unchanged, Assumption~\ref{assumption:task_factorization} implies that this strict improvement on component $k$ induces a strict improvement in the overall task loss. Thus
\begin{equation}
\mathcal{R}(f^\dagger)
\;<\;
\inf_{f\in\mathcal{F}_{\mathrm{LLM}}}\mathcal{R}(f)
\Longrightarrow
\inf_{f\in\mathcal{F}_{\mathrm{Eywa}}}\mathcal{R}(f)
\;<\;
\inf_{f\in\mathcal{F}_{\mathrm{LLM}}}\mathcal{R}(f).
\end{equation}

\textbf{Part 2 (Expanded Solvable Task Space).}
According to Assumption \ref{assumption:domain_advantage_app}, consider a non-empty task family $\mathcal{T}_1\subseteq\mathcal{T}$ for which the task is fully determined by the $k$-th domain-specific component, and  the foundation model solves this component perfectly, i.e.,
\begin{equation}
\mathbb{E}_{\tau\sim\mathcal{T}_1}[\ell_k(F_k(x_k),y^\star)] = 0,
\end{equation}
Then, by the same construction as above, there exists an EywaAgent in $\mathcal{F}_{\mathrm{Eywa}}$ whose prediction is perfect on the only task-relevant component. This yields
\begin{equation}
\inf_{f\in\mathcal{F}_{\mathrm{Eywa}}}
\mathbb{E}_{\tau\sim\mathcal{T}_1}[\ell(f(x),y^\star)] = 0.
\end{equation}
On the other hand, every language-only agent has strictly positive loss on the relevant component, hence
\begin{equation}
\inf_{f\in\mathcal{F}_{\mathrm{LLM}}}
\mathbb{E}_{\tau\sim\mathcal{T}_1}[\ell(f(x),y^\star)] > 0.
\end{equation}
Therefore, the solvable task space of EywaAgent is strictly larger than that of language-only agents.
\end{proof}

\subsection{Multi-Agent Propagation: EywaMAS}
\label{app:mas}

We now lift the single-agent analysis to multi-agent systems. Let $\mathcal{M}_{\mathrm{LLM\text{-}MAS}}$ denote a MAS in which every agent is an LLM agent, and $\mathcal{M}_{\mathrm{Eywa}}$ an EywaMAS with at least one EywaAgent. We first show that any LLM-only MAS remains constrained by the information bottleneck of Section~\ref{app:info}.

\begin{lemma}[Information Closure of LLM-only MAS]
\label{lem:llm_mas_closure}
Let $\mathcal{M}_{\mathrm{LLM\text{-}MAS}}=(\mathcal{A}_{\mathrm{LLM}},\mathcal{G})$ be any finite-horizon LLM-only MAS operating on input $X$ via the serialization $T(X)$. Let $\hat Y$ denote its final output. Then
\begin{equation}
I(Y;\hat Y)\;\le\;I(Y;T(X))\;\le\;I(Y; X).
\end{equation}
\end{lemma}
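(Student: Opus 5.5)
The plan is to show that the entire multi-agent computation is a (possibly randomized) function of the serialized input $T(X)$ alone, so that $Y \to X \to T(X) \to \hat Y$ is a Markov chain. The two inequalities then follow from two applications of the data processing inequality. The second inequality, $I(Y;T(X))\le I(Y;X)$, is exactly Lemma~\ref{lem:dpi} and needs no further work.

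For the first inequality, I would set up the randomness explicitly. Let $\xi$ collect all internal sampling randomness of the agents, that is, the draws $m_i^{(t)}\sim A_i(s_i^{(t)})$ for all $i$ and all $t$ up to the finite horizon $H$. These draws are taken from the agents' own sampling procedures and are independent of $(X,Y)$. I would then prove by induction on $t$ that every local state $s_i^{(t)}$ and every message $m_i^{(t)}$ is a measurable function of $(T(X),\xi)$.

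\begin{itemize}
\item \textbf{Base case.} Each initial state $s_i^{(0)}$ is built from the instruction $q$ and the serialized input. Here I take $q$ as fixed or as part of the language-observable context, which is absorbed into $T(X)$.
\item \textbf{Inductive step.} $\mathrm{Update}_i$ combines $s_i^{(t-1)}$ with the messages $m_{-i}^{(t)}$ received along the edges of $\mathcal{G}$, all of which are functions of $(T(X),\xi)$ by hypothesis. Each new message is a sampling step from $A_i(s_i^{(t)})$ using fresh components of $\xi$.
\end{itemize}

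Since LLM agents have no access to $X$ except through $T(X)$, which is the defining property of $\mathcal{F}_{\mathrm{LLM\text{-}MAS}}$ inherited from Definition~\ref{def:function_classes}, the induction closes. The final output satisfies $\hat Y = g(T(X),\xi)$ for some measurable $g$.

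Given this representation, independence of $\xi$ from $(X,Y)$ gives $\hat Y \perp (X,Y)\mid T(X)$, and in particular $Y\to T(X)\to\hat Y$ is Markov. The standard data processing inequality then yields $I(Y;\hat Y)\le I(Y;T(X))$. Chaining this with Lemma~\ref{lem:dpi} completes the argument.

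The main obstacle is the conditional independence step. It needs two assumptions made precise. First, no agent, and no tool call by an LLM agent, may touch the raw $X$; in an LLM-only MAS this holds by definition, but it should be stated as an assumption on the class. Second, the sampling randomness must not depend on $X$ except through the states. Finite horizon is what makes the induction well defined and $\hat Y$ a genuine random variable. If the topology $\mathcal{G}$ or the stopping time is itself chosen adaptively, I would treat the topology as part of the state and the stopping time as determined by states. Then the output is still a function of $(T(X),\xi)$ and the argument goes through unchanged.
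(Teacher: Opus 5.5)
Your proposal is correct and follows the same route as the paper. The paper likewise argues that every message of an LLM-only MAS is generated from $T(X)$, forming the chain $Y\to X\to T(X)\to M_1\to\cdots\to\hat Y$, and applies the data processing inequality to get both inequalities; your representation $\hat Y=g(T(X),\xi)$, with agent randomness $\xi$ independent of $(X,Y)$, is a more careful version of that step. It also avoids the paper's loose depiction of the messages as a linear chain, and rightly flags that the instruction $q$ must be fixed or absorbed into $T(X)$.
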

\begin{proof}
Since every agent in $\mathcal{A}_{\mathrm{LLM}}$ consumes and produces only language messages, the joint random variable of all messages exchanged over the entire interaction forms a (possibly very long) Markov chain emanating from $T(X)$: $Y\to X\to T(X)\to M_1\to M_2\to\cdots\to \hat Y$. The data processing inequality then gives $I(Y;\hat Y)\le I(Y;T(X))$. Since the same data processing was applied, $I(Y;T(X))\;\le\;I(Y; X)$.
\end{proof}

Lemma~\ref{lem:llm_mas_closure} formalizes the intuition that ``more LLM agents cannot create information that was discarded at serialization time''. We next argue that a single EywaAgent in EywaMAS, as long as its message can reach the final output node, propagates its recovered information through the entire system.

\begin{theorem}[Communication-Enhanced Solvability of EywaMAS]
\label{thm:eywamas_app}
Let $\mathcal{M}_{\mathrm{Eywa}}=(\mathcal{A},\mathcal{G})$ be an EywaMAS and $\mathcal{M}_{\mathrm{LLM\text{-}MAS}}$ an LLM-only MAS with the same topology $\mathcal{G}$. Suppose that (i) there exists an EywaAgent $\mathcal{A}_k\in\mathcal{A}$ with access to $F_k$, and (ii) the topology $\mathcal{G}$ has finite diameter $D$ and the interaction horizon $T\ge D$, so that the message produced by $\mathcal{A}_k$ reaches the final output node within the interaction horizon. Under Assumptions~\ref{assumption:task_factorization},~\ref{assumption:domain_advantage_app},~\ref{assumption:nondeg_serialization}, and~\ref{assumption:faithful_interface}, there exists a non-empty task family $\mathcal{T}_1\subseteq\mathcal{T}$ such that
\begin{equation}
\inf_{f\in\mathcal{F}_{\mathrm{Eywa\text{-}MAS}}}\mathbb{E}_{\tau\sim\mathcal{T}_1}[\ell(f(x),y^\star)] = 0,
\qquad
\inf_{f\in\mathcal{F}_{\mathrm{LLM\text{-}MAS}}}\mathbb{E}_{\tau\sim\mathcal{T}_1}[\ell(f(x),y^\star)] > 0.
\end{equation}
\end{theorem}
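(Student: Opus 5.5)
The plan splits into two halves: an upper-bound construction showing that EywaMAS can reach zero risk on $\mathcal{T}_1$, and a lower bound showing that every LLM-only MAS keeps strictly positive risk there. Throughout I take $\mathcal{T}_1$ to be the non-empty family supplied by Assumption~\ref{assumption:domain_advantage_app}. On this family the $k$-th component is sufficient and $\mathbb{E}_{\tau\sim\mathcal{T}_1}[\ell_k(F_k(x_k),y^\star)]=0$.

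\textbf{Upper bound.} I will build an explicit EywaMAS on topology $\mathcal{G}$ and work through the following steps.
\begin{enumerate}
\item The EywaAgent $\mathcal{A}_k$ uses the control policy $\mathcal{C}\equiv\texttt{invoke}$ at its first step. It emits $z=\psi_k(F_k(x_k,\phi_k(s)))$, together with a final answer read off from $z$.
\item Every other agent acts as a relay: it copies the most recent message originating from $\mathcal{A}_k$ verbatim, which is an admissible LLM policy.
\item The output node outputs that relayed answer.
\item Hypothesis (ii) says $\mathcal{G}$ has diameter $D\le T$. Along a shortest path from $\mathcal{A}_k$ to the output node, an induction on the step index shows the message is available there by step $D$.
\end{enumerate}
The realized system function therefore coincides with the single EywaAgent $f_{\mathrm{Eywa}}$. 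It is the MAS analogue of the containment $\mathcal{F}_{\mathrm{Eywa}}\subseteq\mathcal{F}_{\mathrm{Eywa\text{-}MAS}}$ noted after Definition~\ref{def:function_classes}, except that the topology is held fixed.

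Next I use Assumption~\ref{assumption:faithful_interface}, applied to $\mathcal{T}_1$ (if necessary restricting the expectation there). It gives $\mathbb{E}_{\mathcal{T}_1}[\ell(f_{\mathrm{Eywa}}(x),y^\star)]\le \mathbb{E}_{\mathcal{T}_1}[\ell(F_k(x_k),y^\star)]$. Because the $k$-th component is sufficient on $\mathcal{T}_1$, the remaining component losses can be taken to vanish. Then $\Gamma$, being nondecreasing with value $0$ at the all-zero vector (which follows from the task-family sufficiency as used in Part~2 of Theorem~\ref{thm:eywaagent_improvement_app}), gives overall loss $0$. Since losses are nonnegative, this yields the infimum $0$ for $\mathcal{F}_{\mathrm{Eywa\text{-}MAS}}$.

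\textbf{Lower bound.} I expect this half to be the main obstacle. Lemma~\ref{lem:llm_mas_closure} shows that every output $\hat Y$ of an LLM-only MAS is a measurable function of $T(X)$ and of internal randomness independent of $(X,Y)$. So every $f\in\mathcal{F}_{\mathrm{LLM\text{-}MAS}}$ factors as $f(x)=h(T(x),\xi)$. This is a randomized version of the factorization used in Proposition~\ref{prop:llm_lower_bound}.

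The difficulty is turning this factorization into a strictly positive loss uniformly over the class. I will try two routes in order.
\begin{itemize}
\item \emph{First route.} Observe that any such MAS is itself a language-only map from $\mathrm{serialize}(x_k)$ to outputs; the whole system can be regarded as one composite LLM policy $A_{\mathrm{LLM}}$. The infimum in Assumption~\ref{assumption:domain_advantage_app} is then taken over a class containing it. That gives $\inf_{A_{\mathrm{LLM}}}\mathbb{E}_{\mathcal{T}_1}[\ell_k(\cdot)]>0$, and strict monotonicity of $\Gamma$ in coordinate $k$ from Assumption~\ref{assumption:task_factorization} lifts this to $\ell$.
\item \emph{Fallback route.} If composite MAS policies are not admissible as a single $A_{\mathrm{LLM}}$, I would use the information-theoretic argument under squared loss. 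By Jensen, conditioning on $\xi$ does not help, so the risk is at least $\mathcal{R}_T^\star$ restricted to $\mathcal{T}_1$. Lemma~\ref{lem:bayes_gap}, together with Assumption~\ref{assumption:nondeg_serialization}, then makes this strictly greater than $\mathcal{R}_X^\star\ge0$. This requires the non-sufficiency set $E$ to have positive mass under $\mathcal{T}_1$, which I would state as a compatibility condition if it is not automatic.
\end{itemize}
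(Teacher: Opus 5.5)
Your proposal is correct and follows essentially the same route as the paper. For the upper bound, the paper also has $\mathcal{A}_k$ invoke $F_k$, forwards the adapted output through $\mathcal{G}$ by the reachability hypothesis, lets the output node copy it, and concludes zero risk from Assumption~\ref{assumption:faithful_interface} and sufficiency of the $k$-th component on $\mathcal{T}_1$; for the lower bound, your first route is exactly the paper's argument (outputs of an LLM-only MAS are functions of serialized input by Lemma~\ref{lem:llm_mas_closure}, so Assumptions~\ref{assumption:domain_advantage_app} and~\ref{assumption:task_factorization} force strictly positive loss), and the Bayes-gap fallback is not needed.
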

\begin{proof}
\textbf{Zero-risk attainability by EywaMAS.} Let $\mathcal{T}_1$ be the task family from Assumption~\ref{assumption:domain_advantage_app} on which the $k$-th component is sufficient for the task and $F_k$ solves it perfectly: $\mathbb{E}_{\tau\sim\mathcal{T}_1}[\ell_k(F_k(x_k),y^\star)]=0$. Consider the EywaMAS in which $\mathcal{A}_k$ invokes $F_k$ on $x_k$ and forwards its adapted output $z_k=\psi_k(F_k(x_k,u_k))$ through the graph. By condition~(ii), $z_k$ is available to the final output node within the interaction horizon, which may simply copy it (a trivial operation in language). The resulting system-level function $f_{\mathrm{Eywa\text{-}MAS}}\in\mathcal{F}_{\mathrm{Eywa\text{-}MAS}}$ therefore satisfies $\ell_k(f_{\mathrm{Eywa\text{-}MAS}}(x),y^\star)=0$ on $\mathcal{T}_1$. Moreover, since the non-$k$ components can be handled exactly as in a language-only baseline (reachability only affects the task-relevant channel), Assumption~\ref{assumption:faithful_interface} ensures that the full Eywa system loss is no worse than the FM loss on $\mathcal{T}_1$, so
\begin{equation}
\inf_{f\in\mathcal{F}_{\mathrm{Eywa\text{-}MAS}}}\mathbb{E}_{\tau\sim\mathcal{T}_1}[\ell(f(x),y^\star)]
\;\le\;
\mathbb{E}_{\tau\sim\mathcal{T}_1}[\ell(F_k(x_k),y^\star)]=0,
\end{equation}
where the final equality uses Assumption~\ref{assumption:task_factorization} together with sufficiency of the $k$-th component on $\mathcal{T}_1$, which reduces $\ell$ to $\ell_k$ on this family.

\textbf{Strict positivity for LLM-only MAS.} For any LLM-only MAS with the same topology, all messages and final outputs are functions of language-serialized inputs. By Lemma~\ref{lem:llm_mas_closure}, such a system cannot recover task-relevant information about $x_k$ that is lost during serialization. Hence, by Assumption~\ref{assumption:task_factorization} and ~\ref{assumption:domain_advantage_app}, strict positivity of the relevant component loss implies strict positivity of the overall task loss.
\begin{equation}
\inf_{f\in\mathcal{F}_{\mathrm{LLM\text{-}MAS}}}
\mathbb{E}_{\tau\sim\mathcal{T}_1}
[\ell_k(f(x),y^\star)] > 0
\Longrightarrow
\inf_{f\in\mathcal{F}_{\mathrm{LLM\text{-}MAS}}}
\mathbb{E}_{\tau\sim\mathcal{T}_1}[\ell(f(x),y^\star)] > 0.
\end{equation}
\end{proof}

\subsection{Adaptive Orchestration: EywaOrchestra}
\label{app:orchestra}

We now develop the theoretical foundation of \textit{EywaOrchestra}, which couples \emph{model adaptivity} (selecting LLM and FM backbones per task) with \emph{structural adaptivity} (selecting topologies from $\Pi$). Let $F_c$ denote the agent system instantiated by configuration $c$. For brevity, denote the conditional risk of configuration $c\in\mathcal{C}_{\mathrm{cfg}}$ at task input $(q,x)$ by
\begin{equation}
r(c;q,x):=\mathbb{E}\big[\ell(F_c(q,x),y^\star)\mid q,x\big].
\end{equation}

We show that,  whenever different task regions favor different configurations, \emph{any} adaptive conductor, even one limited to routing over a finite pool, dominates the best fixed multi-agent system.

\begin{theorem}[Oracle Adaptivity Strictly Improves over Fixed Systems]
\label{thm:orchestra_oracle}
Let $\mathcal{C}_{\mathrm{cfg}}$ be a finite set of candidate configurations, and let
\[
r(c;q,x):=\mathbb{E}\big[\ell(F_c(q,x),y^\star)\mid q,x\big]
\]
denote the conditional risk of configuration $c$ at input $(q,x)$. Let $\tau=(q,x,y^\star,\ell)\sim\mathcal{T}$ denote the task distribution.

Define the best fixed-configuration risk and the oracle adaptive risk as
\begin{equation}
\mathcal{R}_{\mathrm{fixed}}^\star
=
\min_{c\in\mathcal{C}_{\mathrm{cfg}}}
\mathbb{E}_{\tau\sim\mathcal{T}}[\ell(F_c(q,x),y^\star)],
\qquad
\mathcal{R}_{\mathrm{oracle}}
=
\mathbb{E}_{\tau\sim\mathcal{T}}\!\left[\min_{c\in\mathcal{C}_{\mathrm{cfg}}}r(c;q,x)\right].
\end{equation}
Then $\mathcal{R}_{\mathrm{oracle}}\;\le\;\mathcal{R}_{\mathrm{fixed}}^\star$.
Moreover, then the inequality is strict if for every fixed configuration $c\in\mathcal{C}_{\mathrm{cfg}}$,
\begin{equation}
\mathbb{P}\!\left(
r(c;q,x) >
\min_{c'\in\mathcal{C}_{\mathrm{cfg}}} r(c';q,x)
\right)>0,
\end{equation}
In other words, the inequality is strict if no fixed configuration achieves optimality for all tasks.
\end{theorem}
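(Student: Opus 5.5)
The plan is to compare the two risks pointwise in $(q,x)$ and then integrate. First, write each fixed-configuration risk through the tower property:
\[
\mathbb{E}_{\tau\sim\mathcal{T}}[\ell(F_c(q,x),y^\star)]=\mathbb{E}_{(q,x)}[r(c;q,x)].
\]
Set $r^\star(q,x):=\min_{c'\in\mathcal{C}_{\mathrm{cfg}}} r(c';q,x)$. Since $\mathcal{C}_{\mathrm{cfg}}$ is finite, this is a minimum of finitely many measurable functions, so it is measurable. Moreover, a measurable argmin selector exists: break ties by a fixed ordering of the configurations. That selector is exactly the oracle conductor $P^\star$ that places its mass on the minimizer.

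For every $c$ we have $r^\star(q,x)\le r(c;q,x)$ pointwise. Taking expectations gives $\mathcal{R}_{\mathrm{oracle}}\le \mathbb{E}[r(c;q,x)]$ for each $c$. Minimizing the right-hand side over the finite set yields $\mathcal{R}_{\mathrm{oracle}}\le\mathcal{R}^\star_{\mathrm{fixed}}$. We assume $\ell\ge 0$ and that the risks are finite, so all expectations are well defined.

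For strictness, let $c_0$ attain the minimum in $\mathcal{R}^\star_{\mathrm{fixed}}$; it exists because the set is finite. Then
\[
\mathcal{R}^\star_{\mathrm{fixed}}-\mathcal{R}_{\mathrm{oracle}}=\mathbb{E}\big[r(c_0;q,x)-r^\star(q,x)\big].
\]
The integrand is nonnegative, and by hypothesis, applied to $c=c_0$, it is strictly positive on an event of positive probability. A nonnegative random variable that is positive with positive probability has positive expectation. To make this explicit, write the event as $\bigcup_n\{r(c_0)-r^\star>1/n\}$; some term of this union has positive probability $p$, and the gap is then at least $p/n>0$.

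The main point needing care is integrability and finiteness. If $\mathcal{R}^\star_{\mathrm{fixed}}=\infty$, the difference is meaningless, so we assume the risk of $c_0$ is finite. Then $r^\star\le r(c_0)$ is integrable and the subtraction is legitimate. The rest is routine.
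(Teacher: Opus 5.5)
Your proposal is correct and follows the paper's proof: bound $\min_{c} r(c;q,x)\le r(c_0;q,x)$ pointwise, integrate, minimize over the finite set, and then apply the positive-probability hypothesis to a risk-minimizing fixed configuration to get strictness. Your additions only make explicit what the paper leaves implicit: the tower-property identification of each fixed risk with $\mathbb{E}[r(c;q,x)]$, the $\bigcup_n\{r(c_0)-r^\star>1/n\}$ argument that the gap has positive expectation, and the finiteness assumption on $\mathcal{R}^\star_{\mathrm{fixed}}$, which is genuinely needed because otherwise both sides can equal $+\infty$.
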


\begin{proof}
For any fixed configuration $c_0\in\mathcal{C}_{\mathrm{cfg}}$, we have pointwise risks which we can take expectation over
\begin{equation}
\min_{c\in\mathcal{C}_{\mathrm{cfg}}} r(c;q,x)
\le
r(c_0;q,x)
\Longrightarrow
\mathbb{E}_{\tau}\!\left[\min_c r(c;q,x)\right]
\le
\mathbb{E}_{\tau}[r(c_0;q,x)].
\end{equation}
Since this holds for every $c_0$, taking the minimum over $c_0$ yields
\[
\mathcal{R}_{\mathrm{oracle}}
\le
\min_{c_0\in\mathcal{C}_{\mathrm{cfg}}}
\mathbb{E}_{\tau}[r(c_0;q,x)]
=
\mathcal{R}_{\mathrm{fixed}}^\star.
\]

For strictness, let $c^\star$ be a best fixed configuration achieving $\mathcal{R}_{\mathrm{fixed}}^\star$. By assumption,
\[
\mathbb{P}\!\left(
r(c^\star;q,x) >
\min_{c} r(c;q,x)
\right)>0.
\]
Therefore, we have the strict inequality,
\[
\mathbb{E}_{\tau}\!\left[\min_c r(c;q,x)\right]
<
\mathbb{E}_{\tau}[r(c^\star;q,x)]
=
\mathcal{R}_{\mathrm{fixed}}^\star,
\]
\end{proof}

\begin{table}[h]
\centering
\caption{Summary of theoretical results and the claims they justify.}
\label{tab:theory_summary}
\small
\renewcommand{\arraystretch}{1.2}
\begin{tabular}{@{}p{0.20\linewidth}p{0.78\linewidth}@{}}
\toprule
\textbf{Result} & \textbf{Key Statement} \\
\midrule
\multicolumn{2}{@{}l@{}}{\textit{Information-theoretic bottleneck (Section~\ref{app:info}).}} \\
\addlinespace[2pt]
Lemma~\ref{lem:dpi} & Serialization cannot increase information: $I(Y;T(X))\le I(Y;X)$. \\
Lemma~\ref{lem:bayes_gap} & Strict Bayes-risk gap $\mathcal{R}_T^\star>\mathcal{R}_X^\star$ whenever the serializer discards task-relevant information. \\
Proposition~\ref{prop:llm_lower_bound} & Every language-only function class is lower-bounded by the serialized Bayes risk $\mathcal{R}_T^\star$. \\
\midrule
\multicolumn{2}{@{}l@{}}{\textit{EywaAgent expressivity and solvability (Section~\ref{app:expressivity}).}} \\
\addlinespace[2pt]
Proposition~\ref{prop:containment} & $\mathcal{F}_{\mathrm{LLM}}\subseteq\mathcal{F}_{\mathrm{Eywa}}$: an EywaAgent can always fall back to an LLM agent. \\
Theorem~\ref{thm:eywaagent_improvement_app} & EywaAgent strictly improves the optimal risk and expands the set of perfectly solvable tasks. \\
\midrule
\multicolumn{2}{@{}l@{}}{\textit{Multi-agent propagation (Section~\ref{app:mas}).}} \\
\addlinespace[2pt]
Lemma~\ref{lem:llm_mas_closure} & LLM-only MAS cannot exceed the mutual information retained by serialization. \\
Theorem~\ref{thm:eywamas_app} & EywaMAS strictly solves tasks that no LLM-only MAS of the same topology can solve. \\
\midrule
\multicolumn{2}{@{}l@{}}{\textit{Adaptive orchestration (Section~\ref{app:orchestra}).}} \\
\addlinespace[2pt]
Theorem~\ref{thm:orchestra_oracle} & Oracle adaptive routing attains $\mathcal{R}_{\mathrm{oracle}}\le\mathcal{R}_{\mathrm{fixed}}^\star$, with strict inequality whenever no fixed configuration is uniformly optimal. \\
\midrule
\multicolumn{2}{@{}l@{}}{\textit{Efficiency (Section~\ref{app:efficiency}).}} \\
\addlinespace[2pt]
Proposition~\ref{prop:token_complexity} & EywaAgent language-token cost is $O(L_{\mathrm{call}}+L_\psi(o_k))$, independent of modality size, vs.\ $\Theta(L(x_k))$ for LLM-only agents. \\
Proposition~\ref{prop:latency} & Wall-clock latency inherits the same asymptotic separation as the token complexity. \\
\bottomrule
\end{tabular}
\end{table}

\subsection{Efficiency and Token Complexity}
\label{app:efficiency}

Beyond task performance, Eywa delivers strong efficiency gains, as evidenced by the $\sim$30\% token reduction reported in Section~\ref{sec:experiment}. We formalize this benefit by comparing the token complexities of language-only agents and EywaAgents when processing a structured input $x_k\in\mathcal{X}_k$.

\begin{proposition}[Token Complexity]
\label{prop:token_complexity}
Let $L(x_k)$ denote the total number of language tokens an LLM-only agent spends on $x_k\in\mathcal{X}_k$, including both the serialization of $x_k$ in its prompt and any chain-of-thought reasoning required to analyze it. Let $L_{\mathrm{call}}$ denote the token length of a structured FM invocation produced by $\phi_k$, and $L_{\psi}(o_k)$ the token length of the adapted response $\psi_k(o_k)$. Then the language-token cost of processing $x_k$ satisfies
\begin{align}
\mathrm{TokenCost}_{\mathrm{LLM}}(x_k) &= \Theta(L(x_k)), \\
\mathrm{TokenCost}_{\mathrm{Eywa}}(x_k) &= O(L_{\mathrm{call}}+L_{\psi}(o_k)).
\end{align}
In typical scientific modalities of interest (long time series, tables with many rows), $L(x_k)\gg L_{\mathrm{call}}+L_{\psi}(o_k)$, so the ratio $\mathrm{TokenCost}_{\mathrm{Eywa}}/\mathrm{TokenCost}_{\mathrm{LLM}}\to 0$ as the modality size grows.
\end{proposition}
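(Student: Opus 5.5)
The plan is to count, step by step, the language tokens that pass through the LLM's context while $x_k$ is being processed. The LLM-only agent and the EywaAgent are treated separately, using the pipeline semantics of Section~\ref{sec:eywaagent}.

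\textbf{LLM-only bound.} By Definition~\ref{def:function_classes}, a language-only agent only sees $x_k$ through $\mathrm{serialize}(x_k)=T_k(x_k)$. The serialized string must appear in its prompt, and any reasoning it does about $x_k$ is emitted as tokens. Since $L(x_k)$ is defined as exactly this total (prompt serialization plus chain-of-thought), we have $\mathrm{TokenCost}_{\mathrm{LLM}}(x_k)\ge L(x_k)$. Conversely, all tokens attributable to $x_k$ are counted in $L(x_k)$, so $\mathrm{TokenCost}_{\mathrm{LLM}}(x_k)\le L(x_k)$ up to an additive constant for the task-independent instruction. Together these give $\Theta(L(x_k))$. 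For the lower bound to grow with modality size, I will note that a faithful serializer has length at least linear in the number of entries (rows, time steps).

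\textbf{EywaAgent bound.} Follow the invoke branch $u=\phi_k(s)$, $o=F_k(x,u)$, $z=\psi_k(o)$, with state update $s^{(t+1)}=s^{(t)}\cup\{z^{(t)}\}$. Under the MCP instantiation, the server retrieves $x_k$ from storage, so $x_k$ never enters $s$. Only two things enter the language context for $x_k$: the emitted tool call, costing $L_{\mathrm{call}}$ tokens, and the appended adapter output, costing $L_\psi(o_k)$ tokens. The computation $F_k(x_k,u_k)$ runs outside the language channel and costs zero language tokens. Any subsequent reasoning step conditions on $z$ rather than on $x_k$, so its $x_k$-attributable cost is bounded by a constant multiple of $|z|$. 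This gives $O(L_{\mathrm{call}}+L_\psi(o_k))$. If the agent makes several calls, a bounded number of invocations (at most the turn budget) keeps the constant finite.

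\textbf{Ratio and main obstacle.} The ratio claim follows once $L_{\mathrm{call}}$ and $L_\psi(o_k)$ stay bounded, or grow sublinearly, as $|x_k|\to\infty$, while $L(x_k)=\Omega(|x_k|)$. The call schema depends only on parameters such as the dataset identifier and horizon. The adapter output depends on the output size; for example, a forecast of fixed horizon or a fixed number of test predictions is independent of history length or training rows. This independence is the main obstacle, because it is really a modeling assumption about $\psi_k$ and $\mathcal{O}_k$, not a consequence of the definitions. I would state it explicitly as a hypothesis: $L_\psi(o_k)=o(L(x_k))$ and $L_{\mathrm{call}}=O(1)$ uniformly in the size of $x_k$. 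I would then conclude $\mathrm{TokenCost}_{\mathrm{Eywa}}/\mathrm{TokenCost}_{\mathrm{LLM}}=O\big((L_{\mathrm{call}}+L_\psi(o_k))/L(x_k)\big)\to 0$.
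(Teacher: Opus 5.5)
Your proposal is correct and follows essentially the same argument as the paper. The LLM-only cost is $\Theta(L(x_k))$ by the definition of $L(x_k)$. The EywaAgent cost consists only of the structured call and the adapted response, because $F_k$ computes outside the language channel. The ratio vanishes because $L(x_k)$ is at least linear in the number of entries while $L_{\mathrm{call}}$ and $L_\psi(o_k)$ do not grow with it.

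Your version is somewhat more careful than the paper's. The paper simply asserts that no language-side reasoning over $x_k$ is needed and that $L_{\mathrm{call}}$ and $L_\psi(o_k)$ are ``typically constant or polylogarithmic in $n$''. You correctly flag both as modeling hypotheses rather than consequences of the definitions.
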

\begin{proof}
A language-only agent must both include $x_k$ in its prompt and perform any reasoning over it within the same language channel, so its combined prompt-plus-reasoning token count is $\Theta(L(x_k))$ by definition. An EywaAgent instead routes $x_k$ to the foundation model $F_k$ through a structured call of length $L_{\mathrm{call}}$. No reasoning tokens over $x_k$ are required on the language side. Summing these two contributions gives the stated bound. For time series or tables with $n$ entries, $L(x_k)=\Theta(n)$ at minimum from the serialization alone, while both $L_{\mathrm{call}}$ and $L_{\psi}(o_k)$ are typically constant or polylogarithmic in $n$.
\end{proof}

\begin{proposition}[Wall-Clock Latency]
\label{prop:latency}
Let the per-token latency of the LLM be $\alpha_{\mathrm{LLM}}$ and the FM call latency be $\alpha_{\mathrm{FM}}$ (independent of $L(x_k)$). Then
\[
\mathrm{Latency}_{\mathrm{LLM}}(x_k)\;=\;\Theta(\alpha_{\mathrm{LLM}}\cdot L(x_k)),
\qquad
\mathrm{Latency}_{\mathrm{Eywa}}(x_k)\;=\;O(\alpha_{\mathrm{LLM}}\cdot(L_{\mathrm{call}}+L_{\psi}(o_k))+\alpha_{\mathrm{FM}}).
\]
\end{proposition}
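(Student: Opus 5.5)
The plan is to mirror the proof of Proposition~\ref{prop:token_complexity} and convert token counts into time by multiplying by the per-token cost. The latency of either pipeline splits into time spent generating or consuming language tokens plus time spent in external calls. For a language-only agent there are no external calls, so I will write $\mathrm{Latency}_{\mathrm{LLM}}(x_k)=\alpha_{\mathrm{LLM}}\cdot \mathrm{TokenCost}_{\mathrm{LLM}}(x_k)$. Then I will invoke Proposition~\ref{prop:token_complexity}, which gives $\mathrm{TokenCost}_{\mathrm{LLM}}(x_k)=\Theta(L(x_k))$. Both directions of the $\Theta$ carry over because $\alpha_{\mathrm{LLM}}$ is a positive constant.

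For the EywaAgent I will trace the Tsaheylu pipeline $s\xrightarrow{\phi_k}u_k\xrightarrow{F_k}o_k\xrightarrow{\psi_k}z_k$ and bound each stage separately:
\begin{itemize}
\item Producing the structured call costs at most $\alpha_{\mathrm{LLM}}L_{\mathrm{call}}$.
\item Executing $F_k(x_k,u_k)$ on the MCP server costs $\alpha_{\mathrm{FM}}$, which is independent of $L(x_k)$ by hypothesis.
\item Reintegrating $z_k=\psi_k(o_k)$ into the language state costs at most $\alpha_{\mathrm{LLM}}L_\psi(o_k)$.
\end{itemize}
The stages run sequentially, so their times add. Reading the token accounting from Proposition~\ref{prop:token_complexity}, the language side never touches $x_k$ itself, since it stays in server-side storage, so no further term of order $L(x_k)$ appears. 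Summing gives $O(\alpha_{\mathrm{LLM}}(L_{\mathrm{call}}+L_\psi(o_k))+\alpha_{\mathrm{FM}})$.

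The main obstacle is making the modelling assumption explicit: latency must be linear in language tokens with a uniform per-token cost $\alpha_{\mathrm{LLM}}$. This assumption ignores two effects:
\begin{itemize}
\item prompt tokens and generated tokens have different costs;
\item attention cost grows superlinearly with context length.
\end{itemize}
I would state it as the cost model under which the proposition is meant. I would also note that superlinear effects only strengthen the lower bound on the LLM side. The reason is that $\Theta$ is only claimed up to this linear model, and any extra cost in the model concentrates in the long serialized $x_k$. Finally, I would remark that retries and the other overheads common to both pipelines add identical constants, so they do not affect the asymptotic separation.
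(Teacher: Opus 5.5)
Your proposal is correct and takes the same route as the paper. The paper's proof is a single observation: both bounds follow from Proposition~\ref{prop:token_complexity} plus the additive latency of one FM call. You carry this out stage by stage and state explicitly the linear per-token cost model that the paper leaves implicit. One minor quibble with your closing remark: bounded retries re-serialize $x_k$ on the language-only side, so they contribute a constant multiplicative factor rather than an additive constant, and both asymptotic statements still hold.
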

\begin{proof}
Both bounds follow from Proposition~\ref{prop:token_complexity} and the additive latency contribution of an FM call.
\end{proof}

In practice, the latency constant $\alpha_{\mathrm{FM}}$ in Proposition~\ref{prop:latency} is typically much smaller than the LLM processing cost $\alpha_{\mathrm{LLM}}\cdot L(x_k)$ for several reasons. First, the domain-specific foundation models are typically smaller than frontier LLMs in parameter count and are specialized to fixed-shape tensor inputs, which makes their inference both inherently lightweight and highly accelerator-friendly. Second, unlike LLM calls that are usually served through remote API endpoints, an FM invocation in Eywa is executed through a local MCP backend, avoiding network round-trip latency and API rate limits. Third, many FMs of interest admit further acceleration through batching, precomputed embeddings, or caching at the MCP layer. Consequently, $\alpha_{\mathrm{FM}}\le\alpha_{\mathrm{LLM}}\cdot L(x_k)$ whenever $x_k$ is non-trivial, and the Eywa latency in Proposition~\ref{prop:latency} is dominated by the (drastically reduced) LLM-side term $\alpha_{\mathrm{LLM}}\cdot(L_{\mathrm{call}}+L_{\psi}(o_k))$.

\subsection{Summary of Theoretical Results}
\label{app:summary}

We close this section by consolidating the theoretical results developed above. Table~\ref{tab:theory_summary} lists every result together with a one-line summary of its key statement, so that the logical structure of the analysis can be read off at a glance. The results are organized along the same storyline as the main paper: from the information-theoretic characterization of the language interface bottleneck (Section~\ref{app:info}), to the single-agent and multi-agent expressivity guarantees of EywaAgent and EywaMAS (Sections~\ref{app:expressivity}--\ref{app:mas}), to the adaptivity result for EywaOrchestra (Section~\ref{app:orchestra}), and finally to the efficiency analysis that explains Eywa's empirical token and latency savings (Section~\ref{app:efficiency}).

Read together, these results provide an end-to-end theoretical justification for the Eywa framework. The information-theoretic bottleneck (Lemmas~\ref{lem:dpi}--\ref{lem:bayes_gap} and Proposition~\ref{prop:llm_lower_bound}) identifies a first-principles reason why language-only agents are fundamentally limited on scientific inputs: the serialization step discards task-relevant information that no amount of downstream reasoning can recover. The expressivity analysis (Proposition~\ref{prop:containment} and Theorem~\ref{thm:eywaagent_improvement_app}) then shows that this bottleneck is closed at the single-agent level by EywaAgent, which contains the LLM-only class while achieving strictly smaller risk and strictly enlarging the set of perfectly solvable tasks. Lemma~\ref{lem:llm_mas_closure} and Theorem~\ref{thm:eywamas_app} lift this advantage from the single-agent to the multi-agent setting: under mild reachability, a single EywaAgent is sufficient to propagate the recovered information to the system output, while LLM-only MAS remain subject to the same serialization bottleneck. Theorem~\ref{thm:orchestra_oracle} further shows that once the configuration space is heterogeneous, task-adaptive orchestration provides an additional strict improvement over any fixed configuration. Finally, Propositions~\ref{prop:token_complexity} and~\ref{prop:latency} connect this theory to empirically observable efficiency: by offloading modality-specific computation to a foundation model, EywaAgent removes the $\Theta(L(x_k))$ scaling that language-only pipelines pay in tokens and latency, matching the $\sim$30\% token reduction reported in our experiments (Section~\ref{sec:experiment}).

\clearpage

\section{Further Discussions}
\label{app:further_discussion}

\subsection{More Related Works}
\label{ap:related_work}

\subsubsection{Agentic AI Systems.} 
In recent years, LLMs and AI Agents are becoming more powerful due to comprehensive pre-training \cite{DBLP:journals/corr/abs-2601-03267}, advanced backbone architectures \cite{DBLP:journals/corr/abs-2505-06708, DBLP:journals/corr/abs-2510-26692}, training algorithms \cite{DBLP:journals/corr/abs-2501-12948}, and extended modalities supports \cite{DBLP:journals/corr/abs-2603-03276, DBLP:journals/corr/abs-2505-07062, DBLP:journals/corr/abs-2509-17765}. As a result, LLM and AI Agent applications are revolutionizing a variety of industries \cite{DBLP:journals/corr/abs-2503-21460}. More recently, we identify three trends which are related to our \textit{Eywa}.

\textbf{Agentic Reasoning.}
Real-world tasks are often too complex to be solved by AI agents in a single forward pass after receiving an instruction \cite{DBLP:journals/corr/abs-2602-07338}. Instead, agentic systems typically approach such tasks through structured reasoning procedures. A common paradigm is to first formulate an explicit plan or decompose the task into manageable subgoals \cite{DBLP:conf/nips/0001ST00Z23,ningmc}, and then solve these subgoals through iterative execution, reflection, clarification, and continuation across multiple interaction turns \cite{DBLP:journals/corr/abs-2509-23537, DBLP:journals/corr/abs-2601-11868, DBLP:journals/corr/abs-2603-00873}. This line of work highlights the importance of moving beyond one-shot language generation toward more deliberate and interactive reasoning processes.
Scientific tasks exhibit similar, and often more pronounced, complexity: they may require reasoning over specialized data modalities, domain-specific constraints, long-horizon dependencies, and expert-level predictive models \cite{jumper2021highly, merchant2023scaling, DBLP:journals/corr/abs-2507-01903, DBLP:journals/corr/abs-2508-14111, DBLP:journals/corr/abs-2511-02864}. However, most existing agentic reasoning frameworks remain primarily centered on the language capabilities of AI agents. Our \textit{Eywa} is motivated by this gap. Rather than relying solely on language agents to reason over scientific problems, \textit{Eywa} introduces domain-specific foundation models into agentic systems, enabling specialized scientific models to contribute domain representations, predictions, and feedback throughout the reasoning process.

\textbf{Orchestrated Intelligence.}
Building on the capabilities of individual agents, orchestrated intelligence has emerged as a central paradigm for tackling complex tasks that exceed the reach of any single component in the system. Early frameworks such as AutoGen~\cite{DBLP:journals/corr/abs-2308-08155} and MetaGPT~\cite{DBLP:conf/iclr/HongZCZCWZWYLZR24} demonstrate that conversational collaboration among role-playing agents can effectively decompose and solve intricate problems. Beyond flat conversational patterns, recent efforts have begun to explicitly investigate the orchestration layer itself. AgentOrchestra~\cite{zhang2025agentorchestra} adopts a hierarchical architecture in which a planning agent dispatches specialist sub-agents through a tool--environment--agent protocol, while evolving-orchestration approaches~\cite{DBLP:journals/corr/abs-2505-19591} train a centralized orchestrator via reinforcement learning to adaptively sequence and prioritize agents according to the evolving task state. Difficulty-aware orchestration~\cite{DBLP:journals/corr/abs-2509-11079} further tailors multi-agent workflows to query-level complexity through learned routing.
Despite this progress, existing orchestration frameworks remain largely language-centric: they primarily route, sequence, or coordinate agents that communicate through natural language, while non-linguistic expert models are often treated as passive callable tools. This design is insufficient for scientific domains, where reasoning may depend on heterogeneous representations such as molecular structures, crystal graphs, time series, spatial fields, and tabular measurements. Our \textit{EywaOrchestra} addresses this limitation by dynamically orchestrating heterogeneous experts, including LLM agents and domain-specific foundation models. By jointly selecting agent configurations, model specializations, and communication topologies, \textit{EywaOrchestra} extends orchestrated intelligence from language-agent workflow optimization toward heterogeneous scientific model collaboration.

\subsubsection{Modeling Structured Data with LLMs.}

Structured data modeling has long been a central problem in machine learning, spanning data types such as graphs, tables, time series, knowledge graphs, and other relational or compositional objects. Since the rise of deep learning, a wide range of neural architectures have been developed to capture the inductive biases of different structured domains, including graph neural networks, temporal models, tabular learning methods, and representation learning techniques for relational data \cite{wu2020comprehensive, lim2021time, borisov2022deep, antelmi2023survey, han2022data, fu2023investigating}. More recently, the rapid development of large language models and AI agents has opened new possibilities for leveraging language-based reasoning, instruction following, and tool use to support structured data modeling.

\textbf{LLMs directly handle structured data.}
One line of work investigates whether LLMs can directly process structured data by converting non-textual structures into textual or sequence-based representations \cite{DBLP:conf/sigir/Tang00SSCY024, DBLP:journals/corr/abs-2506-11040, DBLP:journals/corr/abs-2601-23204, DBLP:journals/corr/abs-2407-12522, DBLP:conf/emnlp/JiangZDYZW23}. These approaches benefit from the general reasoning and in-context learning capabilities of LLMs, enabling them to perform tasks such as question answering, classification, forecasting, and relational reasoning over structured inputs. However, direct textualization may lose important structural information, especially when the data contains long-range dependencies, high-order relations, precise numerical values, or domain-specific semantics that are difficult to faithfully express in natural language \cite{DBLP:conf/wsdm/SuiZZH024, DBLP:journals/corr/abs-2507-13646, DBLP:journals/bmcbi/SadeghiBFLN24, DBLP:journals/corr/abs-2510-01538, DBLP:conf/emnlp/YangTXZLH25}.

\textbf{LLMs to help existing models.}
Another line of work uses LLMs as assistants to improve existing structured data models rather than replacing them \cite{DBLP:journals/tkde/JinLHJJH24, DBLP:journals/sigkdd/ChenMLJWWWYFLT23, DBLP:journals/corr/abs-2502-08942}. For example, textual metadata or descriptions produced by LLMs can be combined with graph, tabular, or temporal encoders to enrich structured representations \cite{DBLP:journals/corr/abs-2510-21131, DBLP:conf/nips/YanLLY0ZYZHSDZ023, DBLP:conf/acl/0006ZJFJBH025, DBLP:journals/corr/abs-2509-00687, DBLP:journals/corr/abs-2506-00009}. LLMs can also act as controllers that select models, design prompts or queries, invoke external tools, and interpret the outputs of specialized predictors \cite{DBLP:conf/icml/TriratJH25, DBLP:journals/tmlr/TornedeDEGMRSTT24, DBLP:conf/mm/LuoFN024}. These methods preserve the strengths of domain-specific architectures while using LLMs to provide semantic knowledge and flexible reasoning.

\textbf{Re-programming LLMs for structured data.}
A third direction adapts LLMs themselves to structured data through re-programming, prompting, or lightweight adaptation. Instead of training a structured-data model from scratch, these methods map structured inputs into the embedding or token space of a pretrained language model, allowing the LLM to be reused for tasks beyond natural language \cite{DBLP:conf/iclr/0005WMCZSCLLPW24, DBLP:journals/corr/abs-2402-05862, DBLP:journals/corr/abs-2502-13449, DBLP:journals/corr/abs-2409-03444}. 
% Representative strategies include prefix or prompt tuning, modality projection, tokenization of numerical or relational objects, and alignment between structured representations and language embeddings. 
This paradigm leverages the broad prior knowledge and scaling properties of LLMs, but it also faces challenges in preserving structural fidelity, handling large-scale inputs, and ensuring that the language model's inductive biases align with the target scientific domain.

Despite these advances, most existing approaches still center on adapting structured data to the language-model interface. In contrast, many scientific problems are naturally handled by specialized foundation models that operate directly over structured representations. Our work therefore takes a complementary perspective: rather than forcing all structured data into language, \textit{Eywa} introduces domain-specific foundation models into agentic systems, allowing LLM agents to coordinate, invoke, and reason with specialized models that retain their native structured-data inductive biases.

\subsection{Future Directions}

\textbf{Scaling heterogeneous scientific model ecosystems.}
A natural future direction is to scale heterogeneous agentic systems beyond a small set of manually integrated domain-specific foundation models. Enabling heterogeneous components to collaborate within a unified agentic system remains challenging from the engineering perspective, because different models may operate over incompatible input formats and provide outputs with varying degrees of interpretability. As the number of available experts grows, the system must also determine which models are relevant to a task, whether their predictions are reliable, and how conflicting evidence from multiple experts should be reconciled.
Building such an ecosystem requires more than simply adding more APIs. It calls for standardized model interfaces, rich metadata descriptions, and capability profiling. Addressing these challenges would move heterogeneous agentic systems from small-scale proof-of-concept integrations toward scalable scientific model ecosystems.

\textbf{Learning better orchestration policies.}
In this work, orchestration is primarily driven by task-level reasoning over available agents, foundation models, and communication topologies. A promising direction is to learn orchestration policies from interaction data, execution traces, and task outcomes. Future systems may train orchestrators to predict which expert models to invoke, when to invoke them, and which communication topology is most suitable for a given task.

\textbf{More advanced integration between LLMs and scientific foundation models.}
The stronger "Tsaheylu" is, the better. Future work could explore tighter coupling mechanisms beyond model context protocol, including shared representation spaces, differentiable interfaces, bidirectional alignment between language and scientific embeddings, and memory/skill mechanisms that preserve domain-specific evidence across reasoning steps.

\textbf{Extending EywaBench for heterogeneous scientific reasoning.}
Another important direction is to further extend \textit{EywaBench} into a broader benchmark suite for heterogeneous scientific reasoning. While \textit{EywaBench} provides an initial testbed, scientific reasoning spans a much wider range of domains, data modalities, task formats, and expert models. Future extensions could incorporate more datasets as well as additional data types.. Beyond expanding coverage, \textit{EywaBench} can also be extended to evaluate richer capabilities such as model selection, cross-domain evidence integration, communication efficiency, and adaptive topology selection. Such extensions would make \textit{EywaBench} a more comprehensive testbed for understanding when heterogeneous orchestration is genuinely beneficial over language-only agents, single-domain models, or static multi-agent workflows.

\subsection{Limitation}

\textbf{Dependence on foundation model capabilities.}
\textit{Eywa} builds on existing language models and domain-specific foundation models, and therefore its performance can be influenced by the reasoning ability of the underlying LLMs, the predictive quality of scientific experts, and the reliability of their interfaces. As these base models continue to improve, we expect \textit{Eywa} to benefit from stronger general-purpose and domain-specific models.

\textbf{Coverage of domains and expert models.}
Although \textit{EywaBench} covers a diverse set of scientific domains and tasks, it cannot exhaustively represent the full space of scientific applications. Some domains may involve specialized data formats, assumptions, or expert models that are not included in the current evaluation. Extending \textit{EywaBench} to broader datasets and scientific workflows remains an important future direction.

\textbf{Computational cost.}
Compared with single-agent baselines, heterogeneous collaboration may introduce additional computation, latency, and communication overhead. This cost can be mitigated through more efficient orchestration, selective expert invocation, and adaptive stopping strategies, which we leave for future work.

\section{EywaBench Details}
\label{sec:dataset_details}

\subsection{Source Datasets}

\begin{figure}[t]
  \centering
  \includegraphics[width=0.8\linewidth]{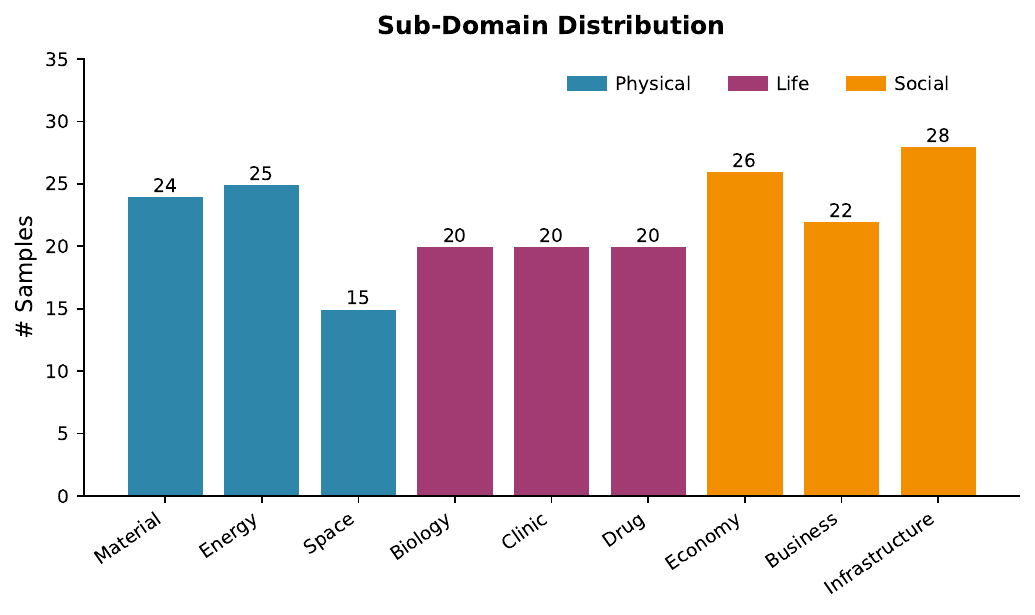}
  \caption{Composition of \textit{Eywabench} along sub-domains. distributions are all near-uniform, avoiding the domain-collapse failure modes.}
  \label{fig:eywa_overview}
\end{figure}

\textit{EywaBench} currently consists of 4 datasets: DeepPrinciple \cite{song2025evaluating}, MMLU-Pro \cite{wang2024mmlu}, fev-bench \cite{DBLP:journals/corr/abs-2509-26468}, and TabArena \cite{DBLP:journals/corr/abs-2506-16791}.

\textbf{DeepPrinciple: scientific QA dataset.} 
A scenario-grounded benchmark for authentic scientific discovery across four core disciplines (biology, chemistry, materials science, and physics). It advances beyond decontextualized QA by introducing a large-scale, two-phase evaluation that requires agents to perform iterative reasoning, hypothesis generation, and experimental design in highly diverse scientific contexts.

\textbf{MMLU-Pro: multi-task language understanding benchmark.}
A substantially enhanced reasoning benchmark comprising over 12,000 carefully curated questions across 14 diverse domains. We select scientific domains and convert the original multiple-choice format into open-ended question answering, requiring models to produce direct answers.

\textbf{fev-bench: realistic large-scale time series dataset.}
A comprehensive time series forecasting benchmark containing ~100 rigorous time series. Many of the time series incorporate complex, realistic covariates, which further highlight their high diversity.

\textbf{TabArena: large-scale tabular dataset.}
A curated collection of 51 tabular classification and regression datasets covering a wide array of real-world use cases. Supported by a massive evaluation scale of approximately 25 million trained model instances, it offers immense diversity in feature spaces and data distributions for tabular machine learning.

\begin{figure*}[t]
  \centering
  \begin{subfigure}[t]{0.49\linewidth}
    \centering
    \includegraphics[width=\linewidth]{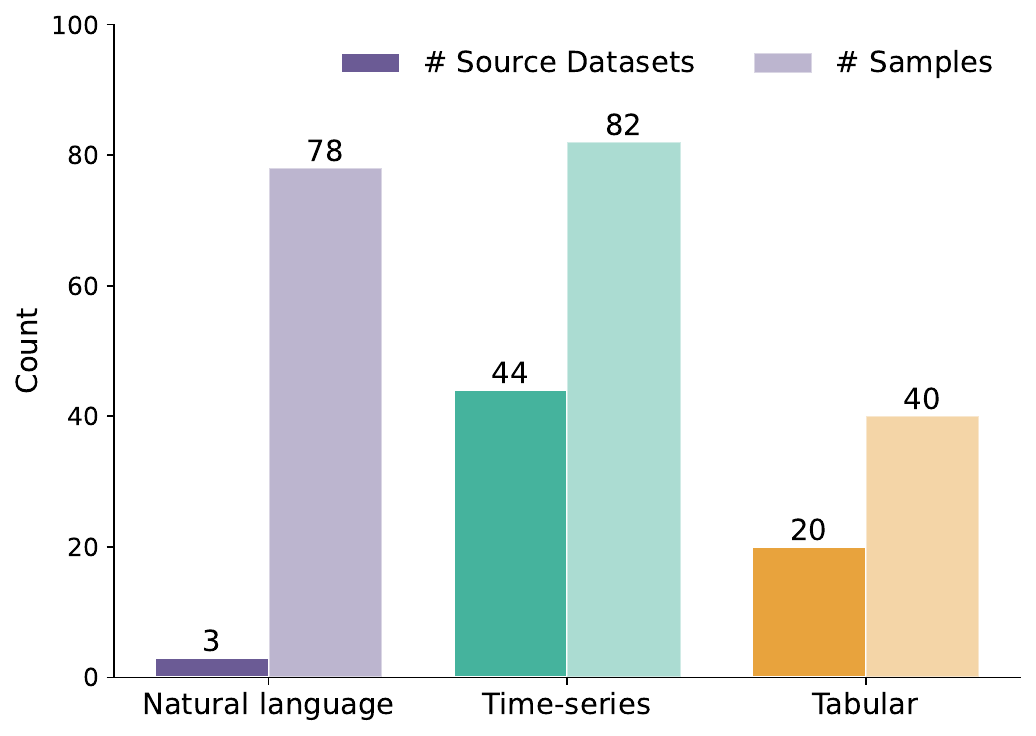}
    \caption{Source-dataset and sample counts by modality.}
    \label{fig:source_modality_counts}
  \end{subfigure}\hfill
  \begin{subfigure}[t]{0.49\linewidth}
    \centering
    \includegraphics[width=\linewidth]{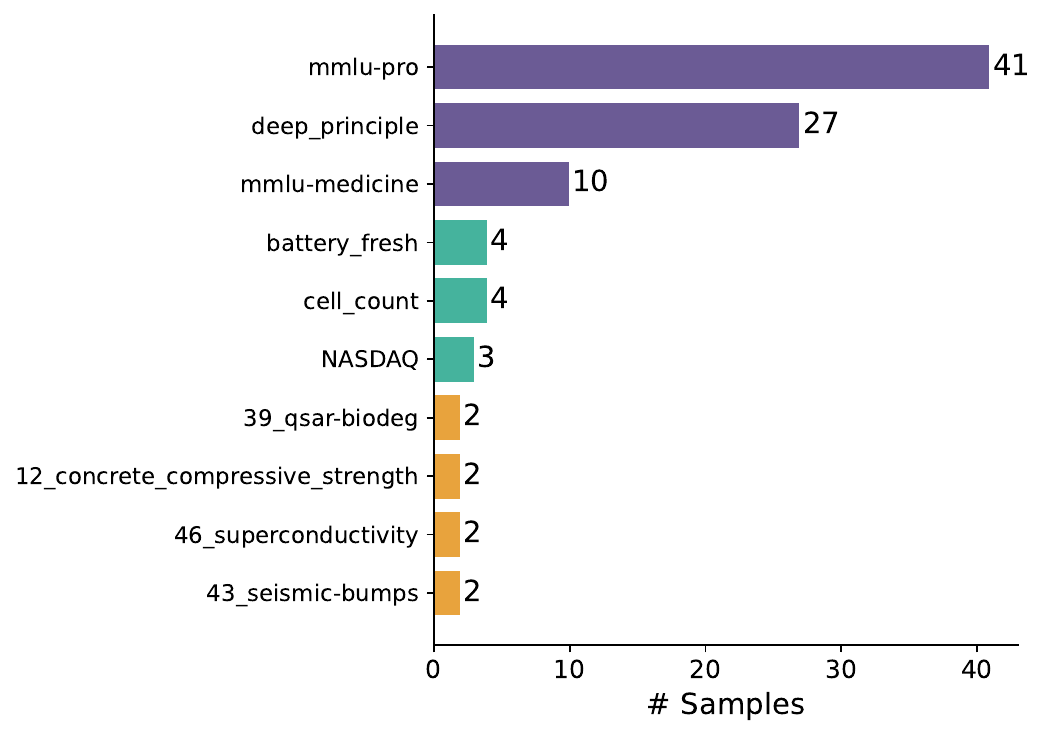}
    \caption{Top-10 source datasets by sample count.}
    \label{fig:top_source_datasets}
  \end{subfigure}
  \caption{Source diversity of \textit{EywaBench-V1}. 
  (a) The benchmark covers distinct source datasets and task samples across natural-language, time-series, and tabular modalities, showing broad modality-level coverage. 
  (b) The top-10 source datasets follow a long-tailed distribution, with no single source dominating the benchmark. 
  This design reduces the risk that method rankings are driven by overfitting to a single source distribution.}
  \label{fig:eywa_sources}
\end{figure*}

\textbf{Scalability and Sampling.}
\textit{EywaBench} is designed to be scalable. Its source tasks are drawn from large and diverse scientific benchmarks: DeepPrinciple contains 1,125 questions spanning chemistry, materials science, biology, and physics; MMLU-Pro contains 6,978 questions across physics, chemistry, engineering, economics, health, business, biology, and computer science; FEV-Bench includes 96 time-series datasets, each with up to 30,000 covariate series; and the tabular benchmark contains 51 tables, each with up to 150,000 rows. Moreover, for time-series and tabular tasks, additional task instances can be constructed by sampling different sub-series, covariate groups, rows, columns, and prediction targets from the original datasets. This makes the potential scale of \textit{EywaBench} substantially larger than the number of original datasets alone.

\textbf{\textit{EywaBench-V1.}}
In this work, we sample 200 task instances from the four benchmark sources to form a controlled evaluation subset. This choice is motivated by two considerations. First, unlike standard language-only benchmarks, evaluating heterogeneous agentic systems requires manually configuring and validating \textit{EywaMAS}. Exhaustively evaluating all possible task instances would therefore be prohibitively expensive for human expert inspection. Second, a moderate-sized subset allows us to maintain balanced coverage across domains and data modalities while keeping the evaluation reproducible and comparable across different agentic frameworks.

\subsection{Data Schema}

\begin{table}[h]
    \centering
    \caption{Data schema of \textit{EywaBench}. The benchmark is stored as a dictionary-encoded Parquet file with 200 task instances and six fields.}
    \label{tab:eywabench_schema}
    \resizebox{0.9\linewidth}{!}{
    \begin{tabular}{lll}
        \toprule
        \textbf{Column} & \textbf{Type} & \textbf{Description} \\
        \midrule
        \texttt{domain} & categorical & Scientific sub-domain, e.g., materials, energy, biology. \\
        \texttt{task} & string & Task type associated with the sample. \\
        \texttt{description} & string & Detailed task description of the sample. \\
        \texttt{output\_size} & int64 & Maximum allowed output length for the solver. \\
        \texttt{input} & string & Prompt, context, or structured input provided to the solver. \\
        \texttt{label} & string & Ground-truth output or reference answer. \\
        \bottomrule
    \end{tabular}
    }
\end{table}

\textit{EywaBench-V1} is released as a single self-contained Parquet file. The file is dictionary-encoded with Snappy compression and generated using PyArrow~20.0.0 and pandas~2.3.3. Each row represents one self-contained scientific problem and follows the six-column schema summarized in Table~\ref{tab:eywabench_schema}. For each source dataset, we also provide parsing scripts that convert its original data format into the unified \textit{EywaBench} schema.

\subsection{Composition and Coverage Analysis}
\label{sec:eywabench_analysis}

All statistics in this section are computed directly from the \textit{EywaBench-V1} parquet split.

\textbf{Overview.}

\begin{figure}[t]
  \centering
  \includegraphics[width=0.6\linewidth]{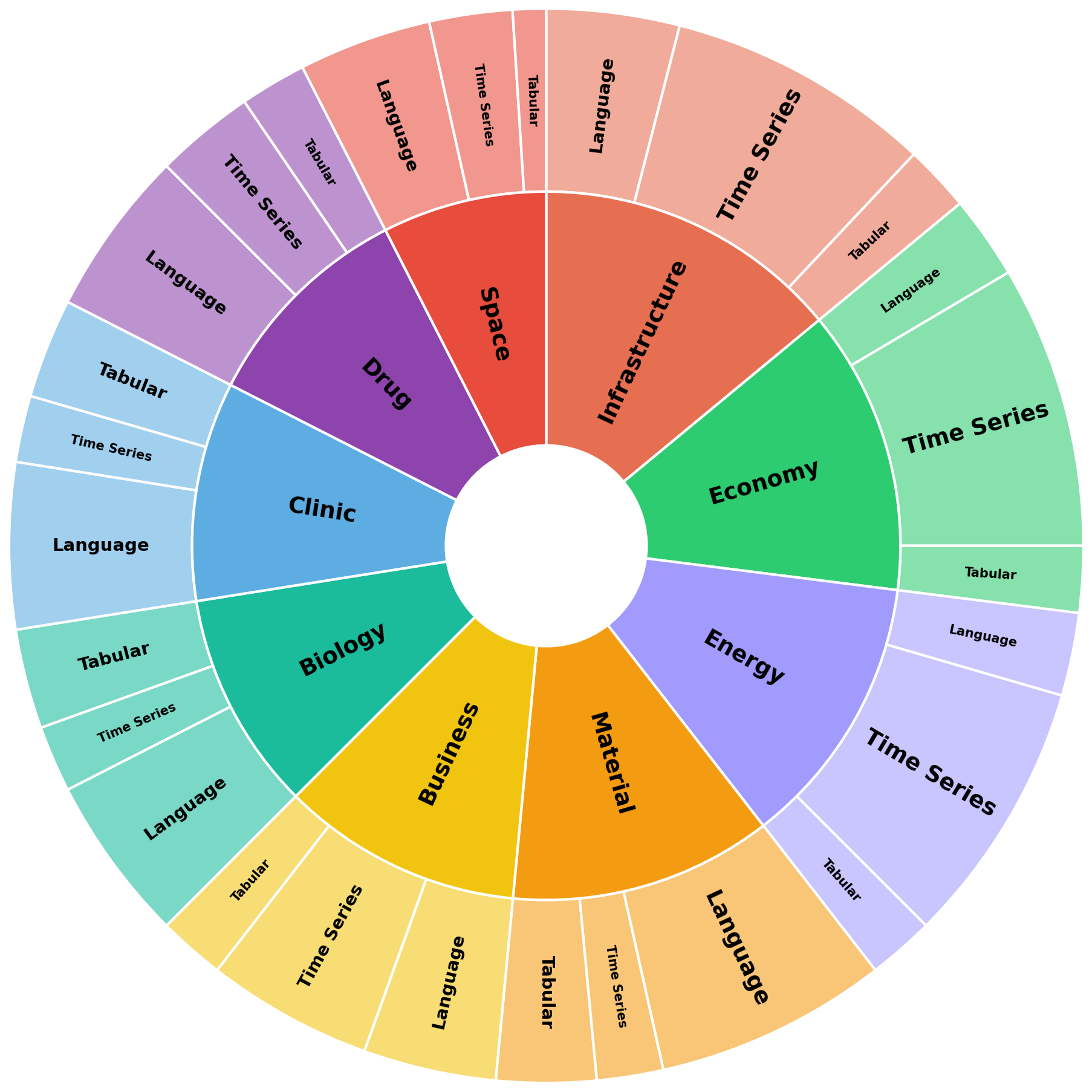}
  \caption{Hierarchical view of \textit{Eywabench}: nine sub-domains
(inner ring) $\times$ three modalities (outer ring). All $27$
cells are populated; the largest sub-domain accounts for $14.0\%$
of the benchmark.}
  \label{fig:eywa_hierarchy}
\end{figure}

The released split contains $N{=}200$ task samples. The underlying construction pipeline is fully parametric and can be scaled along two independent axes: (i)~\emph{sample volume}, where additional instances can be generated from the same source datasets by resampling temporal windows, tabular subsets, covariate groups, or prediction targets without changing the benchmark schema; and (ii)~\emph{domain and modality coverage}, where new scientific domains and data modalities can be incorporated through additional parsers. Therefore, the released split should be viewed as a \emph{representative slice} of a much larger and extensible benchmark space.

\paragraph{Balanced taxonomy along three orthogonal axes.}
\textit{Eywabench} is organised as a three-tier taxonomy (parent domain~$\to$~sub-domain~$\to$~source dataset), and is simultaneously stratified along the modality axis. The benchmark is near-uniform along all three principal axes: the three parent domains are covered at $32.0\%/30.0\%/38.0\%$, the nine sub-domains each contain $15$--$28$ instances, and the modalities are represented at $41.0\%/39.0\%/20.0\%$. We provide a visualization in Figure~\ref{fig:eywa_overview}. Quantitatively, the normalised Shannon entropy (where $H_{n}{=}1$ denotes a perfectly uniform distribution) is $H_{n}{=}0.995$ at the parent level, $H_{n}{=}0.993$ at the sub-domain level, and $H_{n}{=}0.960$ at the modality level, indicating that no single domain or modality dominates the benchmark.
every one of the nine sub-domains carries a non-trivial mix of all three modalities, yielding $100\%$ cross-modal coverage of the taxonomy. This ensures that conclusions about modality-specific agent behaviour drawn from \textit{Eywabench} generalise across scientific fields rather than conflating modality effects with domain effects. We provide a modality visualization in Figure \ref{fig:eywa_hierarchy}.

% ----------------- Detailed sub-domain table ------------------------
\begin{table}[h]
\centering
\caption{Detailed composition of \textit{EywaBench-V1} by parent domain, sub-domain, modality, and source dataset. 
For each sub-domain, we report the total number of samples, the number of natural-language, time-series, and tabular samples, and the number of distinct source datasets. 
The total number of unique source datasets is $67$, which is smaller than the sum of per-sub-domain source counts because each source dataset may contribute multiple samples.}
\label{tab:eywa_subdomain}
\resizebox{0.85\linewidth}{!}{%
\begin{tabular}{llccccc}
\toprule
\textbf{Parent} & \textbf{Sub-domain} & \textbf{\# samples} & \textbf{Natural Language} & \textbf{Time Series} & \textbf{Tabular} & \textbf{\# Source Datasets} \\
\midrule
\multirow{3}{*}{Physical}
 & Material       & $24$ & $14$ & $4$  & $6$ & $5$  \\
 & Energy         & $25$ & $5$  & $16$ & $4$ & $11$ \\
 & Space          & $15$ & $8$  & $5$  & $2$ & $7$  \\
\midrule
\multirow{3}{*}{Life}
 & Biology        & $20$ & $10$ & $4$  & $6$ & $5$  \\
 & Clinic         & $20$ & $10$ & $4$  & $6$ & $6$  \\
 & Drug           & $20$ & $10$ & $6$  & $4$ & $9$  \\
\midrule
\multirow{3}{*}{Social}
 & Economy        & $26$ & $5$  & $17$ & $4$ & $11$ \\
 & Business       & $22$ & $8$  & $10$ & $4$ & $8$  \\
 & Infrastructure & $28$ & $8$  & $16$ & $4$ & $11$ \\
\midrule
\multicolumn{2}{l}{\textbf{Total}} & $\mathbf{200}$ & $\mathbf{78}$ & $\mathbf{82}$ & $\mathbf{40}$ & $\mathbf{67}$ \\
\bottomrule
\end{tabular}}
\end{table}

\paragraph{Diversity of source datasets.}
At the leaf level of the taxonomy, \textit{EywaBench-V1} draws samples from $67$ distinct source datasets. These sources cover a broad range of scientific data, including ETT, ERCOT, NASDAQ, Jena Weather, LOOP-SEATTLE, Concrete Compressive Strength, and Superconductivity, among others. Overall, the benchmark includes $21$ physical-science sources, $19$ life-science sources, and $28$ social-science sources. The source distribution has a normalized Shannon entropy of $H_n{=}0.846$, indicating high diversity with a long-tailed structure, as visualized in Figure~\ref{fig:eywa_sources}. Importantly, even the largest individual source, MMLU-Pro, contributes only $41$ samples, accounting for $20.5\%$ of the benchmark. This reduces the risk that method rankings are dominated by any single source distribution.

\subsection{Metrics}
\label{sec:metrics}

Each task instance $i$ produces a per-instance \emph{utility} $u_i \in [0,1]$ ($u_i{=}1$ for a perfect prediction). Because \textit{EywaBench} mixes three output modalities, $u_i$ is defined modality-specifically while always being bounded in $[0,1]$ so that scores are directly comparable across modalities. Slice-level numbers reported in the paper are the unweighted mean $\bar{u}(\mathcal{D}) = \tfrac{1}{|\mathcal{D}|}\sum_{i\in\mathcal{D}} u_i$ (with sample standard deviation as dispersion); the same aggregation is applied to runtime and the four token-cost components.

\textbf{Natural language.}
For DeepPrinciple and the open-ended MMLU-Pro variant, the raw output is first reduced to a final answer $\hat{y}_i$ and normalised by an operator $\pi$ that trims whitespace/quotes, collapses inner whitespace, and maps a few Unicode variants to ASCII. We then apply a three-stage soft-score cascade:
\emph{(i) Exact match.} If $\pi(\hat{y}_i)=\pi(y_i)$, set $u_i{=}1$.
\emph{(ii) Numeric relative error.} If both sides parse as a single float $\hat{p},g$, set
\begin{equation}
u_i = \exp(-e_{\mathrm{rel}}), \qquad e_{\mathrm{rel}} = \frac{|\hat{p}-g|}{\max(|g|,\,10^{-12})}.
\label{eq:nl_numeric}
\end{equation}
\emph{(iii) Lexical fallback.} Otherwise tokenise both strings with a regex that preserves \LaTeX{} commands, words, numbers, and individual symbols; let $o$ be the multiset overlap and define token precision/recall/F1 as $P_\mathrm{tok}{=}o/|T_\mathrm{pred}|$, $R_\mathrm{tok}{=}o/|T_\mathrm{gold}|$, $F_1^{\mathrm{tok}}{=}2P_\mathrm{tok}R_\mathrm{tok}/(P_\mathrm{tok}{+}R_\mathrm{tok})$ (with $F_1^{\mathrm{tok}}{=}1$ if both empty and $0$ if exactly one is empty). Let $S_\mathrm{char}$ be the \texttt{difflib.SequenceMatcher} ratio on the normalised strings. Then
\begin{equation}
u_i = \min\!\bigl(\tau,\; \alpha\,F_1^{\mathrm{tok}} + \beta\,S_\mathrm{char}\bigr), \qquad \alpha{=}0.6,\ \beta{=}0.4,\ \tau{=}0.8.
\label{eq:nl_textual}
\end{equation}
The cap $\tau$ keeps lexical near-misses strictly below the scores reserved for stages (i)--(ii).

\textbf{Time series.}
The gold and predicted continuations $\mathbf{y},\hat{\mathbf{y}}\in\mathbb{R}^{H}$ are parsed from \texttt{(timestamp,value)} CSVs. With a denominator floor $\varepsilon{=}10^{-2}$ and $\mathcal{I}{=}\{t:|y_t|{>}\varepsilon\}$,
\begin{equation}
\mathrm{sMAPE} = \frac{1}{H}\sum_{t=1}^{H} \frac{2|y_t-\hat{y}_t|}{\max(|y_t|+|\hat{y}_t|,\varepsilon)} \in [0,2], \qquad
\mathrm{MAAPE} = \frac{1}{|\mathcal{I}|}\sum_{t\in\mathcal{I}} \arctan\!\frac{|y_t-\hat{y}_t|}{\max(|y_t|,\varepsilon)} \in [0,\pi/2],
\label{eq:smape_maape}
\end{equation}
($\mathrm{MAAPE}{=}0$ when $\mathcal{I}{=}\emptyset$). The two errors are normalised to $[0,1]$ and combined into
\begin{equation}
u_i = 1 - \frac{1}{2}\!\left(\frac{\mathrm{sMAPE}}{2} + \frac{\mathrm{MAAPE}}{\pi/2}\right) \in [0,1].
\label{eq:ts_utility}
\end{equation}
Combining the two mitigates the pathologies of either metric used alone: $\mathrm{sMAPE}$ saturates symmetrically but is brittle near zero, whereas $\mathrm{MAAPE}$ is well-behaved near zero but less scale-sensitive.

\textbf{Tabular.}
The label and prediction strings are parsed into 1-D arrays via \texttt{DataFrame.values.flatten()} (or \texttt{eval} when the string starts with \texttt{[}). For classification tasks the utility is top-1 accuracy,
\begin{equation}
u_i = \frac{1}{N}\sum_{n=1}^{N} \mathbf{1}\!\bigl[\hat{y}_n = y_n\bigr],
\label{eq:tab_cls}
\end{equation}
implemented via \texttt{sklearn.metrics.accuracy\_score}. For regression tasks the predicted and gold target columns are scored with the same sMAPE+MAAPE combination as time series (Eqs.~\eqref{eq:smape_maape}--\eqref{eq:ts_utility}), treating the $N$ row-wise predictions as an $H$-step forecast. Sharing this rule across modalities keeps numeric-target errors on a common scale, so the cross-modal averages reported in the main paper are directly comparable.

\section{Experiment Details}
\label{app:experiment_details}

\subsection{More Detailed Ablations}
\label{ap:more_ablation}

\begin{table}[!t]
    \centering
    \renewcommand{\arraystretch}{1.16}
    \caption{Overall performance comparison across scientific domains on \textit{EywaBench}. We compare all methods on three dimensions, including utility $(\uparrow)$, inference time $(\downarrow)$, and token consumption $(\downarrow)$. Best results are highlighted in bold and second-best results are underlined. Our proposed methods, EywaAgent, EywaMAS, and EywaOrchestra, achieve strong overall performance while maintaining competitive efficiency.}
    \vspace{-8pt}
    \label{tab:ablation_backend_llm}
    \small
    \resizebox{\textwidth}{!}{
    \begin{tabular}{l|c|ccc|ccc|ccc|c}
        \toprule
        \multirow{2}{*}{\textbf{Method}} &
        \multirow{2}{*}{\textbf{Metrics}} &
        \multicolumn{3}{c}{\textbf{Physical Science}} &
        \multicolumn{3}{c}{\textbf{Life Science}} &
        \multicolumn{3}{c}{\textbf{Social Science}}
        \\
        \cmidrule(lr){3-5}\cmidrule(lr){6-8}\cmidrule(lr){9-11}\cmidrule(lr){12-12}
        & & Material & Energy & Space & Biology & Clinic & Drug & Economy & Business & Infrastructure & Overall\\
        \midrule[-0.4ex]\midrule\addlinespace[-0.000ex]
        \multicolumn{12}{c}{
            \rule{0pt}{1.1em}
            \textbf{\textit{Single-Agent Setting}}
            \rule[-0.3em]{0pt}{1.1em}
        } \\
        [-0.4ex]\midrule\addlinespace[-0.000ex]

        & Utility ($\uparrow$)   & 0.5491 & 0.7980 & 0.5848 & 0.3429 & 0.4010 & 0.4592 & 0.7416 & 0.5560 & 0.5762 & 0.5680\\
        \textbf{EywaAgent (gpt-4.1-nano)}
        & Time ($\downarrow$)  & 9.47 & 20.88 & 10.85 & 31.11 & 23.37 & 31.51 & 23.66 & 13.88 & 12.98 & 19.61\\
        & Tokens ($\downarrow$) & 1654 & 1149 & 1047 & 1337 & 606 & 839 & 1181 & 1036 & 1237 & 1139\\
        [-0.4ex]\midrule\addlinespace[-0.000ex]

        & Utility ($\uparrow$)   & 0.5871 & 0.8390 & 0.6123 & 0.3718 & 0.5085 & 0.6199 & \textbf{0.8048} & \underline{0.7371} & 0.7060 & 0.6558\\
        \textbf{EywaAgent (gpt-5-nano)}
        & Time ($\downarrow$)  & 34.88 & 24.42 & 23.12 & 30.84 & 20.32 & 15.84 & 19.71 & 20.98 & 15.99 & 22.78\\
        & Tokens ($\downarrow$) & 5040 & 3167 & 3329 & 4858 & 2333 & 2210 & 2791 & 2444 & 2248 & 3137\\
        [-0.4ex]\midrule\addlinespace[-0.000ex]

        & Utility ($\uparrow$)   & 0.6272 & 0.8615 & 0.6286 & 0.3670 & 0.4990 & 0.6444 & 0.7790 & 0.7193 & 0.7320 & 0.6640\\
        \textbf{EywaAgent (gpt-5-mini)}
        & Time ($\downarrow$)  & 21.81 & 20.70 & 29.52 & 45.34 & 16.86 & 29.91 & 18.97 & 21.08 & 16.05 & 23.63\\
        & Tokens ($\downarrow$) & 3666 & 2316 & 2541 & 3589 & 1734 & 2527 & 2033 & 1669 & 2092 & 2444\\

        \addlinespace[-0.4ex] \midrule \addlinespace[-0.000ex]
        \multicolumn{12}{c}{
            \rule{0pt}{1.1em}
            \textbf{\textit{Multi-Agent Setting}}
            \rule[-0.3em]{0pt}{1.1em}
        } \\
        [-0.4ex]\midrule\addlinespace[-0.000ex]

        & Utility ($\uparrow$)   & 0.6161 & 0.8305 & 0.6602 & 0.3545 & 0.4468 & 0.5062 & 0.7729 & 0.6404 & 0.6759 & 0.6236\\
        \textbf{EywaMAS (gpt-4.1-nano)}
        & Time ($\downarrow$)  & 31.16 & 39.72 & 27.96 & 42.28 & 36.96 & 31.72 & 38.16 & 63.88 & 39.24 & 39.40\\
        & Tokens ($\downarrow$) & 6592 & 4782 & 3832 & 5284 & 2392 & 3049 & 4328 & 3764 & 4969 & 4421\\
        [-0.4ex]\midrule\addlinespace[-0.000ex]

        & Utility ($\uparrow$)   & \textbf{0.6381} & \textbf{0.8742} & \underline{0.6899} & \textbf{0.3798} & \underline{0.5086} & \underline{0.6248} & \underline{0.7959} & 0.7284 & \textbf{0.7406} & \textbf{0.6761}\\
        \textbf{EywaMAS (gpt-5-nano)}
        & Time ($\downarrow$)  & 77.25 & 75.96 & 72.51 & 111.92 & 59.97 & 59.23 & 68.40 & 58.11 & 46.49 & 72.11\\
        & Tokens ($\downarrow$) & 14529 & 11709 & 11787 & 16502 & 9407 & 8078 & 11044 & 9470 & 8912 & 11214\\
        
        \addlinespace[-0.4ex] \midrule \addlinespace[-0.000ex]
        \multicolumn{12}{c}{
            \rule{0pt}{1.1em}
            \textbf{\textit{Dynamic Orchestration}}
            \rule[-0.3em]{0pt}{1.1em}
        } \\
        [-0.4ex]\midrule\addlinespace[-0.000ex]

        & Utility ($\uparrow$)   & 0.6146 & 0.8259 & 0.5498 & 0.3355 & 0.4567 & 0.6153 & 0.7442 & 0.6725 & 0.6519 & 0.6210\\
        \textbf{EywaOrchestra (gpt-4.1-nano)}
        & Time ($\downarrow$)  & 35.14 & 23.13 & 24.32 & 32.51 & 20.30 & 17.83 & 21.77 & 15.93 & 18.00 & 23.05\\
        & Tokens ($\downarrow$) & 8272 & 5808 & 5846 & 8517 & 4847 & 4912 & 6088 & 4627 & 5317 & 6017\\
        [-0.4ex]\midrule\addlinespace[-0.000ex]

        & Utility ($\uparrow$)   & \underline{0.6249} & \underline{0.8711} & \textbf{0.7187} & 0.3682 & \textbf{0.5159} & \textbf{0.6319} & 0.7830 & \textbf{0.7388} & \underline{0.7298} & \underline{0.6746}\\
        \textbf{EywaOrchestra (gpt-5-nano)}
        & Time ($\downarrow$)  & 61.78 & 39.92 & 75.47 & 67.88 & 45.38 & 45.94 & 49.13 & 34.18 & 28.80 & 48.16\\
        & Tokens ($\downarrow$) & 11535 & 7723 & 10810 & 11315 & 7050 & 6495 & 7117 & 7264 & 6892 & 8335\\
        [-0.4ex]\bottomrule\addlinespace[-0.000ex]

    \end{tabular}
    }
    \vspace{-4pt}
\end{table}

In this section, we provide a more comprehensive backbone ablation that extends the single-agent study in Table~\ref{tab:model_ablation_compact} to also cover \textit{EywaMAS} and \textit{EywaOrchestra}, and reports per-sub-domain results across all nine domains. The full results are shown in Table~\ref{tab:ablation_backend_llm}. We highlight three observations.
 
\textbf{\textit{Eywa} is and consistently compatible and effective across LLM backbones.} Across all three backbones (gpt-4.1-nano, gpt-5-nano, and gpt-5-mini) and all three system settings (single-agent, multi-agent, and dynamic orchestration), \textit{Eywa} delivers strong utility on every sub-domain. For instance, with the relatively weak gpt-4.1-nano backbone, \textit{EywaAgent}, \textit{EywaMAS}, and \textit{EywaOrchestra} reach overall utilities of 0.5680, 0.6236, and 0.6210, respectively, all of which already surpass the single-agent gpt-5-nano baseline reported in Table~\ref{tab:main_comparison_eywabench}. A similar pattern holds when we move to gpt-5-nano, where the relative ranking among single-agent, multi-agent, and dynamic-orchestration variants is preserved. This stability suggests that the gains of \textit{Eywa} come primarily from cross-modality heterogeneity and structured FM--LLM collaboration, rather than from a particular language-model checkpoint.
 
\textbf{\textit{Eywa} benefits from stronger LLM backbones.} As we scale the language model from gpt-4.1-nano to gpt-5-nano, all three settings improve substantially in overall utility: \textit{EywaAgent} improves from 0.5680 to 0.6558 ($+15.5\%$), \textit{EywaMAS} from 0.6236 to 0.6761 ($+8.4\%$), and \textit{EywaOrchestra} from 0.6210 to 0.6746 ($+8.6\%$). The per-domain breakdown shows consistent improvements across physical, life, and social science. This indicates that, although \textit{Eywa} already provides strong baseline performance through its FM--LLM coupling, the gains compound as the underlying language model becomes more capable: stronger LLMs produce more reliable planning, better routing, and more faithful integration of foundation-model outputs into the final answer.
 
\textbf{Diminishing returns from gpt-5-nano to gpt-5-mini suggest a domain-specific bottleneck.} While moving from gpt-4.1-nano to gpt-5-nano yields large utility gains (e.g., $+15.5\%$ for \textit{EywaAgent}), further upgrading the backbone from gpt-5-nano to gpt-5-mini brings only marginal improvements ($0.6558 \rightarrow 0.6640$, i.e., $+1.25\%$ for \textit{EywaAgent}), and on several sub-domains the larger model is in fact slightly worse (e.g., Biology, Clinic, Economy, and Business). This pattern indicates that once the LLM is sufficiently capable of planning, communicating, and routing to the specialized foundation models, additional general-purpose LLM scaling yields diminishing returns. In other words, the remaining headroom in utility is no longer dominated by the LLM's general reasoning ability, but rather by \emph{domain-specific} capability. This observation provides a strong motivation for our \textit{Eywa} to extend agentic systems with domain-specific components.

\subsection{Case Study}
\label{ap:case_study}

We provide representative case studies to qualitatively illustrate how different agentic configurations behave on \textit{EywaBench-V1}. In particular, we focus on how each system parses a text-defined task, handles structured non-linguistic inputs, invokes or fails to invoke specialized predictive capabilities, and realizes the final output under strict formatting constraints.

Case Study A compares a language-only LLM agent with \textit{EywaAgent} on the same task over structured financial signals. As shown in Figure~\ref{fig: case_study_llmagent}, the language-only agent correctly understands the surface-level task interface: it identifies the required output schema, produces the correct number of future timestamps, and returns a dataframe-style response. However, because it only reasons over serialized values, its prediction collapses to a last-value persistence baseline. This illustrates a common failure mode: the LLM can follow instructions and satisfy formatting requirements, but this does not imply that it has performed the underlying domain-specific numerical computation.

Figure~\ref{fig: case_study_eywaagent} shows the corresponding \textit{EywaAgent} behavior on the same task. Instead of treating the serialized sequence as a purely textual pattern, \textit{EywaAgent} uses the LLM to parse the task, configure the model call, and activate the Chronos foundation model through the \textit{Tsaheylu} interface. Chronos then serves as the core predictor, while the LLM verifies the returned forecast in context and formats it into the required dataframe-style response. This case highlights the intended division of labor in \textit{EywaAgent}: the language model provides task understanding and interface control, while the specialized foundation model performs the domain-specific computation.

Case Study B further illustrates the role of \textit{EywaOrchestra}. Unlike \textit{EywaAgent}, which uses a fixed augmented-agent design, \textit{EywaOrchestra} first decides which configuration should be used for the given task. As shown in Figure~\ref{fig:case_study_eywaorchestra}, the planner identifies the input as a Miami house-price prediction problem, formulates it as structured tabular regression, and selects a lightweight single-agent \textit{Eywa} configuration with \texttt{gpt-5-nano} and TabPFN. This example shows that orchestration does not necessarily mean using a more complex multi-agent topology; rather, it means selecting an appropriate configuration for the task. When a specialized tabular foundation model is sufficient, \textit{EywaOrchestra} can avoid unnecessary multi-agent discussion and directly route the task to the most suitable expert.

Overall, these case studies show three complementary points. First, language-only agents may be highly format-compliant while still relying on shallow heuristics for structured numerical tasks. Second, \textit{EywaAgent} makes specialized foundation models usable inside text-defined workflows by bridging natural-language instructions and model-specific inference interfaces. Third, \textit{EywaOrchestra} extends this idea by dynamically selecting the agent setting, foundation model, and execution plan according to the task modality and complexity. Together, these examples provide qualitative evidence for the central motivation of \textit{Eywa} that heterogeneous scientific reasoning requires both language-side task execution and domain-side specialized computation.

\begin{figure}[h]
\centering
\resizebox{\textwidth}{!}{
\begin{tcolorbox}[colback=gray!5!white, colframe=red!60, 
title=Case Study A.1: How LLM Agent Fails a Domain-Specific Problem, boxrule=0.3mm, width=1.2\textwidth, arc=2mm, auto outer arc=true]
\textbf{Core Task.} This case study presents a representative instance from \textit{EywaBench-V1}, centered on a domain-specific numerical prediction task over structured financial signals. The input consists of 50 normalized NASDAQ measurements indexed by consecutive, anonymized timestamps. Although the data are presented in a text-readable format, solving it requires recognizing numerical dynamics in a structured signal and extrapolating future values. To reduce the possibility that language models rely on memorized historical market trends, all timestamps are anonymized and re-indexed during preprocessing. The goal is to estimate the next 10 normalized NASDAQ values.

\vspace{3mm}
\textbf{Agents.}
We compare a language-only LLM agent with \textit{EywaAgent}. The LLM agent performs the task solely through language-based reasoning with gpt-5-nano backbone. In contrast, \textit{EywaAgent} connects a Chronos time-series foundation model with gpt-5-nano through the \textit{Tsaheylu} interface.

\vspace{3mm}
\hrule 
\vspace{3mm}
\textbf{Prompt:} 
\begin{quote}
You are given a normalized univariate financial time series and are asked to forecast the next 10 values. The timestamps have been anonymized and re-indexed to avoid relying on memorized historical market trends.

The input is a structured financial signal with two columns: \texttt{timestamp} and \texttt{value}. It contains 50 consecutive normalized NASDAQ observations with anonymized timestamps. Your task is to infer the values for the next 10 timestamps.
Return the prediction as a pandas-dataframe-style string with exactly two columns: \texttt{timestamp,value}. Do not include explanations, comments, or intermediate reasoning.

\textbf{Input time series:}

\texttt{timestamp,value}\\
\texttt{\ldots}\\

\textbf{Forecast horizon:} 10 timestamps.

\textbf{Required output format:} \texttt{\ldots}\\

\end{quote}

\hrule
\vspace{3mm}

\textbf{LLM Agent Response Trajectory:}

\begin{enumerate}
    \item \textbf{Task-interface parsing.}
    The prompt is interpreted as a domain-specific numerical prediction task with a strict output interface. The input is presented as a dataframe-style sequence, and the answer must be returned in the same dataframe-style format. This creates two immediate constraints: the response should contain exactly 10 future timestamps and only the columns \texttt{timestamp} and \texttt{value}, without any additional explanation.

    \item \textbf{Language-only inference.}
    The agent then inspects the serialized numerical observations. Since it has no access to a specialized forecasting model, it can only infer future values from the surface pattern of the provided sequence. To avoid introducing unsupported oscillations or trends, it chooses a conservative continuation strategy.

    \item \textbf{Persistence-style output generation.}
    Following this strategy, the agent extends the timestamp index by 10 steps and assigns each future timestamp the last observed value. The resulting response is format-compliant, but the prediction itself reduces to a last-value persistence forecast rather than a domain-specific numerical prediction.
\end{enumerate}

\begin{quote}
\small
\texttt{timestamp,value}\\
\textbf{\textit{(The agent keeps the last observed value as the prediction for all future timestamps)}}\\
\texttt{\ldots}
\end{quote}

\textbf{Evaluation on LLM Agent Output:}
The response receives a utility score of $0.605$. The agent correctly follows the required output schema and produces the correct number of future predictions. However, because the language-only agent has no specialized numerical forecasting capability, its final answer reduces to a last-value persistence baseline, limiting its predictive utility.

\end{tcolorbox}}
\caption{
Case study of a language-only LLM agent on a representative \textit{EywaBench-V1} instance. Although the agent correctly parses the task interface and produces a format-compliant response with the required forecast horizon, its prediction collapses to a last-value persistence baseline. This illustrates the limitation of language-only reasoning on structured domain-specific numerical tasks and motivates the need for \textit{EywaAgent} to invoke specialized foundation models through the \textit{Tsaheylu} interface.
}
\label{fig: case_study_llmagent}
\end{figure}

\begin{figure}[h]
\centering
\resizebox{\textwidth}{!}{
\begin{tcolorbox}[colback=gray!5!white, colframe=orange!60, 
title=Case Study A.2: How EywaAgent Solves a Domain-Specific Problem with Specialized Foundation Model, boxrule=0.3mm, width=1.2\textwidth, arc=2mm, auto outer arc=true]
\textbf{Core task and prompts are the same as previous A.1 case study.} 

\vspace{3mm}
\textbf{Agents.}
We compare a language-only LLM agent with \textit{EywaAgent}. The LLM agent performs the task solely through language-based reasoning with gpt-5-nano backbone. In contrast, \textit{EywaAgent} connects a Chronos time-series foundation model with gpt-5-nano through the \textit{Tsaheylu} interface.

\vspace{3mm}
\hrule
\vspace{3mm}

\textbf{Why Not Directly Use the Foundation Model?}
The Chronos foundation model is the main predictive component in this case, but it cannot independently execute the full text-defined task. It does not parse natural-language instructions, interpret the required dataframe-style output schema, decide which part of the prompt corresponds to the numerical input, or format the answer according to the user's textual constraints. Directly applying Chronos would require manually extracting the numerical sequence, specifying the forecast horizon, running model inference, and post-processing the prediction into the requested output format. After our extension, \textit{EywaAgent} automates this missing interface.

\vspace{3mm}
\hrule
\vspace{3mm}

\textbf{EywaAgent Response Trajectory.}
\begin{enumerate}
    \item \textbf{Task-interface parsing by the LLM.} 
    The \texttt{gpt-5-nano} backbone first parses the user prompt and identifies that the task is not a general language reasoning problem, but a domain-specific numerical prediction problem over a structured financial signal. It extracts the input schema, the forecast horizon, and the required output format.

    \item \textbf{Foundation-model activation through \textit{Tsaheylu}.}
    Instead of extrapolating directly from the serialized values, \textit{EywaAgent} activates the connected Chronos foundation model through the \textit{Tsaheylu} interface. The interface converts the dataframe-style input into the numerical sequence format expected by Chronos, configures the forecast horizon, and delegates the core prediction step to the specialized foundation model.

    \item \textbf{Specialized prediction by the foundation model:}
    Chronos performs the main predictive computation over the structured numerical signal. Unlike the language-only agent, which falls back to a persistence-style heuristic, the foundation model uses its pretrained time-series inductive bias to generate a nontrivial forecast for the next 10 steps.

    \begin{quote}
    \small
    \texttt{timestamp,value}\\
    \textit{(Chronos-generated predictions are aligned with the next 10 timestamps)}\\
    \texttt{\ldots}
    \end{quote}

    \item \textbf{Context-aware verification and output realization by the LLM.}
    The Chronos predictions are returned to \textit{EywaAgent}, where the LLM performs a context-aware check beyond simple format validation. It verifies the forecast horizon, timestamp alignment, numerical scale, and consistency with the prompt constraints, making adjustments when needed. The final prediction is then formatted as the required pandas-dataframe-style response. Thus, Chronos serves as the core predictor, while the LLM handles task interpretation, model-call configuration, contextual verification, and final answer realization.
\end{enumerate}

\textbf{Evaluation on EywaAgent Output:}
The response obtains a higher utility score of $0.701$ because it preserves the required output format while using a specialized foundation model for the core predictive computation, rather than relying on a language-level persistence heuristic.

\end{tcolorbox}}
\caption{
Case study of \textit{EywaAgent} on the same \textit{EywaBench-V1} instance as Case Study A.1. Unlike the language-only agent, \textit{EywaAgent} enables the Chronos foundation model to serve as the core predictor while using the LLM to parse the text-defined task, configure the model call through the \textit{Tsaheylu} interface, verify the returned prediction in context, and realize the final dataframe-style response. The improved utility score shows the benefit of combining specialized foundation-model computation with language-side task execution.
}
\label{fig: case_study_eywaagent}
\end{figure}

\begin{figure}[h]
\centering
\resizebox{\textwidth}{!}{
\begin{tcolorbox}[colback=gray!5!white, colframe=pink!90, 
title=Case Study B.1: EywaOrchestra Example 1, boxrule=0.3mm, width=1.2\textwidth, arc=2mm, auto outer arc=true]

\textbf{Core Task.}
This case study presents a representative \textit{EywaBench-V1} instance on Miami house price. The input is a structured real-estate table containing property attributes. The target column contains masked entries, and the system is asked to predict these missing sale prices and return them as a numpy-array-style string.

\vspace{3mm}

\textbf{Prompt:} Please refer to Apendix \ref{app:prompts} for details.

\vspace{3mm}
\hrule
\vspace{3mm}

\textbf{Planner Orchestration Stage.}
\begin{enumerate}
    \item \textbf{Task diagnosis.}
    The planner parses the text-defined task and identifies it as a Miami house-price prediction problem, formulated as a structured tabular regression task. It extracts the input modality, target column, masked query rows, output size, and required numpy-array-style response format.

    \item \textbf{Configuration selection.}
    Based on the detected tabular structure, the planner decides that a full multi-agent discussion is unnecessary. Instead, it selects a lightweight single-agent \textit{Eywa} configuration with \texttt{gpt-5-nano} as the language backbone and TabPFN as the specialized tabular foundation model:
    \[
    \{\texttt{eywa}: \texttt{true},\ 
    \texttt{setting}: \texttt{single-agent},\ 
    \texttt{model}: \texttt{gpt-5-nano},\ 
    \texttt{foundation\_model}: \texttt{TabPFN}\}.
    \]

    \item \textbf{Execution plan.}
    The planner decomposes the problem into three executable steps: extract observed training rows and masked query rows, invoke TabPFN to estimate the missing values, and format the resulting predictions as the required numpy array.
\end{enumerate}

\textbf{Execution Stage.}
\begin{enumerate}
    \item \textbf{Structured input extraction.}
    The execution agent converts the serialized table into structured tabular data, separates rows with observed target values from rows with masked targets, and identifies the features used for prediction.

    \item \textbf{Foundation-model inference.}
    The selected tabular foundation model performs the core regression step. It predicts each missing value based on the corresponding row features.

    \item \textbf{Context-aware verification.}
    The LLM checks whether the returned predictions match the requested output size, whether the values are plausible under the scale of the observed column, and whether the prediction array respects the order of the masked rows.

    \item \textbf{Final response realization.}
    After verification, \textit{EywaOrchestra} returns only the predicted labels in the required numpy-array-style format, satisfying the text-defined output constraint while relying on the tabular foundation model for the core numerical computation.
\end{enumerate}

\begin{quote}
\small
\texttt{Output: [}\textit{tabular-model prediction for row 1}\texttt{, }
\textit{tabular-model prediction for row 2}\texttt{, }
\texttt{\ldots}\texttt{]}
\end{quote}

\textbf{Evaluation on EywaOrchestra Output:}
The response receives a utility score of \(0.853\). \textit{EywaOrchestra} correctly identifies the task as Miami house-price prediction and selects a single-agent \textit{Eywa} configuration with \texttt{gpt-5-nano} and TabPFN. The resulting prediction preserves the required numpy-array-style output format while using TabPFN for the core tabular regression computation.

\end{tcolorbox}}
\caption{
Case study of \textit{EywaOrchestra} on a representative \textit{EywaBench-V1} instance. The planner first diagnoses the task modality, selects a tabular predictive expert, and determines an execution topology. The execution stage then extracts structured inputs, invokes the selected foundation model for the core regression computation, and uses the LLM for contextual verification and final numpy-array-style response realization.
}
\label{fig:case_study_eywaorchestra}
\end{figure}

\subsection{Detailed Utility and Token Consumption}

\begin{figure*}[t]
  \centering
  % Shared legend on top
  \includegraphics[width=0.98\linewidth]{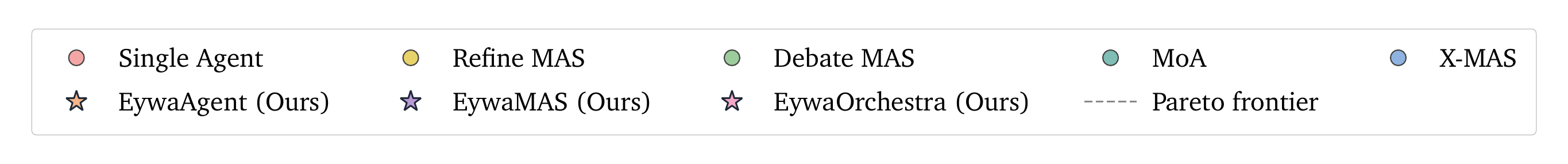}\\[4pt]

  \begin{subfigure}[t]{0.32\linewidth}
    \centering
    \includegraphics[width=\linewidth]{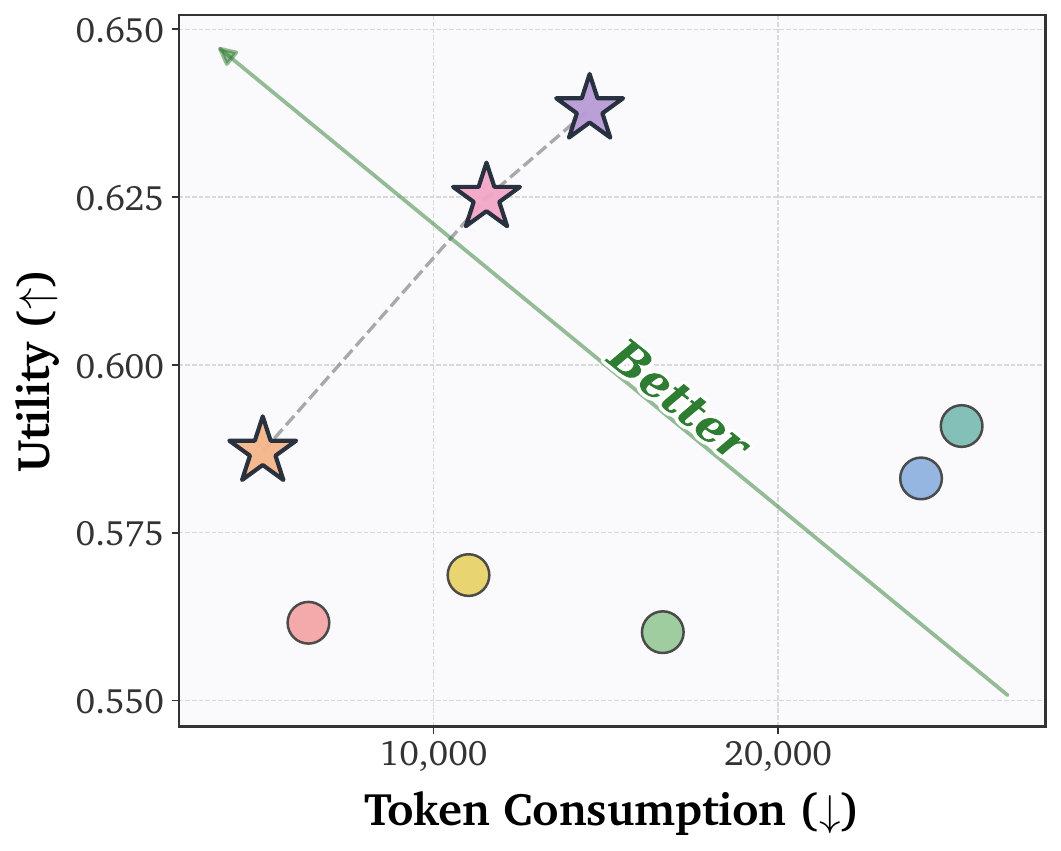}
    \caption{Material}
  \end{subfigure}\hfill
  \begin{subfigure}[t]{0.32\linewidth}
    \centering
    \includegraphics[width=\linewidth]{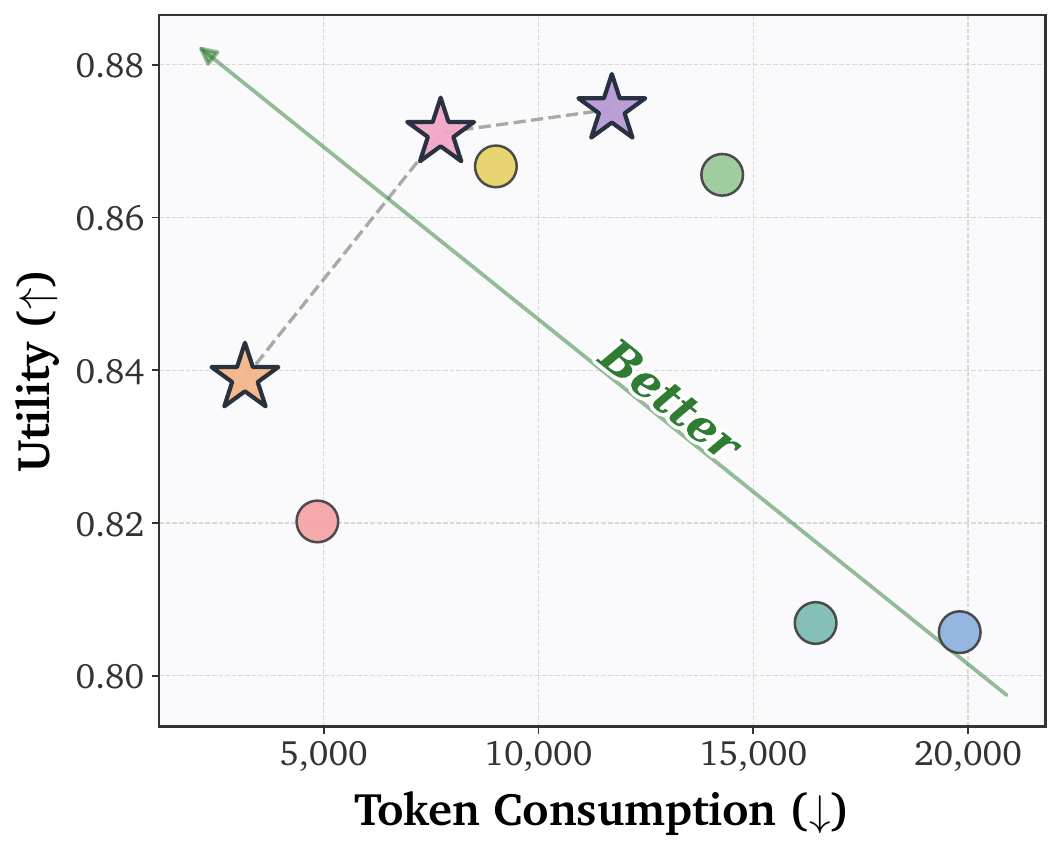}
    \caption{Energy}
  \end{subfigure}\hfill
  \begin{subfigure}[t]{0.32\linewidth}
    \centering
    \includegraphics[width=\linewidth]{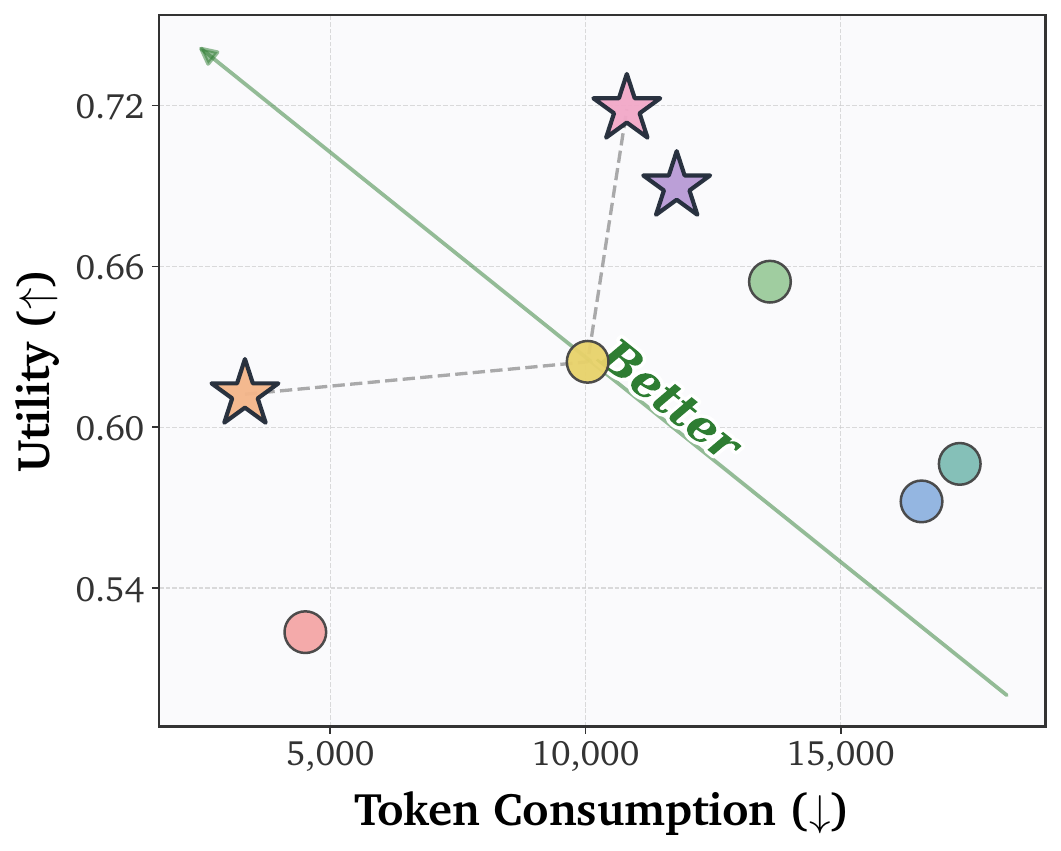}
    \caption{Space}
  \end{subfigure}\\[4pt]

  \begin{subfigure}[t]{0.32\linewidth}
    \centering
    \includegraphics[width=\linewidth]{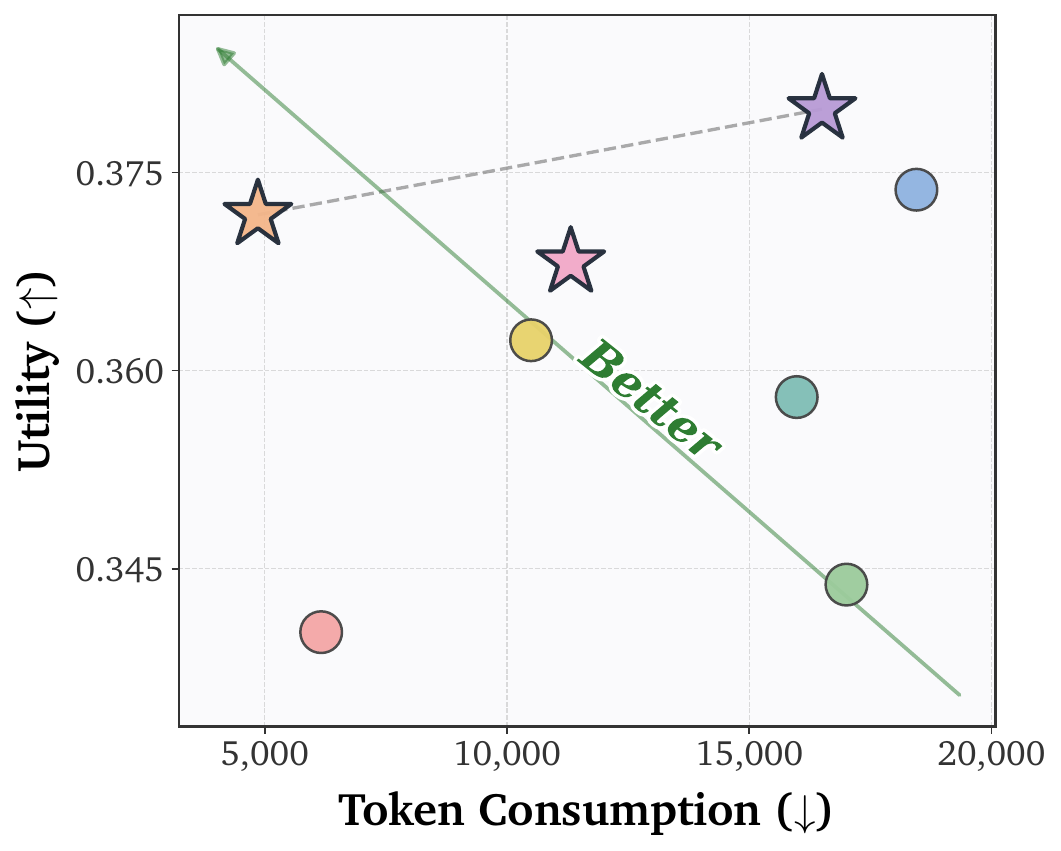}
    \caption{Biology}
  \end{subfigure}\hfill
  \begin{subfigure}[t]{0.32\linewidth}
    \centering
    \includegraphics[width=\linewidth]{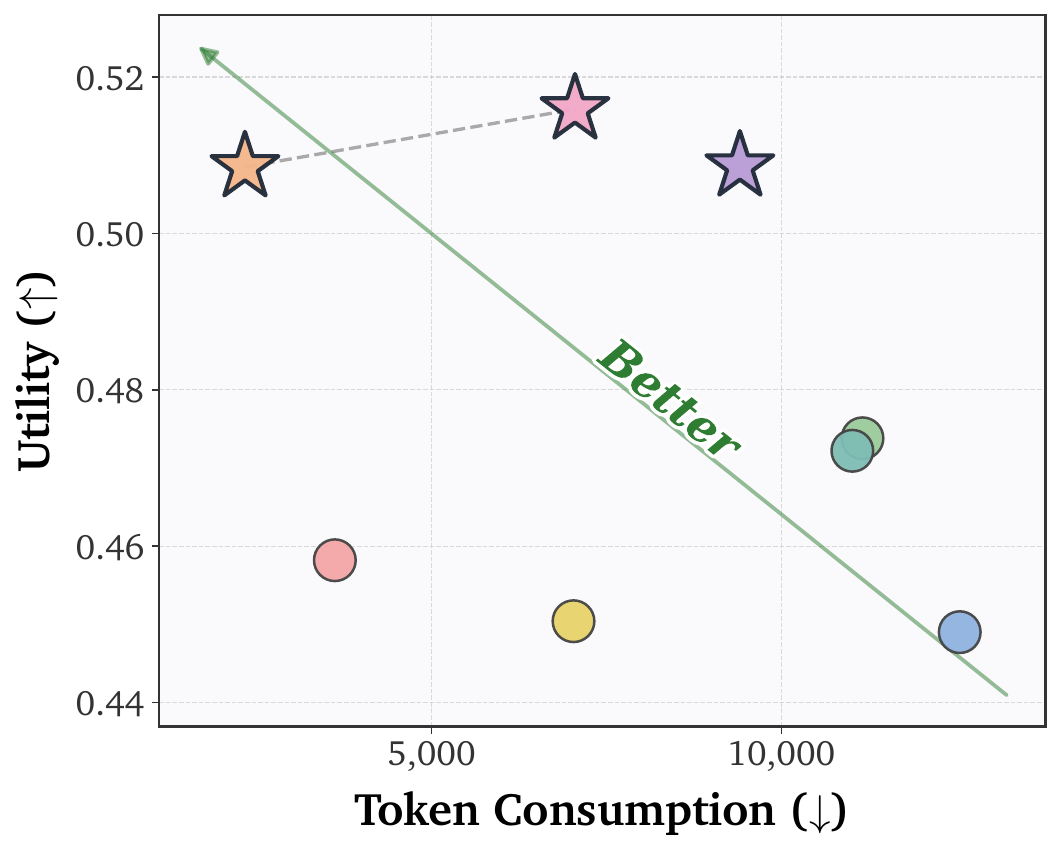}
    \caption{Clinic}
  \end{subfigure}\hfill
  \begin{subfigure}[t]{0.32\linewidth}
    \centering
    \includegraphics[width=\linewidth]{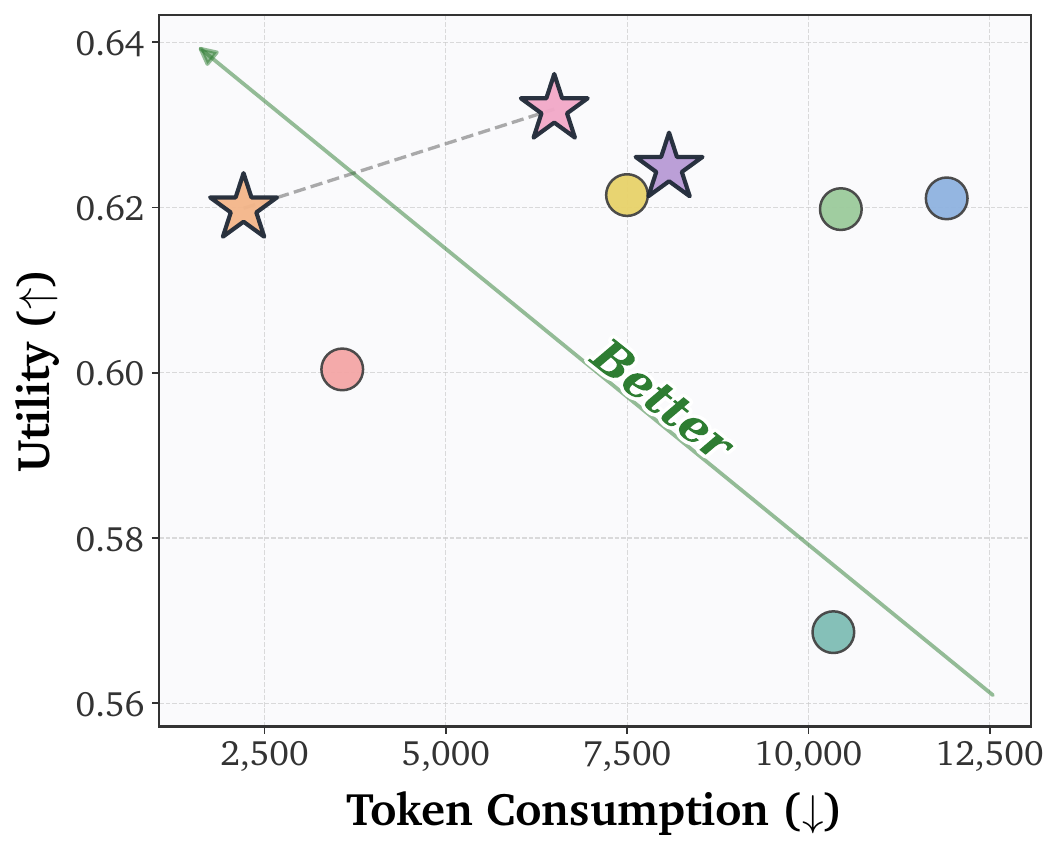}
    \caption{Drug}
  \end{subfigure}\\[4pt]

  \begin{subfigure}[t]{0.32\linewidth}
    \centering
    \includegraphics[width=\linewidth]{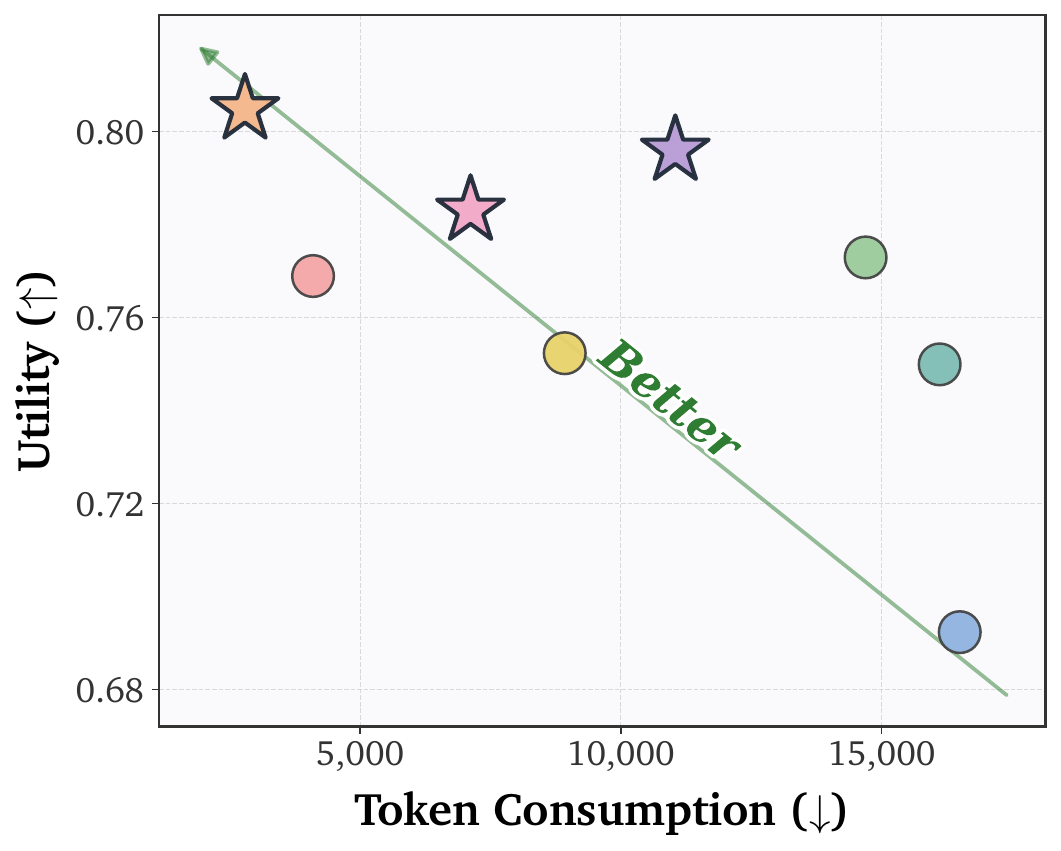}
    \caption{Economy}
  \end{subfigure}\hfill
  \begin{subfigure}[t]{0.32\linewidth}
    \centering
    \includegraphics[width=\linewidth]{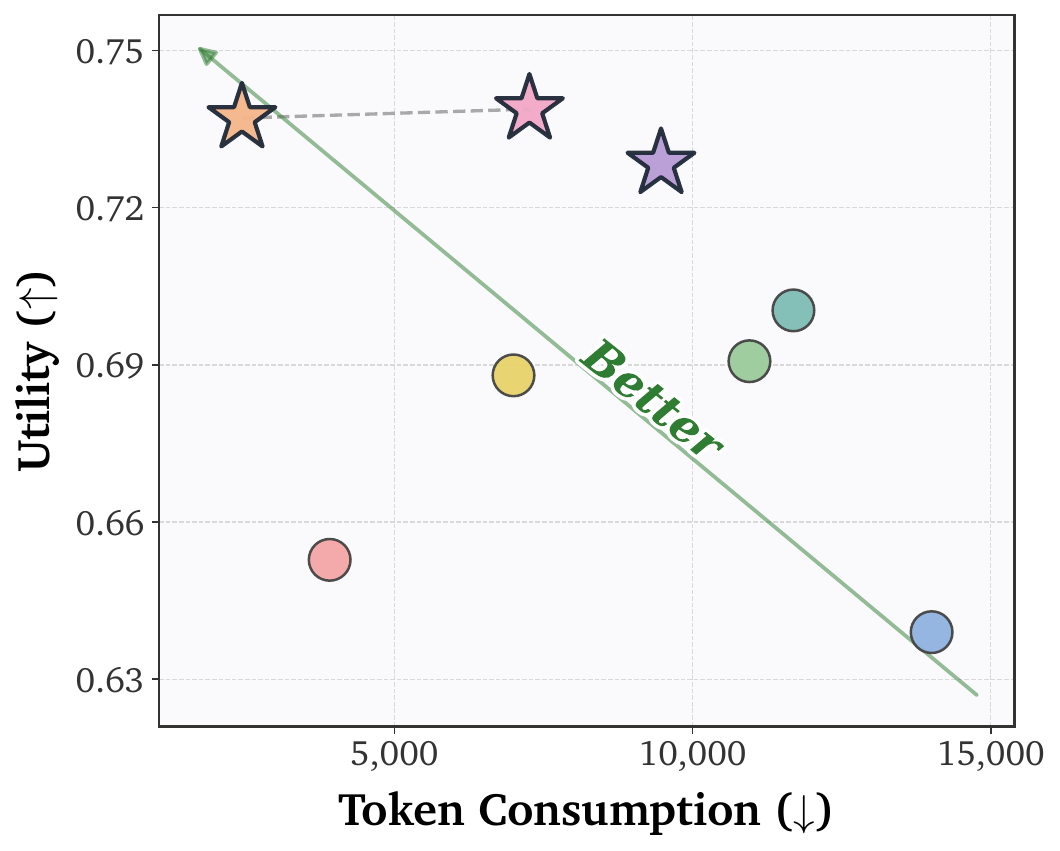}
    \caption{Business}
  \end{subfigure}\hfill
  \begin{subfigure}[t]{0.32\linewidth}
    \centering
    \includegraphics[width=\linewidth]{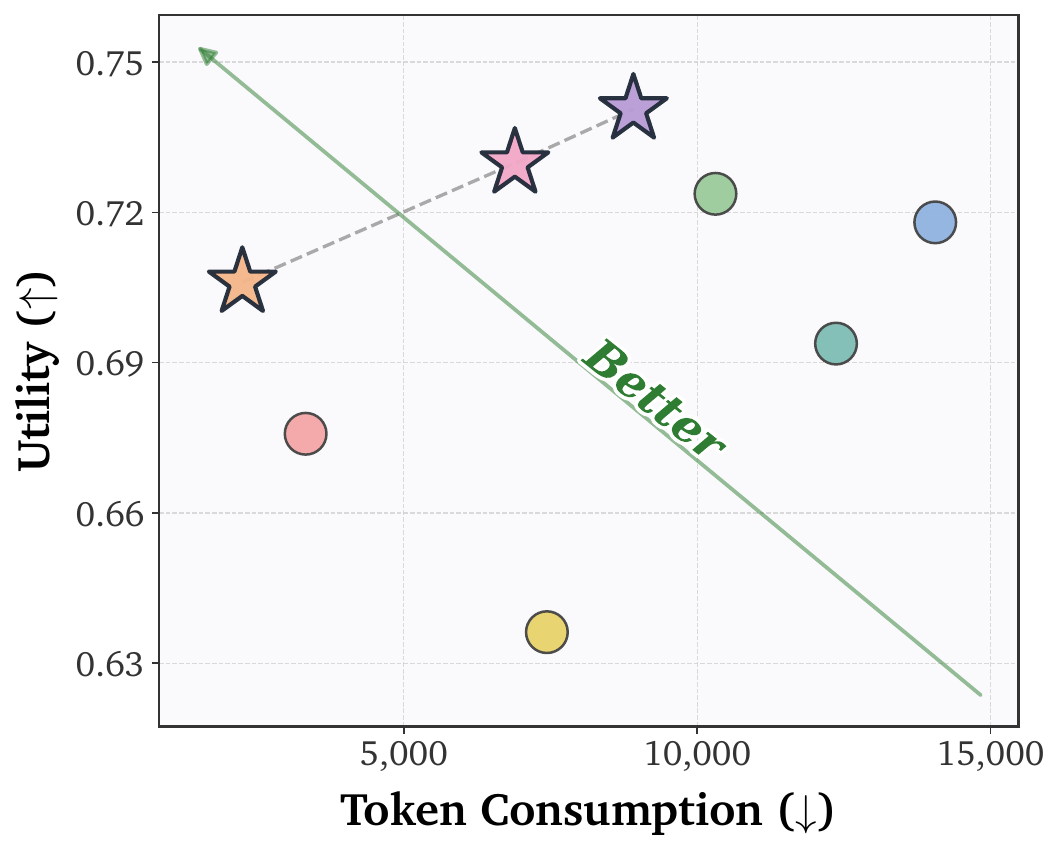}
    \caption{Infrastructure}
  \end{subfigure}

\caption{Utility vs.\ token consumption across nine scientific domains on \textit{EywaBench}. In every panel, the \textcolor[HTML]{2E7D32}{\textbf{green arrow}} indicates the preferred direction of the trade-off, pointing toward higher utility with fewer tokens (i.e., the upper-left of each plot); methods closer to the arrow head are more desirable. The grey dashed line traces the \emph{Pareto frontier}, connecting methods that are not dominated by any other method along this trade-off. Across all nine domains, our methods (EywaAgent, EywaMAS, and EywaOrchestra) consistently sit on or close to the Pareto frontier, achieving the strongest utility while spending substantially fewer tokens than competing multi-agent baselines.}
\label{fig:tradeoff_per_domain}
\vspace{-3mm}
\end{figure*}

To complement the overall utility-vs.-token plot in Figure~\ref{fig:tradeoff_per_domain} of the main text, we present the per-sub-domain trade-off in Figure~\ref{fig:tradeoff_per_domain}, covering all nine sub-domains across physical, life, and social science. Each panel places competing methods in a 2D space defined by token consumption (x-axis) and utility (y-axis), with the upper-left corner being the most desirable. We highlight three observations.
 
\textbf{All three \textit{Eywa} variants sit on or close to the Pareto frontier in every sub-domain.} Across all nine panels, \textit{EywaAgent}, \textit{EywaMAS}, and \textit{EywaOrchestra} are consistently among the points that define the Pareto frontier, while homogeneous and heterogeneous LLM-only multi-agent baselines (\textit{Refine}, \textit{Debate}, \textit{MoA}, \textit{X-MAS}) are typically dominated. For example, on Material, \textit{Debate}, \textit{MoA}, and \textit{X-MAS} all spend $14{,}000$--$25{,}000$ tokens while achieving lower utility than \textit{EywaMAS}; a similar pattern is seen on Energy, Biology, and Drug. This indicates that the \textit{Eywa} family does not trade tokens for utility. Instead, it pushes the Pareto frontier outward by routing computation to specialized foundation models rather than spending it on additional LLM exchanges.
 
\textbf{\textit{EywaAgent} is the most token-efficient point on the frontier.} Across the nine sub-domains, \textit{EywaAgent} uses on average $3{,}137$ tokens per task, compared with $4{,}469$ for the single-agent gpt-5-nano baseline and $8{,}673$--$16{,}537$ for the various MAS baselines. As a result, \textit{EywaAgent} typically anchors the lower-left region of the frontier: it delivers utility comparable to or above LLM-only MAS baselines while consuming a small fraction of their tokens. This is consistent with our claim that, when the foundation model already encodes the necessary domain prior, an additional debate or refinement loop among LLMs is largely redundant.
 
\textbf{\textit{EywaMAS} and \textit{EywaOrchestra} occupy complementary points on the frontier.} \textit{EywaMAS} pushes utility further by adding cross-modality multi-agent coordination, attaining the highest utility on most domains, at a moderate token cost that remains below all LLM-only MAS baselines. \textit{EywaOrchestra}, in contrast, dynamically selects the topology and the foundation model on a per-sample basis, which lets it reach utility close to or matching \textit{EywaMAS} while reducing average token consumption from $11{,}214$ to $8{,}335$ ($-26\%$). Moreover, \textit{EywaOrchestra} does not rely on expert configuration of the MAS. This positions \textit{EywaOrchestra} at the upper-left corner of the frontier, providing a favorable point in the trade-off when both quality and budget matter, and confirming that adaptive orchestration is an effective lever for navigating the utility-cost trade-off across heterogeneous scientific tasks.

\section{Prompt Templates}
\label{app:prompts}

This section provides the prompt templates used in \textit{EywaBench-V1} and \textit{EywaOrchestra}. 
% We separate the prompts into two categories. The first is a general task-execution prompt, which presents each benchmark instance to an agent. The second is an orchestration-planner prompt, which is used by \textit{EywaOrchestra} to select an execution configuration before solving the task. This separation decouples \emph{what problem should be solved} from \emph{how the problem should be routed and executed}.

\paragraph{General task-execution prompt.}
For all tasks in \textit{EywaBench-V1}, we adopt a unified task-execution template. The template specifies the task role, optional MCP-based model/tool context, structured input field, expected output size, and task-specific response format. This design provides a consistent interface across heterogeneous task modalities, while still allowing modality-specific information to be injected through specialized input tags and additional instructions. This prompt is illustrated in Prompt \ref{fig:prompt_template_general}.

\paragraph{Planner prompt for \textit{EywaOrchestra}.}
Unlike fixed agent baselines, \textit{EywaOrchestra} first invokes a planner to determine how a task should be executed. The planner receives a task description, domain, and task type, and returns a structured JSON configuration. This configuration specifies whether to use single-agent or multi-agent execution, whether to enable an \textit{Eywa}-augmented agent, which foundation model should be invoked when applicable, which multi-agent topology should be used, and how the participating agents should be instantiated.

The planner prompt formulates orchestration as a constrained configuration-generation problem rather than a direct task-solving problem. The hard JSON constraint ensures that planner outputs can be parsed automatically and passed to the downstream execution engine without manual intervention. Meanwhile, the explicit fields for \texttt{setting}, \texttt{multi\_agent\_type}, \texttt{foundation\_model}, and \texttt{agents} make the orchestration decision transparent and auditable.

% This design enables \textit{EywaOrchestra} to adapt across heterogeneous tasks. For language-centric reasoning problems, the planner can choose language-only execution or multi-agent collaboration. For structured prediction problems, the planner can route the task to an \textit{Eywa}-augmented agent equipped with an appropriate domain-specific foundation model. Thus, the planner prompt serves as the interface between high-level task diagnosis and executable heterogeneous-agent composition.

\begin{figure}[h]
\centering
\resizebox{\textwidth}{!}{
\begin{tcolorbox}[
colback=gray!5!white,
colframe=blue!70,
title=Prompt Template B.1: General Prompt Structure,
boxrule=0.3mm,
width=1.2\textwidth,
arc=2mm,
auto outer arc=true
]

\textbf{General Prompt.} For all tasks in \textit{EywaBench-V1}, we adopt a unified prompt template that specifies the task role, optional tool/model context, structured input field, expected output size, and task-specific output format. This design allows different task types to share the same high-level prompting interface while still preserving modality-specific instructions through specialized input tags and output constraints.

\begin{quote}
You are an expert in \texttt{\{task\}}. 
\texttt{\{mcp\_server\_description\}}. 
\texttt{\{additional\_instructions\}}

\medskip

\texttt{<\{input\_tag\}>}

\texttt{\{input\_data\}}

\texttt{</\{input\_tag\}>}

\medskip

\texttt{<output\_size>}

\texttt{\{output\_size\}}

\texttt{</output\_size>}

\medskip

\texttt{\{output\_format\}}
\end{quote}

\textbf{Template fields.}
The placeholder \texttt{\{task\}} specifies the task type.
The field \texttt{\{mcp\_server\_description\}} provides the available model/tool context when an external foundation model is accessible through the MCP server. 
The field \texttt{\{additional\_instructions\}} contains task-specific guidance, such as explaining masked targets in tabular prediction tasks. 
The pair of tags \texttt{<\{input\_tag\}>} and \texttt{</\{input\_tag\}>} explicitly marks the input. 
Finally, \texttt{\{output\_size\}} and \texttt{\{output\_format\}} define the expected output length and response schema.

\end{tcolorbox}}
\caption{
General prompt template used in \textit{EywaBench-V1}. The template separates task identity, model/tool context, modality-specific input, output size, and response format, enabling a unified prompting interface across heterogeneous scientific tasks.
}
\label{fig:prompt_template_general}
\end{figure}

\begin{figure}[h]
\centering
\begin{tcolorbox}[
colback=gray!5!white,
colframe=blue!70,
title=Prompt Template B.2: Planner Prompt for \textit{EywaOrchestra},
boxrule=0.3mm,
width=\textwidth,
arc=2mm,
auto outer arc=true
]

% \textbf{Planner Prompt Template.}

\footnotesize
\begin{verbatim}
You are an orchestration planner for the Eywa Agentic System.

Your job is to choose an execution configuration for a single task.

Available LLM models and descriptions:
- ......

Available foundation models and descriptions:
- ......

Supported multi-agent topology pool:
- ......

Input task:
- Task Description: {task_description}
- Domain: {domain}
- Task Type: {task_type}

Hard constraints:
- Output must be valid JSON only (no markdown, no code fence, no extra text).
- ......
- If "setting" is "single-agent":
  - "model" must be a valid model string.
  - "multi_agent_type" must be null.
  - "foundation_model" should be in the available foundation models or null.
  - "agents" must be an empty list [].
- If "setting" is "multi-agent":
  - "model" must be null.
  - "multi_agent_type" must be in the topology pool.
  - "foundation_model" should be in the available foundation models or null.
  - "agents" must be a non-empty list of valid agent specs.

Output format:
{
  "eywa": true or false,
  "setting": "single-agent" or "multi-agent",
  "model": <llm_model> or null,
  "multi_agent_type": <multi_agent_topology> or null,
  "foundation_model": <foundation_model> or null,
  "agents": [<agent_spec_1>, <agent_spec_2>, ...] or []
}
\end{verbatim}

\end{tcolorbox}
\caption{
Planner prompt template used by \textit{EywaOrchestra}. Given the task description, domain, and task type, the planner outputs a structured JSON configuration that specifies whether to use single-agent or multi-agent execution, whether to enable \textit{Eywa}, which foundation model to invoke, and how to instantiate the participating agents.
}
\label{fig:prompt_template_planner}
\end{figure}

\end{document}